%% file: icml2026.tex
\documentclass{article}

\usepackage{microtype}
\usepackage{graphicx}
\usepackage{subfigure}
\usepackage{subfiles}
\usepackage{subcaption}
\usepackage{booktabs} 
\usepackage{multirow}
\usepackage{subcaption}
\usepackage[dvipsnames]{xcolor}

\usepackage{hyperref}
\usepackage{stfloats} 

\usepackage[accepted]{icml2026}

\usepackage{amsmath}
\usepackage{amssymb}
\usepackage{mathtools}
\usepackage{amsthm}
\usepackage{bm}
\usepackage{enumitem}

\usepackage[capitalize,noabbrev]{cleveref}

\theoremstyle{plain}
\newtheorem{theorem}{Theorem}[section]
\newtheorem{proposition}[theorem]{Proposition}

\theoremstyle{definition}
\newtheorem{definition}[theorem]{Definition}

\theoremstyle{remark}

\usepackage[disable,textsize=tiny]{todonotes}

\input{symbols}
\icmltitlerunning{Universal Redundancies in TSFMs}

\usepackage{color-edits} 
\addauthor{ab}{red}
\addauthor{hv}{blue}
\addauthor{jl}{magenta}
\addauthor{wg}{green}

\begin{document}

\twocolumn[
\icmltitle{Universal Redundancies in Time Series Foundation Models}



\icmlsetsymbol{equal}{*}

\begin{icmlauthorlist}
\icmlauthor{Anthony Bao}{equal,xxx}
\icmlauthor{Venkata Hasith Vattikuti}{equal,yyy}
\icmlauthor{Jeffrey Lai}{zzz}
\icmlauthor{William Gilpin}{yyy}
\end{icmlauthorlist}

\icmlaffiliation{xxx}{ECE Department, UT Austin}
\icmlaffiliation{yyy}{Department of Physics, UT Austin}
\icmlaffiliation{zzz}{Oden Institute, UT Austin}


\icmlcorrespondingauthor{William Gilpin}{wgilpin@utexas.edu}

\icmlkeywords{Machine Learning, ICML}

\vskip 0.3in
]



\printAffiliationsAndNotice{\icmlEqualContribution} 

\begin{abstract}
Time Series Foundation Models (TSFMs) leverage extensive pretraining to accurately predict unseen time series during inference, without the need for task-specific fine-tuning. Through large-scale evaluations of standard benchmarks, we find that leading transformer-based TSFMs exhibit redundant components in their intermediate layers. We introduce a set of tools for mechanistic interpretability of TSFMs, including ablations of specific components and direct logit attribution on the residual stream. Our findings are consistent across several leading TSFMs with diverse architectures and across a diverse set of real-world and synthetic time-series datasets. We discover that all models in our study are robust to ablations of \textit{entire layers}. Furthermore, we develop a theoretical framework framing transformers as kernel regressors, motivating a purely \textit{intrinsic} strategy for ablating heads based on the stable rank of the per-head projection matrices. Using this approach, we uncover the specific heads responsible for degenerate phenomena widely observed in TSFMs, such as parroting of motifs from the context and seasonality bias. Our study sheds light on the universal properties of this emerging class of architectures for continuous-time sequence modeling.
\end{abstract}

\section{Introduction}

Following the broad success of Large Language Models (LLMs) as foundation models for Natural Language Processing (NLP), recent work has introduced foundation models for time-series forecasting \cite{pmlr-v235-goswami24a, garza2024timegpt1, das2024decoder, ansari2024chronos, woo2024moirai, liu2024moiraimoe, cohen2025timedifferentobservabilityperspective, shi2025timemoe, ansari2025chronos2univariateuniversalforecasting,lai2025panda}. Unlike \textit{local} forecast models, which must be trained on extensive historical data from a specific task, Time Series Foundation Models (TSFMs) are \textit{global models}, which produce zero-shot forecasts of Time Series unseen during training. On many data-limited tasks, TSFMs surpass local models including both classical statistical approaches (e.g. ARIMA, Exponential Smoothing) and fully-trained architectures that separately fit a set of parameters for each time-series in the target dataset \cite{aksu2024gifteval, chuck2025musedfm}. Notable architectures for \textit{local models} include N-BEATS \cite{Oreshkin2020N-BEATS:}, N-HITS \cite{challu_2023_nhits}, TFT \cite{lim2020temporalfusiontransformersinterpretable}, and PatchTST \cite{nie2023a}. In adapting the transformer architecture to time-series, TSFMs have evolved into a different class of models, with a unified set of design choices and common failure modes different from LLMs (Section \ref{subsection:design_space_failure_modes}). LLMs model sequences of discrete symbolic tokens drawn from a finite vocabulary, with explicit and compositional semantics. In contrast, TSFMs model continuous-valued temporal processes whose implicit semantics are scale-dependent, coupled to physical time, and defined by the underlying dynamics. As a consequence, TSFMs encode strong inductive biases via architectural and training choices, which we describe in Section \ref{section:tsfms}. 

Several works have investigated the direct application of LLMs to time-series forecasting. Through encoding time-series as strings of digits, \cite{gruver2023large} demonstrated zero-shot extrapolation by GPT-3 and LLaMA-2. Subsequent works \cite{zhou2023one, jin2024timellm, chang2025_llm4ts} showed promising results from finetuning pretrained LLMs on time-series datasets. Currently, TSFMs such as the Chronos family of models \cite{ansari2024chronos, ansari2025chronos2univariateuniversalforecasting}, TimesFM 2.5 \cite{das2024decoder}, Toto \cite{cohen2025timedifferentobservabilityperspective}, and Moirai \cite{woo2024moirai} represent the state-of-the-art for zero-shot time-series forecasting \cite{aksu2024gifteval}. Current directions include in-context fine-tuning \cite{faw2025incontext}, multivariate adaptation \cite{benechehab2025adapts}, and interpretability \cite{yu2025understandingimplicitbiasesdesign, yu2025understandingtransformerstimeseries, cao2025conversationaltimeseriesfoundation}. 

However, studies on interpretability and compressibility of TSFMs remain an underexplored area. With the success of TSFMs and the practical benefits of smaller models, a natural question arises: 

\textit{What mechanisms enable TSFMs to construct effective zero-shot forecasts?}

\begin{figure*}[!t]
    \centering
    \includegraphics[width=0.86\linewidth]{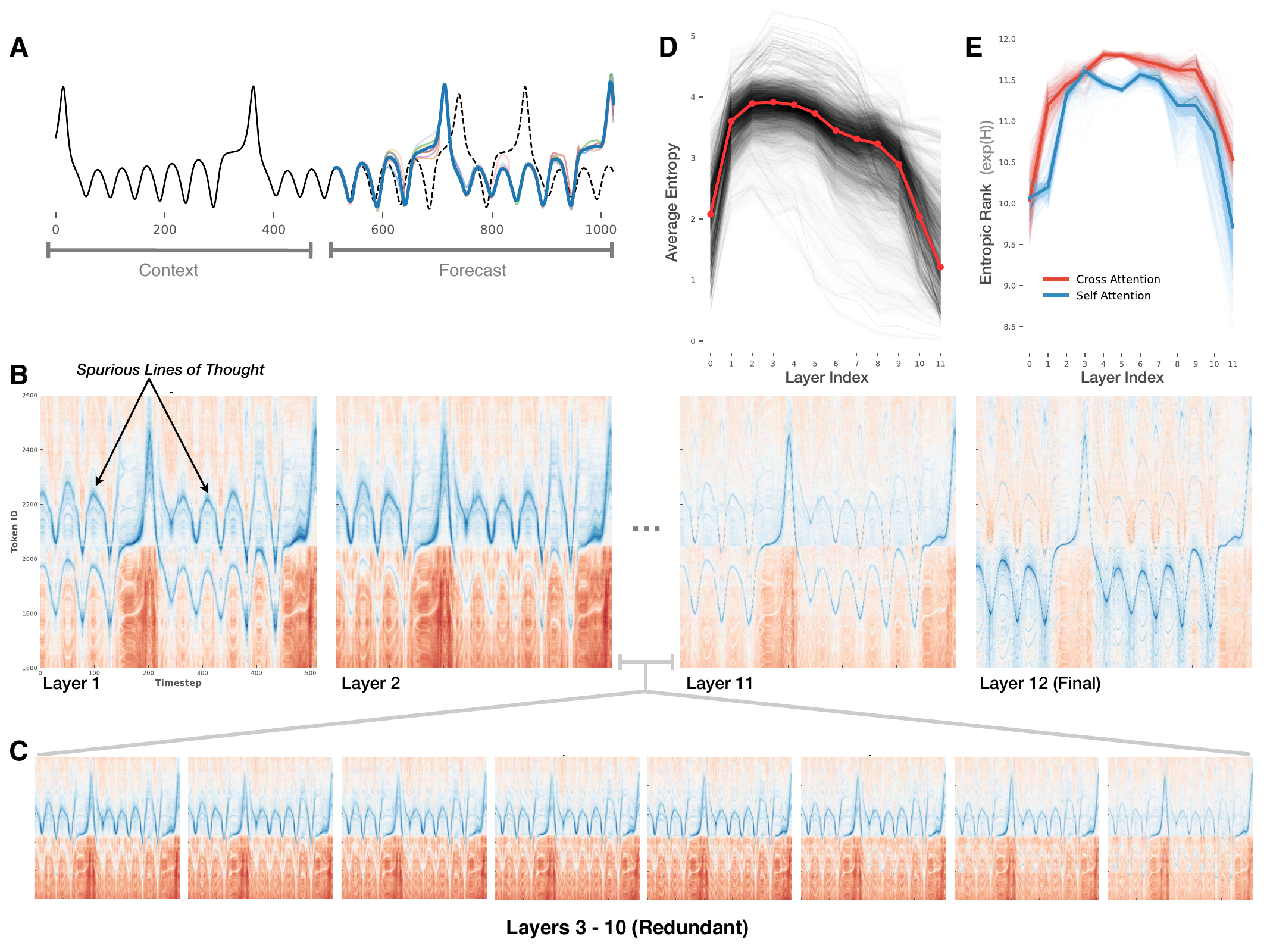}
    \caption{\textbf{Middle layers are highly redundant in their contribution to the residual stream:} For an example forecasting task \textbf{(A)}, we investigate the logit maps produced by direct logit attribution (DLA) on the residual stream \textbf{(B)} after each layer in the \textit{Chronos} decoder. As seen in \textbf{(C)}, the middle layers qualitatively perform very similar updates. We quantify this observation \textbf{(D)} by showing that these middle layers introduce \textit{uncertainty}, measured as an increase in the entropy of the resulting distribution over tokens. Specifically, we compute $\frac{1}{T}\sum_{t=0}^{T} \sigma(H^{(\ell)}W_{\text{out}})$ for $401$ distinct forecast tasks; we highlight the median for each layer. We also measure the similarity among head outputs for the heads in each layer, and we observe a higher average entropic rank (Appendix \ref{section:entropic_rank_discussion}) in the middle layers \textbf{(E)}, further suggesting redundancy in the middle layers. Appendix \ref{section:more_residual_stream_measurements} presents more examples of measurements on the residual stream.}
    \label{fig:residual_stream_logit_maps}
\end{figure*}

\newpage
Our main contributions are:
\begin{enumerate}
    \item We show that a wide variety of modern TSFMs contain highly-redundant internal components, allowing us to ablate 28\% of the total number of attention heads (and even some MLPs) for a 6\% or smaller drop in performance (Table \ref{tab:ablations_by_the_numbers_main_text}), across leading TSFMs on GIFT-Eval, a standard benchmark for zero-shot forecasting.
    \item We introduce a toolbox for mechanistic interpretability of TSFMs, and use it to study the role of layer depth. We ablate all components in each layer to demonstrate that the middle layers are highly redundant (Section \ref{subsection:ablations_entire_layer}). This contrasts with the crucial representation-building role of early and late layers. To understand this effect, we zoom in on the residual stream of a single model, \textit{Chronos}, and find that middle layers often perform redundant functions, with miniscule effect on the final prediction. Furthermore, initial layers can introduce spurious lines of thought visible in the logit maps on autoregressive rollout (Section \ref{subsection:measurements_on_residual_stream} and Fig. \ref{fig:residual_stream_logit_maps}B), which later layers must resolve.
    \item We create a theoretical kernel-based model of temporal attention for time series, and use it to motivate a highly-effective \textit{intrinsic} strategy for ablating individual attention heads within each layer, based on the stable rank of the query-key projection matrix (Section \ref{section:srank_ablations}). Through head-level ablation experiments, we show that the most ablateable layers require the fewest number of heads to maintain the model's performance. 
    \item Across multiple TSFMs, we identify specific attention heads responsible for degenerate phenomenon that undermine the generalization of TSFMs: context parroting, in which models repeat exact sequences from their context, and bias towards particular seasonality in produced forecasts. We study this finding through two perspectives: via a direct measurement of the attention rollout (Fig. \ref{fig:head_sharpness}), and using our \textit{intrinsic} head-level ablation strategy (Section \ref{section:srank_ablations}).
\end{enumerate}

Besides interpretability, we believe our study \footnote{Code Available: \href{https://github.com/abao1999/tsfm-lens}{github.com/abao1999/tsfm-lens}} has important implications for compression of TSFMs. We discuss these implications in Section \ref{section:conclusion}. We review the extensive body of related work in Section \ref{section:related_work}.

\subfile{sections/main_text/section_tsfms_and_residual_stream}
\newpage
\subfile{sections/main_text/section_ablations_entire_layers}
\subfile{sections/main_text/section_kernel_regression}

\subfile{sections/main_text/section_ablations_heads}
\subfile{sections/main_text/section_dataset_view}
\subfile{sections/main_text/section_related_work}

\section{Conclusion} \label{section:conclusion}
Our investigation uncovers the presence of redundant components in the intermediate layers of several leading time series foundation models (TSFMs) of diverse architectures. Our theoretical kernel regression framework provides insight into the development of phenomena such as context parroting and seasonality bias commonly observed in TSFMs. Motivated by these insights, we also empirically establish the effectiveness of a purely intrinsic (data-independent) method of pruning attention heads, and we demonstrate that the essential structure of the forecasts is preserved even under heavy ablations of the heads. We develop and open-source a toolbox for mechanistic interpretability of TSFMs, which we utilize to set up ablations of specific components, and to examine the residual stream. We believe our study motivates future work in model compression and understanding the mechanisms underlying the effectiveness of zero-shot forecasting.

\section*{Impact Statement}
This paper presents work whose goal is to advance the field of Machine Learning. There are many potential societal consequences of our work, none which we feel must be specifically highlighted here.

\section*{Acknowledgements and Disclosure of Funding}
The authors thank Jerry Liu and Yasa Baig for insightful discussion and encouraging feedback. AB was supported by the UT PGEF Fellowship
and the Basdall Gardner Memorial Fellowship. VHV was supported by the Moncrief Summer Internship, the College of Natural Sciences Scholarship TX26, and Melvin J. Rieger Scholarship Fund in Physics. JL was supported by the UT CSEM Fellowship. WG was supported by NSF DMS 2436233 and NSF
CMMI 2440490. This project has been made possible in part by Grant No. DAF2023-329596 from
the Chan Zuckerberg Initiative DAF, an advised fund of Silicon Valley Community Foundation. 
Financial support for this publication results from grant CS-CSA-2026-075 from Research Corporation for Science Advancement.
The authors acknowledge the Biomedical Research Computing Facility and Texas Advanced Computing Center (TACC) at The University of Texas at Austin for providing computational resources.

\bibliography{refs}
\bibliographystyle{icml2026}

\newpage
\appendix
\onecolumn

\subfile{sections/appendix/appendix_design_space_tsfms}
\subfile{sections/appendix/appendix_zero_ablation}
\newpage
\subfile{sections/appendix/appendix_more_ablation_experiments}
\newpage
\subfile{sections/appendix/appendix_spectral_sharpness}
\newpage
\subfile{sections/appendix/appendix_extended_kernel_discussion}
\subfile{sections/appendix/appendix_entropic_rank}
\newpage
\subfile{sections/appendix/appendix_additional_measurements_residual_stream}
\newpage
\subfile{sections/appendix/appendix_induction_heads}
\end{document}

%% file: symbols.tex
\newcommand{\bSigma}{\bm{\Sigma}}

\newcommand{\bA}{\mathbf{A}}

\newcommand{\bE}{\mathbf{E}}

\newcommand{\bM}{\mathbf{M}}

\newcommand{\bS}{\mathbf{S}}

\newcommand{\bU}{\mathbf{U}}
\newcommand{\bV}{\mathbf{V}}
\newcommand{\bW}{\mathbf{W}}

\newcommand{\bh}{\mathbf{h}}

\newcommand{\bk}{\mathbf{k}}

\newcommand{\bq}{\mathbf{q}}

\newcommand{\bs}{\mathbf{s}}

\newcommand{\bv}{\mathbf{v}}

\newcommand{\bbR}{\mathbb{R}}

\newcommand{\norm}[1]{\left\|#1\right\|}

\newcommand{\dmodel}{d_{\mathrm{model}}}
\newcommand{\dhead}{d_{\mathrm{head}}}

\newcommand{\hq}{\mathbf{h}_{q}}

\theoremstyle{plain}

\newcommand{\disti}{\norm{\bq-\bk_i}^2}
\newcommand{\distj}{\norm{\bq-\bk_j}^2}

%% file: sections/main_text/section_tsfms_and_residual_stream.tex
\section{Time Series Foundation Models} \label{section:tsfms}
We investigate a representative collection of recent transformer-based time series foundation models (TSFMs) based on their diversity in architecture and design choices, in addition to their performance on GIFT-Eval \cite{aksu2024gifteval}, a leading benchmark for zero-shot forecasting ability. The models in our study are: \textit{Chronos} \cite{ansari2024chronos}, \textit{Chronos-Bolt}, \textit{Chronos 2} \cite{ansari2025chronos2univariateuniversalforecasting}, \textit{TimesFM 2.5} \cite{das2024decoder}, \textit{Toto} \cite{cohen2025timedifferentobservabilityperspective}, and \textit{Moirai 1.1} \cite{woo2024moirai}.

\subsection{Design Space and Failure Modes} \label{subsection:design_space_failure_modes}
In Table \ref{tab:tsfm_comparison} we review the architectures, tokenization schemes, and other key design choices of the TSFMs in our study. Appendix \ref{section:design_space_tsfms_appendix} presents further details.

Modern TSFMs rely on architectural and training choices to encode strong inductive biases such as temporal locality, seasonality, and scale invariance. Patching enforces local temporal continuity by grouping contiguous observations, biasing the model toward stable short-range patterns, while multi-scale patching supports abstraction across timescales \cite{yu2025understanding,lai2025panda}. Scale invariance is crucial for generalization across context windows with differing absolute magnitudes and units, and is commonly enforced through instance normalization and quantile forecasting. The arrow of time (i.e. the future cannot influence the past) is often encoded via causal masking and autoregressive or decoder-based architectures \cite{nie2023a,zhang2025autohformer}. These inductive bias lead to common failure modes of time series foundation models, which recur across distinct architectures. In the case of \textit{context parroting}, models tend to repeat long sequences directly from their context \cite{zhang2024zero,zhang2025contextparrotingsimpletoughtobeat}. This occurs particularly in LLM-inspired TSFM architectures, and it often results in unexpectedly strong zero-shot performance in stationary settings---when history repeats itself. Regression-based models, however, tend to exhibit seasonality bias, in which the model preferentially forecasts single frequencies, not due to bias in the training data, but rather interaction of the model's time-axis hyperparameters (e.g. patch size) with the context \cite{zhang2024multi,yu2025understanding,zhou2022fedformer,wu2021autoformer,zeng2023transformers}. A special case of seasonality bias is mean regression, a common phenomenon in which TSFMs predict a context-dependent constant mean value \cite{zhou2025transformers,lai2025panda,yu2025understanding}.

\subsection{Residual Stream}  \label{section:residual_stream}
The central object of our study is the residual stream. In Appendix \ref{section:residual_stream_appendix}, we outline the residual stream updates for each of the models considered in our investigation. While the specific architectures and design spaces of these models vary considerably, they share a unified residual stream update strategy, and we provide a high-level abstraction (Equation \ref{eq:residual_stream}). All models that we consider have pre-norm architectures. Let $H^{(\ell)}$ for $\ell \in 0, \ldots, L$ denote the residual stream (sequence of hidden states) at layer $\ell$. And let $X_{\mathrm{context}}$ denote the context input to the model. We denote multi-head self-attention blocks by \textit{SA}. In models with an encoder and decoder architecture (\textit{Chronos} and \textit{Chronos-Bolt}), we denote the multi-head cross-attention blocks by \textit{CA}. In \textit{CA} blocks, the decoder attends to the encoder outputs $\mathcal{E}$, using its own hidden state as the queries and the linear projections of $\mathcal{E}$ as the keys and values. We denote the norm applied (i.e. LayerNorm, InstanceNorm, RMSNorm, etc) by $N(\cdot)$.
\[
H^{(0)} = \text{Embed}\,(X_{\mathrm{context}}) \in \mathbb{R}^{N \times d_{\mathrm{model}}}    
\]
\begin{align*}
    \begin{split}
    H^{(\ell)} = H^{(\ell-1)} 
          &+ \underbrace{\mathrm{SA}\,\!\big(N(H^{(\ell-1)})\big)}_{H^{(\ell)}_{\mathrm{SA}}}
          + \underbrace{\mathrm{CA}\,\!\big(N(H_{\mathrm{SA}}^{(\ell)}),\mathcal{E}\big)}_{H^{(\ell)}_{\mathrm{CA}}} \\
          &+ \underbrace{\mathrm{MLP}\,\!\big(N(H^{(\ell)}_{\mathrm{CA}})\big)}_{H^{(\ell)}_{\mathrm{MLP}}}
    \end{split}
\end{align*} \label{eq:residual_stream}

And $W_{\mathrm{out}}$ denotes the final transformation that maps the residual stream $d_{\mathrm{model}}$ to either logits over the vocabulary (in the case of \textit{Chronos}), or directly to the forecast horizon $X_{\mathrm{pred}} = N(H^{(L)}) \, W_{\mathrm{out}}$.

\subsection{Measurements on the Residual Stream} \label{subsection:measurements_on_residual_stream}
In Fig. \ref{fig:residual_stream_logit_maps}, we visualize the logit maps after each layer for an example forecast by \textit{Chronos}, on autoregressive rollout. As we further illustrate in Appendix \ref{section:more_residual_stream_measurements}, the initial layers introduce what we term \textit{spurious lines of thought}: trajectories in the logit map rollouts that are often infeasible (e.g. not continuous with the context) and which are later washed out by deeper layers. We observe that the middle layers take on largely redundant roles, and do not impose a significant effect on the residual stream. After each middle layer, the logit distribution exhibits a high degree of \textit{uncertainty} i.e. indecision among several possible lines of thought (high values for a wider spread of tokens at each time step). The distribution thus has higher entropy (duller peaks), which we quantify in Fig. \ref{fig:residual_stream_logit_maps}D. In contrast, the last layer consolidates the representation by \textit{selecting} the final prediction, the essential structure of which has existed in the residual stream since the first layer.

%% file: sections/main_text/section_ablations_entire_layers.tex
\section{Ablating Entire Layers} \label{section:ablations}

Across all of our ablation experiments (see Appendix \ref{section:ablations_methodology_appendix} for details on ablation methodology), including in Section \ref{section:srank_ablations}, we use GIFT-Eval \cite{aksu2024gifteval}, a standard benchmark for evaluating TSFMs on zero-shot time-series forecasting. The GIFT-Eval test set includes 144,000 time series across 7 domains, 10 frequencies, and prediction lengths from short to medium to long-term.

\begin{figure*}[t]
    \centering
    \includegraphics[width=1.0\linewidth]{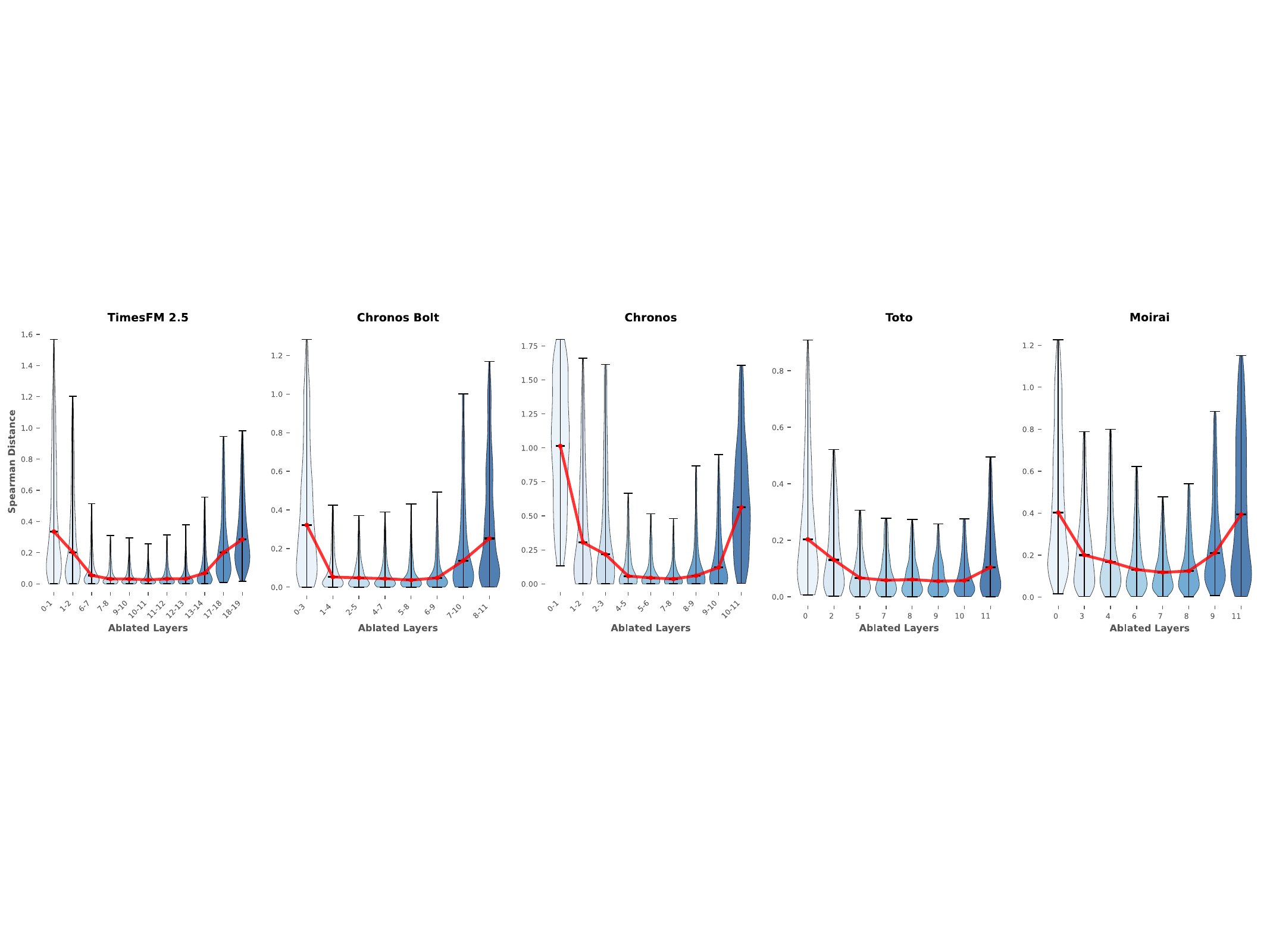}
    \caption{\textbf{Middle Layers are more ablateable than early and late layers:} Spearman distance ($1 - \rho$) between the original model predictions and the predictions with \textit{ablations of entire layers} (i.e. All Heads and the MLP) for groups of layers. Across all the TSFMs we evaluate, the middle layers show greater ablatability, suggesting redundant components, whereas the first and last layers are the most important to preserving the model's performance. Appendix \ref{section:more_ablation_results} presents more measurements of the depthwise importance of components.}
    \label{fig:ablations_spearman}
\end{figure*}

\label{subsection:ablations_entire_layer}
We zero-ablate the contribution to the residual stream of several components within the transformer, in order to elucidate the role of layer depth.

\begin{figure}[htbp]
    \centering
    \includegraphics[width=0.9\linewidth]{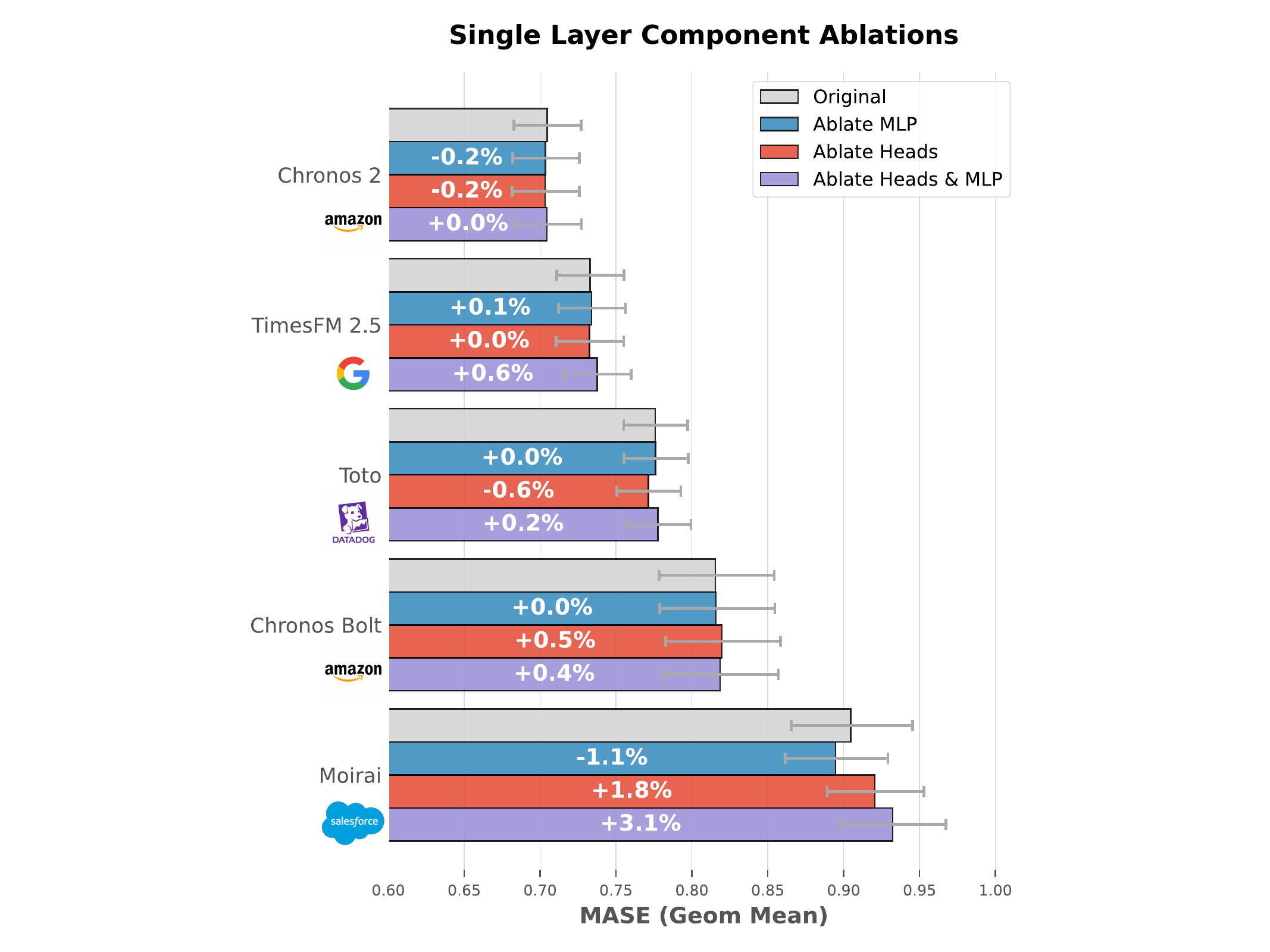}
    \caption{\textbf{TSFMs have redundant components}: Across all the leading models in our study, we identify layers with redundant components. We report the MASE geometric mean of the models with ablations against the original model on GIFT-Eval.}
    \label{fig:single_layer_component_ablations}
\end{figure} 

In Fig. \ref{fig:single_layer_component_ablations} we present the result of ablating, for a single layer: 1) the MLP; 2) \textit{all} attention heads (self-attention and also cross-attention for the models that have it); and 3) all heads and the MLP. Specifically, we perform the ablations on the most ablateable layers. We identify these layers through ablation experiments on all components, for each layer of each model (Figs. \ref{fig:ablations_spearman}, \ref{fig:ablations_spearman_combined_heads}, \ref{fig:ablations_spearman_combined_mlps}).

\begin{figure}[htbp]
    \centering
    \includegraphics[width=0.85\linewidth]{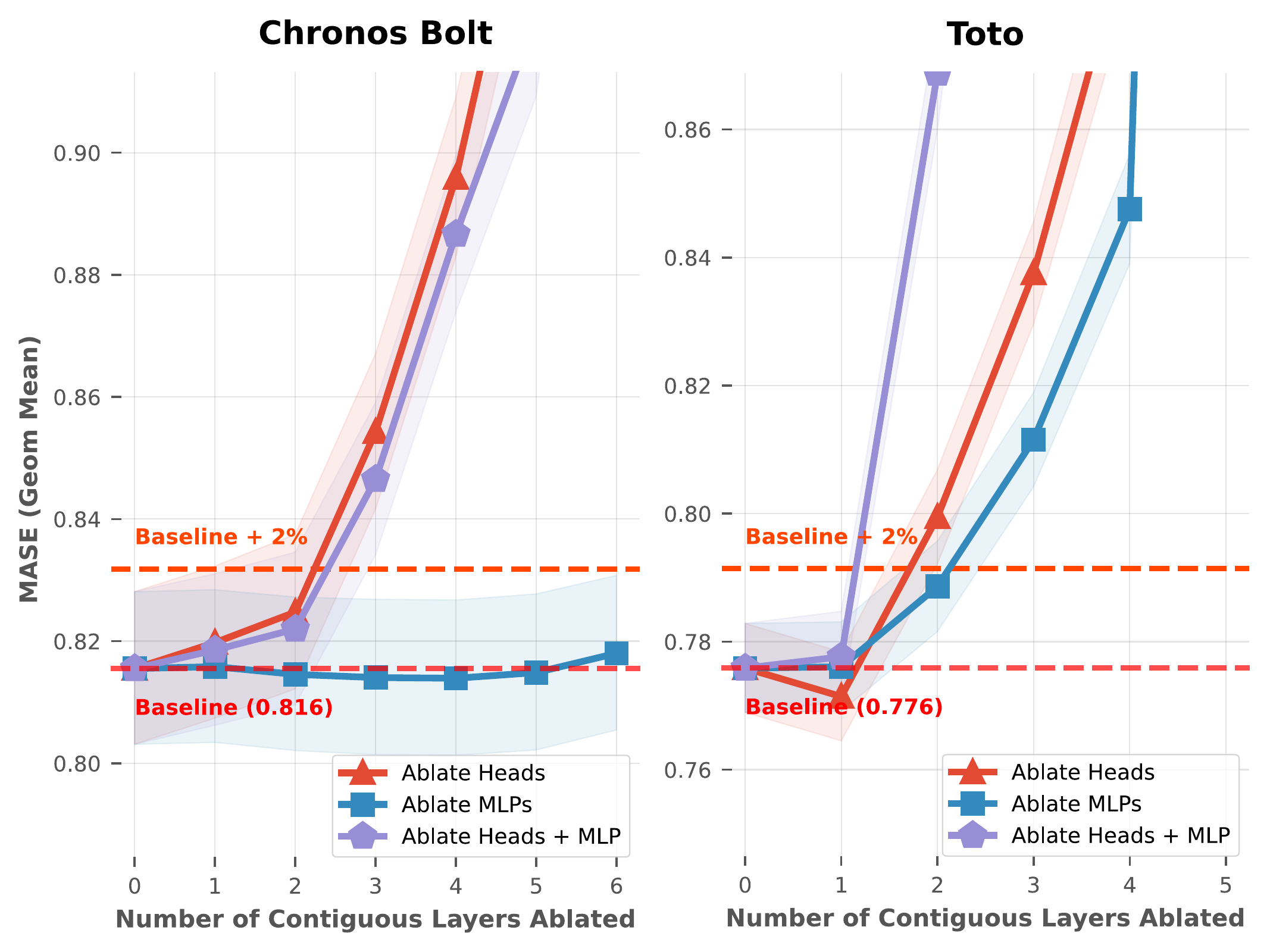}
    \caption{\textbf{TSFMs exhibit significant variation in sensitivity to component ablations:} When ablating components in contiguous layers, \textbf{(A)} Chronos Bolt shows extreme resilience to the MLP ablations, maintaining its performance even after half of them are removed. \textbf{(B)} Toto shows much greater sensitivity.}
    \label{fig:components_comparison_bolt_toto}
\end{figure}

We observe a significant difference in sensitivity to MLP ablations across architectures (Fig. \ref{fig:components_comparison_bolt_toto}). We find that \textit{Chronos Bolt}, an encoder-decoder architecture, is much more resilient to MLP ablations than decoder-only architectures such as \textit{Toto} and \textit{TimesFM 2.5} (Appendix \ref{section:more_ablation_results}). This motivates future work: \textit{how does architecture determine what the transformer loads onto MLPs versus attention heads?}

In Appendix \ref{section:more_details_tsfms_appendix} we detail the settings e.g. max context length, number of samples, patch length, we use for each TSFM in the evaluation. For runtime considerations, some settings differ from those used in GIFT-Eval leaderboard entries.

%% file: sections/main_text/section_kernel_regression.tex
\begin{figure*}[t]
    \centering
    \includegraphics[width=1.0\linewidth]{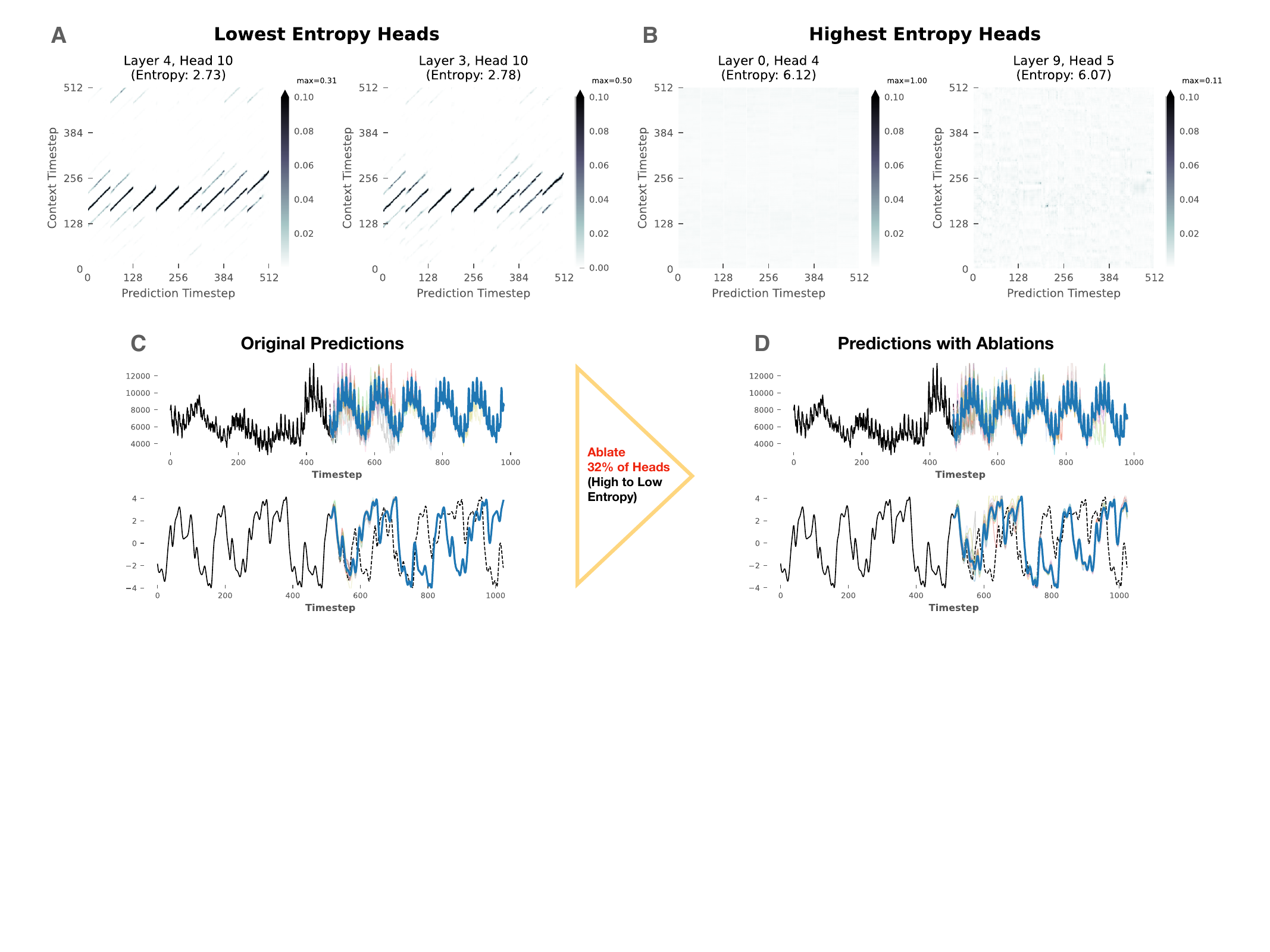}
    \caption{\textbf{Sharp heads as a mechanism for context parroting:} Attention scores between prediction and context timesteps for \textit{Chronos} show clear delineation of low entropy``sharp" \textbf{(A)} and high entropy ``diffuse" heads \textbf{(B)}. Sharp heads correspond to kernel regression with a small bandwidth, whereas diffuse heads are highly redundant and can ablated without affecting the context parroting \textbf{(C)} and \textbf{(D)}. Moreover, removing a single sharp head can break the parroting (Fig. \ref{fig:chronos_failure_ablate_sharp_heads}). Appendix \ref{section:extended_discussion_attention_kernel_regression} presents further discussion.}
    \label{fig:head_sharpness}
\end{figure*}

\section{Attention Heads as Kernel Regression} \label{section:heads_as_kernel_regression}

We find that time series offer an interesting substrate for studying transformer behavior. We focus on the well-known correspondence of the attention mechanism as a Nadaraya-Watson (NW) estimator \cite{tsai2019transformerdissectionunifiedunderstanding,wang2025testtimeregressionunifyingframework}. In Appendix \ref{section:extended_discussion_attention_kernel_regression} we present an extended discussion of this kernel regression correspondence.

For a cross attention layer in an encoder-decoder style transformer, let $\bh_i \in \bbR^{\dmodel}$ denote the encoder hidden states for the context tokens ($i \in [C]$) and  $\hq \in \bbR^{\dmodel}$  the hidden state for a query token in the decoder residual stream. For any layer, denote the projection matrices for the queries and as $\bW_Q, \bW_K, \bW_V \in \bbR^{\dmodel \times \dhead}$ respectively. Let $\bq = \hq^\top \bW_Q$, $\bk_i = \bh_i^\top \bW_K$, $\bv_i = \bh_i^\top \bW_V$ be the query, key, and value vectors where the key and value vectors are indexed from the encoder context. The cross-attention output is $\sum_{i=1}^C w_i \bv_i$ where $w_i = \text{Softmax}(\bs)_i$ is the softmax (SM) over the attention scores $\bs_i = \langle\bq, \bk_i \rangle / \sqrt{d_\text{head}}$.

Noting that $\langle \bq, \bk_i \rangle = \frac{1}{2} (\norm{\bq}^2 + \norm{\bk_i}^2 - \disti)$ and assuming that the keys have the same norm, the SM yields:
\begin{equation}
\label{eq:approxattn}
w_i = \frac{\exp{(-\disti/2\tau)}}{\sum_{j=1}^{C} \exp{(-\distj /2\tau)}}
\end{equation}
where $\tau = \sqrt{\dhead}$. This corresponds to a Gaussian NW estimator with bandwidth $\tau$ \cite{wang2025testtimeregressionunifyingframework}. However, without a pre-norm layer (e.g. QK-norm \cite{henry2020query}), the constant norm assumption is unrealistic. Instead, we provide an interpretable spectral characterization for the NW kernel in SM attention.

\subsection{Spectral Sharpness in Attention Heads}
\label{subsection:spectral}

Let $\bM \coloneqq \bW_Q \bW_K^\top / \sqrt{\dhead} = \bU\bS\bV^\top\in \bbR^{\dhead \times \dhead}$ be the SVD of the combined query and key projection matrices for a head. Then, $\langle \bq, \bk_i\rangle / \sqrt{\dhead} = \langle \hq , \bM \bh_i \rangle = \langle \bU^\top \hq, \bS \bV^\top \bh_i \rangle $. Let $\tilde{\bh}_q \coloneqq \bU^\top \hq$, $\tilde{\bh}_i \coloneqq \bV^\top \bh_i$, then $\langle \tilde{\bh}_q, \bS\tilde{\bh}_i \rangle = \frac{1}{2} ( ||\tilde{\bh}_q||^2_\bS + ||\tilde{\bh}_i||^2_\bS - ||\tilde{\bh}_q - \tilde{\bh}_i||^2_\bS)$, where $||\bv||^2_\bS = \bv^\top \bS \bv$. The SM yields the attention weights:
\[
w_i \sim \exp\left(\frac{1}{2}||\tilde{\bh}_i||^2_\bS\right)\exp\left(-\frac{1}{2}(\tilde{\bh}_q -\tilde{\bh}_i)^\top \bS\,(\tilde{\bh}_q -\tilde{\bh}_i)\right)
\]
The second factor can be interpreted as a multivariate Gaussian kernel with diagonal covariance $\Sigma = \bS^{-1}$. The first factor tilts the Gaussian kernel with the $\bS$-seminorm of the $\bV$-projected keys. Ignoring the tilt, notice that each direction of the query-key difference $(\tilde{\bh}_q - \tilde{\bh}_i)_j$ independently follows a univariate Gaussian kernel with bandwidth $1/\sqrt{\sigma_j}$ where $\sigma_j$ is the $j$-th singular value of $\bM$. Thus, sharpness can be expected when $\sqrt{\sigma_j} \gg (\tilde{\bh}_q - \tilde{\bh}_i)_j$ since this corresponds to a narrow bandwidth. Moreover, the tilt factor can also contribute to the sharpness if a $\bV$-projected key strongly aligns with the principal axes of $\bM$. Even slightly aligned directions on the basis $\bV$ can sharpen the weights if a few singular values are extremely large.

\begin{figure*}[t]
    \centering
    \includegraphics[width=1.0\linewidth]{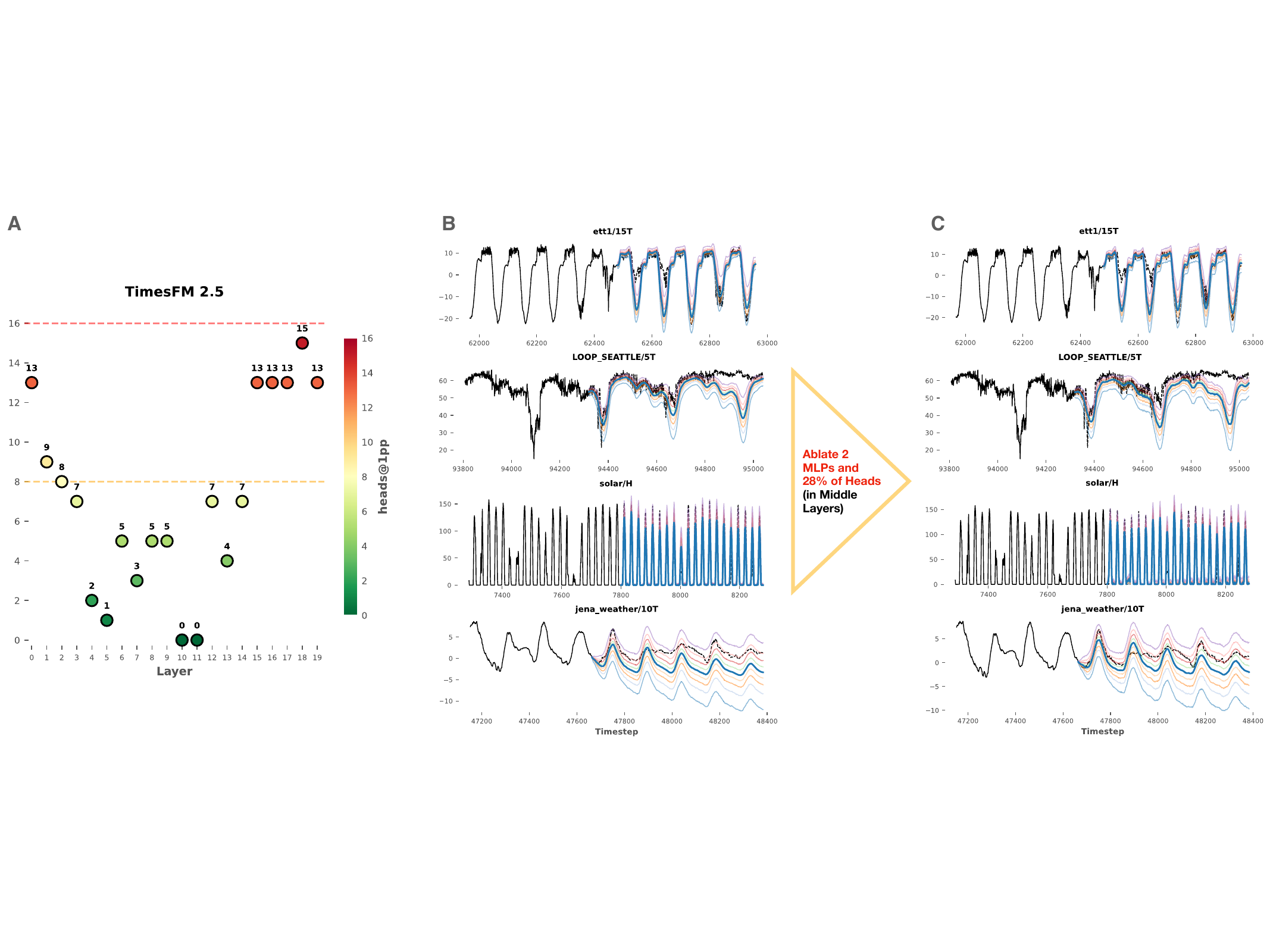}
    \caption{\textbf{Layerwise ablation of heads by stable rank preserves the forecasts:} (A) Using our \texttt{heads@1pp} metric, we find that the most ablateable layers (Fig. \ref{fig:ablations_spearman}) require the fewest number of heads to maintain the model's performance. (B) We present example forecasts using TimesFM 2.5 on long-term forecasting tasks of various timescales, from the GIFT-Eval test set. (C) We zero-ablate the middle layers (7-13 inclusive) down to \texttt{heads@1pp} found using our stable rank strategy, which corresponds to 28\% of all heads. Moreover, we zero-ablate the MLPs of layers 10 and 11 to entirely remove the contribution of those layers. Note that all forecasts presented use context length 2048, consistent with our TimesFM 2.5 evaluation setting (Appendix \ref{section:more_details_tsfms_appendix}), but we plot only the last 512 time points of context. Appendix \ref{subsection:preserving_essential_structure} shows the pitfalls of random head selection, and that our intrinsic ablation strategy preserves the essential seasonality structure of the forecasts, across all models, and even when ablating heads from all layers of TimesFM 2.5 (51\% of heads, Fig. \ref{fig:timesfm2p5_ablate_alllayers_seasonality_versus_random}).}
    \label{fig:timesfm2p5_srank_combined}
\end{figure*}

Based on the intuition that diffuse cross-attention heads produce conservative forecasts such as mean regression, we hypothesize that sharp (low-entropy) heads are more important for forecasting ability. Fig. \ref{fig:chronos_failure_ablate_sharp_heads} shows that ablating even a single sharp head causes significant degradation in the forecasts. Therefore, the spectral interpretation naturally motivates a model compression strategy: simply prune the attention heads with low spectral energy. Prior work has shown that simple ``parroting" heads are a surprisingly effective driver of long term forecasting performance in seasonal and even chaotic dynamics \cite{zhang2025contextparrotingsimpletoughtobeat}.

We now prove that the sharpness in the softmax attention matrix is directly controlled by the spectrum of the head under mild alignment assumptions:
\begin{proposition}[Informal]
\label{prop:concentration}
    If each decoder query $\bh_{q_i}$ uniquely best-aligns with an encoder key $\bh_{k_j}$ with positive margin, then SM attention concentrates exponentially around $\bh_{k_j}$ with rate controlled by the top-$r$ spectral modes. 
\end{proposition}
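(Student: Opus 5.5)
The plan is to make the statement precise using the SVD factorization $\bM = \bU\bS\bV^\top$ from Section~\ref{subsection:spectral}. The score between query $\hq$ and encoder state $\bh_i$ is $s_i = \langle \hq, \bM \bh_i\rangle = \sum_{m} \sigma_m (\bU^\top \hq)_m (\bV^\top \bh_i)_m$. Fix a rank $r$ and write $\bM = \bM_r + \bM_{>r}$, where $\bM_r = \sum_{m\le r}\sigma_m \mathbf{u}_m \mathbf{v}_m^\top$ is the truncated SVD. The alignment hypothesis becomes a margin condition in the top-$r$ modes. For the query there is a unique index $j^\star$ with
\[
a_{j^\star} - a_i \ge \gamma > 0 \quad \text{for all } i \ne j^\star, \qquad a_i \coloneqq \sum_{m\le r}\sigma_m (\bU^\top\hq)_m(\bV^\top\bh_i)_m .
\]
To express the margin spectrally, I would take the normalized version $\gamma = \sigma_r \,\delta$. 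Here $\delta$ is the alignment gap of the projected coordinates, and the gap is bounded below because each coordinate is weighted by at least $\sigma_r$ (a lower bound on the gap weighted by $\sigma_m \ge \sigma_r$).

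The key step is the elementary softmax bound. Since $w_i = e^{s_i}/\sum_k e^{s_k}$, for $i\neq j^\star$ we have $w_i/w_{j^\star} = e^{s_i - s_{j^\star}}$. Hence
\[
1 - w_{j^\star} = \sum_{i\ne j^\star} w_i \le \sum_{i\ne j^\star} e^{-(s_{j^\star}-s_i)} .
\]
It therefore suffices to lower bound $s_{j^\star}-s_i = (a_{j^\star}-a_i) + \langle \hq, \bM_{>r}(\bh_{j^\star}-\bh_i)\rangle$. The tail term is bounded by Cauchy--Schwarz by $\sigma_{r+1}\norm{\hq}\,\norm{\bh_{j^\star}-\bh_i}$. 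Assuming bounded hidden-state norms $\norm{\hq},\norm{\bh_i}\le B$, this gives
\[
1-w_{j^\star} \le (C-1)\exp\!\big(-(\gamma - 2B^2\sigma_{r+1})\big)
\]
whenever $\gamma > 2B^2\sigma_{r+1}$. This is exponential concentration, with rate set by the top-$r$ spectral gap: scaling the leading singular values by $\lambda$ scales $\gamma$ by $\lambda$ and drives the off-target mass to zero like $e^{-\lambda\gamma}$.

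Finally, I would connect this bound back to the Gaussian-kernel picture in Section~\ref{subsection:spectral}. The tilt term $\tfrac12\norm{\tilde\bh_i}_\bS^2$ is absorbed automatically, because working with raw scores $s_i$ does not require the equal-norm assumption. One can also note that the same argument applied to every decoder query $\bh_{q_i}$ gives row-wise concentration of the whole attention matrix, using a union over rows with the worst-case margin.

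The main obstacle is stating the alignment hypothesis so that it is both mild and yields a rate genuinely controlled by the spectrum rather than by arbitrary hidden-state geometry. In particular, the tail modes $\sigma_{r+1},\dots$ must not be able to destroy the margin. I would first try the relative-gap condition $\sigma_r\delta > 2B^2\sigma_{r+1}$, which ties the result to the spectral gap and naturally to the stable rank used later for pruning. If that is too crude, I would instead bound the tail term by $\norm{\bM_{>r}}$ applied only to the component of $\bh_{j^\star}-\bh_i$ outside the top-$r$ right singular subspace.
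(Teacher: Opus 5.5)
Your core argument is the same as the paper's. You bound the off-target mass by $(C-1)$ times the exponential of minus the top-2 score gap, split $\bM=\bM_r+\bM_{>r}$, and show the tail perturbs each score by at most (tail norm)$\times$(query norm)$\times$(key norm). The full gap is then at least the top-$r$ margin minus twice that perturbation. One difference works in your favour. You bound the tail by its operator norm $\sigma_{r+1}$, whereas the paper uses the Frobenius bound $\norm{\bE}_2\le\norm{\bE}_F\le\sqrt\rho$. Your version is slightly sharper and equally valid, and your remark that the tilt term needs no equal-norm assumption when working with raw scores is also how the paper proceeds.

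The step that does not hold as written is the normalization $\gamma=\sigma_r\delta$. Write $x_m=(\bU^\top\hq)_m\,(\bV^\top(\bh_{j^\star}-\bh_i))_m$. From $\sum_{m\le r}\sigma_m x_m$ you cannot conclude $\ge\sigma_r\sum_{m\le r}x_m$ unless every $x_m\ge 0$. For example, take $\sigma_1=10$, $\sigma_2=1$, $x_1=-1$, $x_2=2$. The unweighted gap is positive, but the weighted one is $-8$. So a positive unweighted alignment gap $\delta$ gives neither a spectral lower bound nor even a positive margin, and the "relative-gap condition" $\sigma_r\delta>2B^2\sigma_{r+1}$ built on it is unsupported.

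The paper avoids this by attaching $\bSigma_r$ to the query. It assumes a key-separation bound $\norm{\tilde\bk^r_{j^\star}-\tilde\bk^r_{j^\dagger}}_2>d_i$ and an angle condition $\cos\theta_i>a_i$ between $\bSigma_r\tilde\bq_i^r$ and that key difference. Together these give $\gamma_i^r\ge a_i d_i\norm{\bSigma_r\tilde\bq_i^r}_2$, which is the explicit "controlled by the top-$r$ modes" rate. If you want a $\sigma_r$ factor, it then follows legitimately from $\norm{\bSigma_r\tilde\bq_i^r}_2\ge\sigma_r\norm{\tilde\bq_i^r}_2$. Alternatively, impose coordinatewise nonnegativity of the $x_m$, or keep $\gamma$ as the primitive top-$r$ margin hypothesis, in which case your main bound is already complete.
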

The result is formally restated and proved in Appendix \ref{app:proof}. Consequently, it is clear that if contiguous subsequences in the context and decoder queries are aligned at a fixed lag $\tau$, then the attention matrix will exhibit a diagonal band (local shift) structure which appears in \textit{Chronos} in Fig. \ref{fig:head_sharpness}A. Beyond \textit{Chronos}, this local shift matrix structure is even demonstrated in the cross-attention matrices over patches of different sizes in Chronos Bolt (Fig. \ref{fig:bolt_parrot}, Appendix \ref{app:patchedmodels}). Although the result is technically only applicable to encoder-decoder style models, we show that the compression strategy motivated by this theory is effective across model classes.

%% file: sections/main_text/section_ablations_heads.tex
\section{Ablating Heads by Stable Rank} \label{section:srank_ablations}
In Section \ref{subsection:ablations_entire_layer} we included ablations of \textit{all} attention heads in each layer. We now take a more fine-grained view and conduct head-level ablations for each layer motivated by the results in Section \ref{subsection:spectral}. In Fig. \ref{fig:timesfm2p5_srank_combined} and the sequel, we illustrate the surprising effectiveness of an \textit{intrinsic} head pruning strategy that orders the heads in each layer by the stable rank of the query-key projection matrix i.e. $\operatorname{sr}(W_Q W_K^{\top})$.

\begin{definition}[\textit{Stable Rank}]
The stable rank of a matrix $A$ is the ratio of its Frobenius norm and its spectral norm.
\[
\operatorname{sr}(A) = \|A\|_F^2/\|A\|_2^2
\]
where $\|A\|_F^2 := \sum_{i=1}^{r}\sigma_i(A)^2$ and $\|A\|_2^2 := \sigma_1(A)^2$.
\end{definition}

\begin{definition}[\texttt{heads@1pp}] \label{definition:heads@1pp}
Consider a head-pruning strategy that induces an ordering $\{h_i\}_{i=0}^{N_{\text{heads}}}$ of attention heads, in layer $\ell$ of a multi-head attention transformer. 

Let $\mathcal{M}(S)$ denote the model after zero-ablating the set of heads $S \subseteq \{h_i\}_{i=0}^{N_{\text{heads}}}$, and let $\mathcal{E}(\cdot)$ denote a scalar error metric (e.g. MASE).

For $k \in \{0,1,\dots,N_{\text{heads}}\}$, denote the set of heads that are \textit{kept} as $\mathcal{K}_k := \{h_1,\dots,h_k\}$ and the complementary set of $N_{\text{heads}} - k$ heads that are \textit{ablated} as $\mathcal{A}_k := \{h_i\}_{i=1}^{N_{\text{heads}}} \setminus \mathcal{K}_k$.
Define \texttt{heads@1pp} to be the smallest integer $k$ such that
\[
\frac{\mathcal{E}\bigl(\mathcal{M}(\mathcal{A}_k)\bigr) - \mathcal{E}(\mathcal{M}(\varnothing))}{\mathcal{E}(\mathcal{M}(\varnothing))}
\;\le\; 1\%.
\]
In other words, \texttt{heads@1pp} for a layer is the minimal number heads that must be \emph{kept}, following the ordering, with all other heads zero-ablated, to maintain the ablated model's error within one percentage point of the original.
\end{definition}

The goal of our head-level ablation strategy is to minimize \texttt{heads@1pp} (i.e. maximize the number of ablated heads) for each layer. We observe that many highly ablateable layers achieve lower \texttt{heads@1pp} when ablating heads from high to low stable rank (Fig. \ref{fig:ablate_srank_high_low_timesfm2p5}). Heads with high stable rank may correspond to diffuse heads that perform weakly structured global smoothing. Meanwhile, heads with low stable rank can be more specialized if they align with a few task-relevant modes that encourage seasonality, potentially producing sharp attention patterns. We provide further discussion in Section \ref{subsection:implications_srank}.

Table \ref{tab:ablations_by_the_numbers_main_text} presents several heavy ablations. Appendix \ref{subsection:preserving_essential_structure} presents example forecasts that demonstrate the preservation of the essential seasonality structure of the forecasts. We emphasize that our aim is \textit{not} to propose an optimal pruning strategy, but rather to demonstrate the surprising effectiveness of an \textit{intrinsic} (data-independent) strategy for pruning attention heads within layers.

\begin{table}[htbp]
\centering
\scalebox{0.86}{%
\begin{tabular}{lcc}
\toprule
\multicolumn{3}{c}{\textbf{Evaluation of Ablated Models on GIFT-Eval test set}} \\
\midrule
Model \footnotesize{(ZA $\textcolor{WildStrawberry}{\% \text{Heads}} + \textcolor{Periwinkle}{N_{\text{MLP}}}$)}
& \footnotesize{$\% \Delta$ MASE $(\downarrow)$}
& \footnotesize{$\% \Delta$ CRPS $(\downarrow)$} \\
\midrule
\textbf{Chronos 2 \footnotesize{(\textcolor{WildStrawberry}{24\%} + \textcolor{Periwinkle}{2 MLP})}} &
\textbf{4.71\%} & \textbf{3.92\%} \\
\midrule
\textbf{TimesFM 2.5 \footnotesize{(\textcolor{WildStrawberry}{28\%} + \textcolor{Periwinkle}{2 MLP})}} &
\textbf{6.12\%} & \textbf{5.86\%} \\
\midrule
\textbf{Toto \footnotesize{(\textcolor{WildStrawberry}{29\%})}} &
\textbf{3.54\%} & \textbf{3.60\%} \\
Toto \footnotesize{(\textcolor{WildStrawberry}{29\%} + \textcolor{Periwinkle}{1 MLP})} &
8.54\% & 8.73\% \\ 
Toto \footnotesize{(\textcolor{WildStrawberry}{43\%})} &
10.10\% & 9.97\% \\
\midrule
\textbf{Chronos Bolt \footnotesize{(\textcolor{WildStrawberry}{28\%} + \textcolor{Periwinkle}{6 MLP})}} &
\textbf{4.24\%} & \textbf{0.19\%} \\
Chronos Bolt \footnotesize{(\textcolor{WildStrawberry}{42\%} + \textcolor{Periwinkle}{6 MLP})} &
10.71\% & 6.42\% \\
\bottomrule
\end{tabular}
}
\vspace{6pt}
\caption{We evaluate leading TSFMs under heavy ablations of heads found via our intrinsic head-pruning strategy (Section \ref{section:srank_ablations}), in addition to some MLPs. We report the percent change (lower is better) in MASE and CRPS geometric mean on GIFT-Eval. We \textbf{bold} the ablations for which we simply take $\approx 1/3$ of all layers (identified to be most ablateable in Appendix \ref{section:more_ablation_results}) to \texttt{heads@1pp}. Table \ref{tab:ablations_by_the_numbers} presents additional metrics.}
\label{tab:ablations_by_the_numbers_main_text}
\end{table}

\subsection{Implication of Stable Rank} \label{subsection:implications_srank}

We argue that TSFMs represent a wholly different setting from LLMs. As we elucidated in Section \ref{section:heads_as_kernel_regression}, the attention heads in TSFMs approximate kernel operators. We hypothesize that the performance of TSFMs on forecasting tasks is dominated by similarity comparisons i.e. pattern matching over time. From this perspective, the intrinsic properties of attention heads can be informative about functional capacity. 

In classical kernel learning theory, the effective dimension captures how many modes are active after regularization i.e. how many directions the learned kernel can meaningfully exploit, thus bounding the complexity of the learned estimator. Meanwhile, the stable rank is a purely spectral quantity that captures the spread of spectral energy relative to the largest mode. It provides a coarse and scale-free (i.e. invariant to scaling the kernel by a positive scalar) notion of effective dimensionality \cite{vershynin2018high}.

A high stable rank indicates that many spectral modes contribute to that head's attention score, whereas a low stable rank indicates that a small number of modes dominate. A head with stable rank $r = \operatorname{sr}(W_Q W_K^{\top})$ has its spectral mass approximately concentrated in $r$ comparable directions. Thus, under some data smoothness assumptions, the heads with low stable rank $r$ effectively behave as low-dimensional operators, in the sense that their action on the data is well-approximated by a small set of dominant modes. 

\begin{figure}[t]
    \centering
    \includegraphics[width=0.75\linewidth]{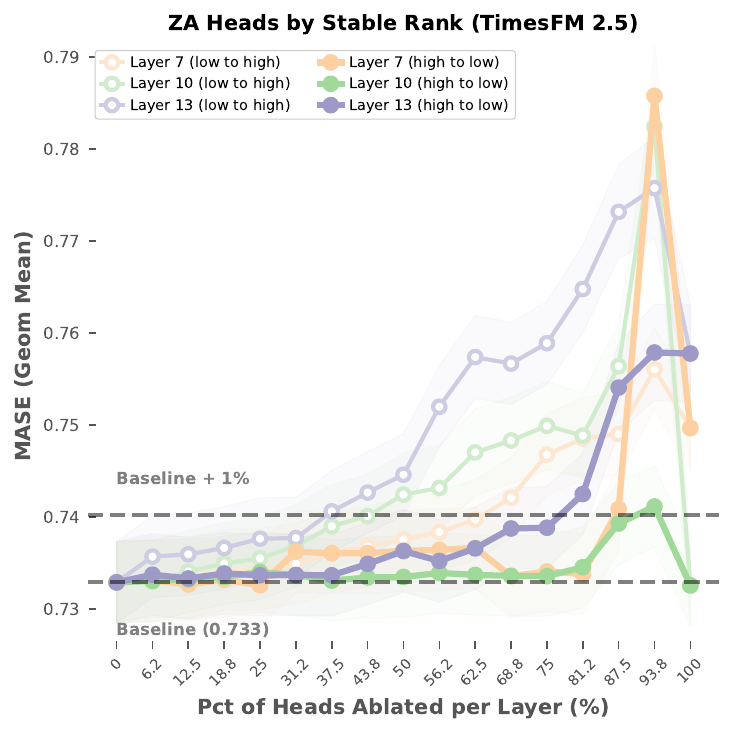}
    \caption{We observe that several of the most ablateable layers fare better when ablating heads from high-to-low stable rank. However, we do not find a consistent general trend with layer depth. As we explained in Section \ref{section:heads_as_kernel_regression} and Appendix \ref{app:proof}, the fundamental limitation to any intrinsic head pruning strategy is that we are not able to consider the data-dependent exponential tilt term of Equation \ref{eq:approxattn}.}
    \label{fig:ablate_srank_high_low_timesfm2p5}
\end{figure}

 Through our head-level ablation experiments (Table \ref{tab:ablations_by_the_numbers_main_text} and Figs. \ref{fig:timesfm2p5_srank_combined}, \ref{fig:srank_ablations_combined_allmodels}, \ref{fig:heads1pp_srank_ablations_combined_allmodels}), we have demonstrated the feasibility of an intrinsic head pruning strategy for TSFMs. This is despite the limitations of any purely intrinsic strategy to account for the data-dependent exponential tilt term of Eq. \ref{eq:approxattn}.

%% file: sections/main_text/section_dataset_view.tex
\section{Consistency Across Datasets} \label{section:dataset_view_main_text}
In Fig. \ref{fig:dataset_view_chronos2_main_text} and Appendix \ref{subsection:dataset_view_appendix} we show that the effect of our ablations are consistent across diverse datasets for the TSFMs in our study, with the exception of Moirai 1.1, for which a few datasets dominate the metrics (Fig. \ref{fig:dataset_view_moirai}).

\begin{figure*}
    \centering
    \includegraphics[width=1.0\linewidth]{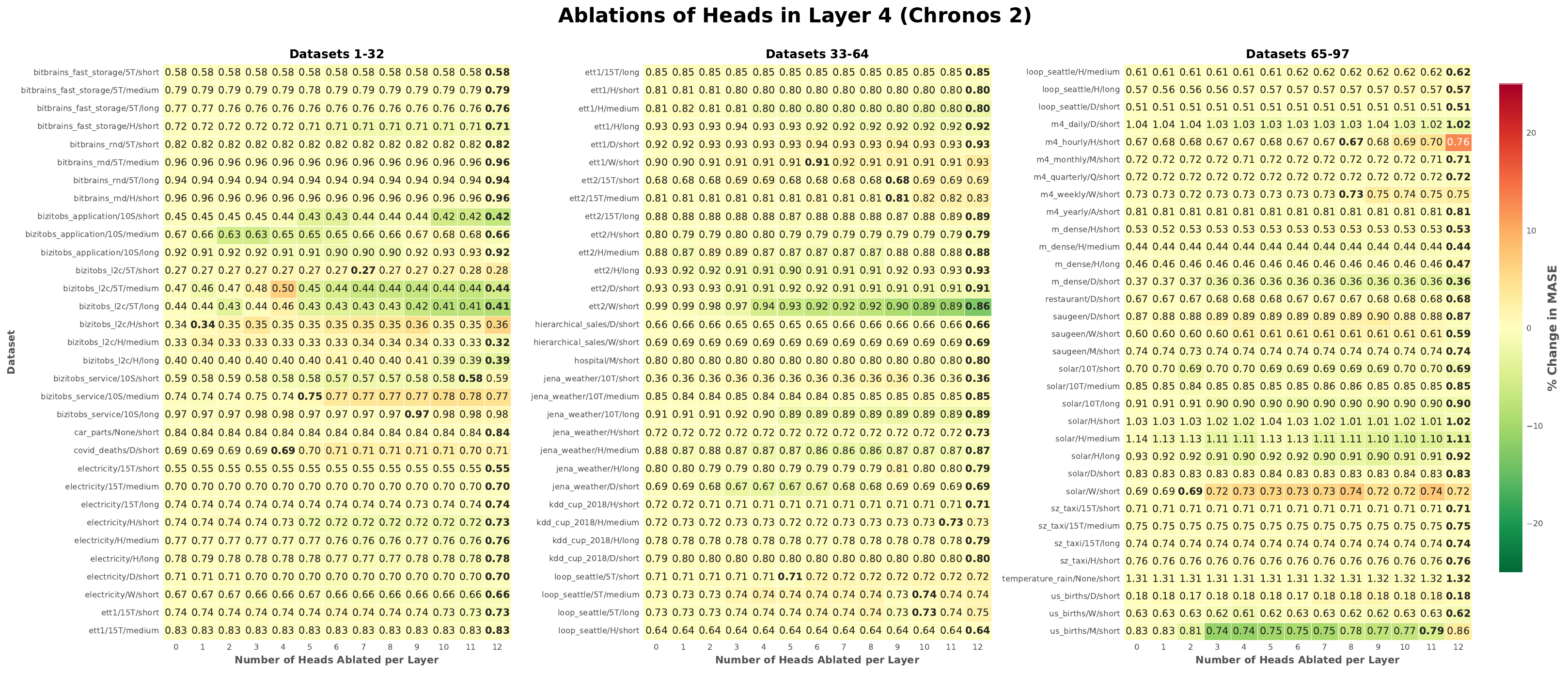}
    \caption{\textbf{Effect of ablations is consistent across diverse datasets}: Our ablation evaluations on GIFT-Eval are not dominated by any single dataset, timescale, or forecast length. To demonstrate this point, we show the performance on 97 datasets in the GIFT-Eval test set under ablations of an increasing number of heads per layer, following our head-level ablation strategy in Section \ref{section:srank_ablations}. We present here the change in MASE (lower i.e. \textcolor{ForestGreen}{decrease} is better, and baseline is \textcolor{GreenYellow}{no change} from original model) on the most ablateable layer of \textit{Chronos 2} (Layer 4). For each dataset, we \textbf{bold} the value at the highest number of heads ablated for which the performance under ablations is within $1\%$ of the original model. In Appendix \ref{subsection:dataset_view_appendix}, Figs \ref{fig:dataset_view_timesfm2p5}, \ref{fig:dataset_view_toto}, \ref{fig:dataset_view_chronos_bolt}, and \ref{fig:dataset_view_moirai} present the per-dataset performance for \textit{TimesFM 2.5}, \textit{Toto}, \textit{Chronos Bolt}, and \textit{Moirai 1.1} respectively, on select layers. We note many datasets on which the ablations actually improve the performance.}
    \label{fig:dataset_view_chronos2_main_text}
\end{figure*}

%% file: sections/main_text/section_related_work.tex
\section{Related Work} \label{section:related_work}

Early TSFMs borrowed architectural cues from LLMs \cite{garza2024timegpt1,ansari2024chronos,jin2024timellm}, however, architectures have evolved as the unique properties of time series as a data class have been identified \cite{yu2025understanding,cohen2023evaluatingrippleeffectsknowledge,garza2024timegpt1,shi2025timemoe}.
As TSFMs enter production, there are relatively few studies on the compression and interpretability of these models, compared to the vast literature on LLMs. Notably, \cite{yu2025understandingtransformerstimeseries} analyze the rank structure through the layer depth of TSFMs, motivating an \textit{extrinsic} (data-dependent) compression method based on applying a truncated SVD to each attention matrix. Our study instead focuses on universal properties across TSFM architectures through ablations of entire components (e.g. attention heads and MLP layers), leading to an \textit{intrinsic} (data-independent) head pruning strategy based on the query-key projection matrix product. 

\textbf{Interpretability}: The widespread adoption of transformers for language modelling has come with attendant efforts for \textit{mechanistic interpretability}: relating the function of a transformer's architectural components to emergent capabilities, such as in-context learning \cite{elhage2021mathematical, olsson2022context}. Several works develop methods to interpret attention mechanisms \cite{serrano-smith-2019-attention, jain-wallace-2019-attention}, early LLMs \cite{Brunner2020On, rogers-etal-2020-primer}, and component subgraphs (i.e. circuits) \cite{vig_causal_mediation, meng2022locating, yao-etal-2023-editing, wang2023interpretability}. Our framework follows \cite{elhage2021mathematical}, including using direct logit attribution (DLA) to illuminate redundant components (Section \ref{subsection:measurements_on_residual_stream}). We also investigate induction heads in \textit{Chronos} (Appendix \ref{section:induction_heads_discussion}). In a similar vein to our work, \cite{wilinski2025exploring} introduce a family of mechanistic interpretability probes that identify correlations among layers \textit{within} models in three TSFM families. The authors find these models are robust to pruning blocks of correlated layers. In contrast, our study finds similarities \textit{across distinct model families}, motivating a more aggressive pruning strategy and a theoretical model. \cite{zhao2026less} develop a structured pruning strategy, progressively removing rows or columns of linear layers in attention or feedforward blocks. They then fine-tune the pruned models. \cite{pandey2025internalsemanticstimeseriesfoundation} study layer specialization in TSFMs, based on canonical motifs like trend and shift, identifying that earlier layers capture local time-domain patterns.


\textbf{Redundancies in Attention Heads}:
Prior works discovered redundant heads in LLMs. \cite{michel2019sixteenheadsreallybetter} observed that models trained with multi-head attention maintain their performance even after pruning a large proportion of their heads---even when only a single head remains in some layers. \cite{mcgrath2023hydraeffectemergentselfrepair} found emergent self-repair: when attention heads in one layer are removed, attention heads in other layers increase their effect on the residual stream to compensate. This form of adaptive computation is also supported by the findings of \cite{voita-etal-2019-analyzing}; only a few select heads are necessary to nearly preserve the model's performance. \cite{hao2021selfattentionattributioninterpretinginformation} implemented informed pruning of heads by attribution score. Our study establishes and explores the same phenomenon for TSFMs, and investigates aspects of model compression unique to the time series domain.

\textbf{MLPs as Key-Value Stores}: Several previous works \cite{geva-etal-2021-transformer, geva-etal-2022-transformer, geva-etal-2023-dissecting, dai-etal-2022-knowledge, yu-ananiadou-2024-neuron, chughtai2024summingfactsadditivemechanisms, merullo2024languagemodelsimplementsimple} have identified the feed-forward, multi-layer perceptron (MLP) layers in transformers as key-value stores. These findings spawned a line of work known as knowledge (fact) editing \cite{meng2022locating, yao-etal-2023-editing, cohen2023evaluatingrippleeffectsknowledge, nanda2023factfinding}, based on the premise that updating the model's MLP parameters can enforce edits to biased, unsafe, and inaccurate content. Despite pitfalls like poor generalization and ineffective usage of edited knowledge \cite{yao-etal-2023-editing, cohen2023evaluatingrippleeffectsknowledge}, knowledge editing represents an instance of interpretability research providing actionable insights for improving models. A parallel line of work has investigated the facts-per-parameter scaling of MLPs \cite{allen-zhu2025physics, zucchet2025how, morris2025languagemodelsmemorize, nichani2024understandingfactualrecalltransformers, dugan2025constructingefficientfactstoringmlps}. Although not our main focus, we find that every TSFM in our study has at least one layer with an MLP that can be ablated without significantly affecting performance (Fig. \ref{fig:single_layer_component_ablations}). This finding and Fig. \ref{fig:components_comparison_bolt_toto} motivates further exploration into the role of MLPs within TSFMs.

\textbf{Role of Depth in Transformers}: Several works have explored the role of depth in LLMs. Recently, \cite{gupta2025llmsusedepth} highlight the importance of depth on task performance and found that LLMs form statistical token guesses in early layers that are refined in later layers, analogous to the spurious lines of thought we observe here in TSFMs. \cite{pmlr-v139-dong21a} show that the MLPs and residual connections in multilayer transformers are crucial for preventing rank collapse in stacked attention layers. Notably, \cite{sanyal2025attentioncollapsesdegeneratelayers} observed collapse of the average rank (over random test set samples) of the attention maps in decoder-only LLMs, despite the simplified settings of prior theory work \cite{pmlr-v139-dong21a,anagnostidis2022signal}. Viewing rank collapse as a drawback of deep models, \cite{sanyal2025attentioncollapsesdegeneratelayers} found that non-collapsed attention layers in large transformers can be used to initialize smaller, compact LLMs which can be trained to match or surpass their larger parents. \cite{saada2025mindgapspectralanalysis} found that softmax attention layers can also exhibit rank collapse with increasing context length. In contrast, we look at the data-independent per-head query-key inner product weight matrix (i.e. $W_Q W_K^\top$) and we empirically demonstrate the importance of the early and late layers across several architectures: encoder-only, encoder-decoder, decoder-only. Finally, recurrent-depth LLMs emulate depth by looping a much smaller block of layers many times, further emphasizing the benefits of avoiding redundant layers or computation in language modeling \cite{geiping2025scalingtesttimecomputelatent}. 

%% file: sections/appendix/appendix_design_space_tsfms.tex
\begin{table}[t]
  \centering
  \caption{Comparison of Time Series Foundation Models (TSFMs) in our study.}
  \label{tab:tsfm_comparison}
  \renewcommand{\arraystretch}{1.0}  
  \setlength{\tabcolsep}{8pt}        
  \resizebox{0.9\textwidth}{!}{%
  \begin{tabular}{p{2.0cm} p{2.5cm} p{3cm} p{9cm}}
    \toprule
    \textbf{Model} & \textbf{Architecture} & \textbf{Tokenization} & \textbf{Key Design Choices} \\
    \midrule

    Chronos & Encoder-Decoder (T5 for cond. generation) & Mean-scale and \newline uniform quantization & Mean-scale the context, then quantize into \(V\) discrete bins (plus \texttt{pad} / \texttt{eos} tokens). Autoregressive decoding (token by token). Trained with cross-entropy (categorical) loss. \\ 
    \midrule
    Chronos-Bolt & Encoder-Decoder (T5) & Patched \newline (non-overlapping) & The decoder directly emits multi-step quantile forecasts. Trained with quantile (pinball) loss.\\ 
    \midrule
    TimesFM & Decoder-only & Patched \newline (non-overlapping) & The decoder predicts the forecast patches causally. Uses autoregressive decoding at patch level. Trained with MSE loss. \\ 
    \midrule
    Toto & Decoder-only & Patched (per-variate causal scaling) & Mixing across channels (variates) using proportional factorized attention. Prediction head is a Student-t mixture, trained with a composite robust / likelihood-based loss. \\
    \midrule
    Moirai & Masked Encoder & Patched (multiple patch lengths) & Multiple projection layers specialized for different frequencies. Any-variate attention. Outputs parameters of mixture distribution. \\
    \midrule
    Chronos 2 & Encoder-only & Patched \newline (non-overlapping) & Interleaved temporal and group (covariate) attention. Based on T5 encoder and trained with a multi-quantile regression loss. \\
    \bottomrule
  \end{tabular}
  }
\end{table}

\section{More Details on TSFMs} \label{section:more_details_tsfms_appendix}

Table \ref{tab:tsfm_comparison} presents a comparison of the key design choices among the TSFMs in our study. \textit{TimesFM 2.5}\footnote{\scriptsize{\url{https://huggingface.co/google/timesfm-2.5-200m-pytorch}}} (200M parameters) has 20 layers and 16 heads per layer. We use the base model sizes for \textit{Chronos}\footnote{\scriptsize{\url{https://huggingface.co/amazon/chronos-t5-base}}} (200M parameters) and \textit{Chronos Bolt}\footnote{\scriptsize{\url{https://huggingface.co/amazon/chronos-bolt-base}}} (205M parameters), which both have 12 layers and 12 heads per layer. \textit{Chronos 2}\footnote{\scriptsize{\url{https://huggingface.co/amazon/chronos-2}}} (120M parameters) has 12 layers and 12 heads per layer.  We use the base model for \textit{Toto 1.0}\footnote{\scriptsize{\url{https://huggingface.co/Datadog/Toto-Open-Base-1.0}}} (151M parameters), which has 12 layers and 12 heads per layer. And we use the base model for \textit{Moirai 1.1}\footnote{\scriptsize{\url{https://huggingface.co/Salesforce/moirai-1.1-R-base}}} (91M parameters), which also has 12 layers and 12 heads per layer. With the exception of \textit{Chronos}, \textit{Chronos Bolt}, and \textit{Chronos 2}, all the aforementioned TSFMs are zero-shot on GIFT-Eval (not pretrained), with no test data leakage. The only publicly available checkpoints we could find for \textit{Chronos} and \textit{Chronos Bolt} were pretrained on GIFT-Eval.

\subsection{Residual Stream}
\label{section:residual_stream_appendix}
The residual stream propagates information across all layers of the model. A simple transformer block with layer norm, self attention, and an MLP module, $\text{LN}(\cdot), \text{SA}(\cdot)$ and $\text{MLP}(\cdot)$, respectively, would have an update rule like the following:
\begin{align*}
    \tilde{H}^{(\ell)} &= \text{LN}(H^{(\ell)}) \\
    H^{(\ell)}_\text{SA} &= H^{(\ell)} + \text{SA}(\tilde{H}^{(\ell)}) \\ 
    \tilde{H}^{(\ell)}_\text{SA} &= \text{LN}(H^{(\ell)}_\text{SA}) \\
    H^{(\ell+1)} &= H^{(\ell)} + \text{MLP}(\tilde{H}^{(\ell)}_\text{SA}),
\end{align*}
where $H$ would be the residual stream. The residual stream is initialized with the embedding of the data, $H^{(0)} = \text{Embed}(X)$, and the output is produced from the final $H^{(L)}$, where $L$ is the number of layers in the model.

\subsection{Design Space}
\label{section:design_space_tsfms_appendix}
\textbf{Architecture:}
While Chronos and Chronos Bolt are encoder-decoder models based on T5, TimesFM 2.5 and Toto are both decoder-only, and Moirai is a masked encoder model.

\textbf{Patching:}
With the exception of Chronos, all these models are patch-based. The patch-based models use non-overlapping patches of contiguous time points. Moirai provides the functionality to use multiple patch sizes via patch-based projections, to adapt larger patch sizes for high-frequency data, and smaller patch sizes for low-frequency data.

\textbf{Probabilistic Forecasts:}
Several models: Chronos, Toto, and Moirai also provide probabilistic forecasts, with a \texttt{num\_samples} option. Meanwhile, Chronos Bolt and TimesFM 2.5 provide 9 quantile forecasts, ranging from the 10th to the 90th percentiles. And Chronos 2 outputs 21 quantiles, uniformly between 0.01 and 0.99.

\textbf{Multivariate Forecasts:}
Toto, Moirai, and Chronos 2 provide the capability for multivariate forecasts. The group attention mechanism of Chronos 2 facilitates in-context learning across multiple univariate timeseries in a group of covariates \cite{faw2025incontext} i.e. "cross-learning" mode. We enable cross-learning for our evaluations, following the default settings.

\subsection{Evaluation Settings}

\begin{table*}[htbp]
  \centering
  \caption{Evaluation Settings for TSFMs in our study.}
  \label{tab:eval_settings_tsfms}
  \renewcommand{\arraystretch}{1.0}  
  \setlength{\tabcolsep}{8pt}        
    \scalebox{0.8}{%
      \begin{tabular}{p{3.0cm} p{2.0cm} p{2.0cm} p{2.0cm} p{2.0cm} p{2.0cm} p{2.0cm}}
        \toprule
        Setting & \textbf{TimesFM 2.5} & \textbf{Chronos 2} & \textbf{Chronos Bolt} & \textbf{Chronos} & \textbf{Toto} & \textbf{Moirai 1.1} \\
        \midrule
        Context Length & 2048 & 8192 & 512 & 512 & 4096 & 4000 \\
        \midrule
        Num Samples & 9 quantiles & 9 quantiles & 9 quantiles & 20 & 20 & 100 \\
        \midrule
        Patch Length & 32 & 16 & 16 & 1 & 32 & 32 \\
        \bottomrule
      \end{tabular}
    }
\end{table*}

Note that several settings deviate from the default settings in the respective codebases, in the interest of reducing the computational time for our extensive ablations experiments. For instance, \textit{TimesFM 2.5} has a maximum context length of 16,384 timepoints; \textit{Chronos 2} can output 21 quantiles; \textit{Toto} defaults to 256 samples; and so on.

%% file: sections/appendix/appendix_zero_ablation.tex
\section{Ablations Methodology} \label{section:ablations_methodology_appendix}

Throughout this paper, we extensively study ablations as our primary tool to understand the importance of individual modules within models. Specifically, we consider zero-ablations which, for some module $f(\cdot)$ that reads from and subsequently adds its output onto the residual stream:
$$\tilde{h} = h + f(h),$$
where $h \in \mathbb{R}^{T \times \text{d\_model} }$ is the residual stream at some position for a sequence of $T$ tokens.

Zero-ablation removes the contribution of the $f$ module from the entire model by fixing $f(h) = 0$, so that the update effectively becomes
$$\tilde{h} = h + f(h) = h.$$

By studying model outputs under zero-ablations for individual or groups of modules, we see how critical they were to the model's computation. For example, if the model's output drastically changes under the ablation of an MLP in the last layer, but less so under the ablation of an MLP in the middle layers of a model, we can infer that the final-layer MLP is more critical to the model's prediction--whereas the middle-layer MLP is more redundant.

%% file: sections/appendix/appendix_more_ablation_experiments.tex
\section{More Ablation Experiments} \label{section:more_ablation_results}
In Section \ref{section:ablations}, we presented key findings from our ablation experiments. Here, we provide more results from our experiments. As before, we conduct large-scale evaluations on the GIFT-Eval test set \cite{aksu2024gifteval}, in addition to the \texttt{dysts} benchmark of chaotic coupled ODE systems \cite{dysts2025} as synthetic data excluded from standard TSFM pretraining.

See Appendix \ref{subsection:additional_results_chronos2} for results on \textit{Chronos 2}, which we defer for the sake of presentation, due to formatting constraints. Also, due to its significantly longer inference time, our ablation experiments on \textit{Chronos} utilize a smaller benchmark of scientific data in the form of low-dimensional chaotic systems of coupled ordinary differential equations (ODEs) \cite{dysts2025}.

\subsection{Ablating Entire Layers}
In Fig. \ref{fig:ablations_spearman}, we presented the Spearman distance between the original model predictions and the predictions under ablations, when ablating entire layers (i.e. all heads and the MLP), or groups of layers. We now present a complementary result in Fig. \ref{fig:ablations_spearman_combined_heads}, in which we \textit{only ablate the heads} of all layers, and leave the MLPs untouched. We still ablate all heads per layer. We note that the interplay between the attention heads and MLPs motivates further investigation via interpretability methods. For instance, when comparing Fig. \ref{fig:ablations_spearman}  with Fig. \ref{fig:ablations_spearman_combined_heads}, we observe that some TSFMs seem to have important MLPs in the later layers that play a crucial role in maintaining the structure of the forecasts. Indeed, Fig. \ref{fig:ablations_spearman_combined_mlps} also supports this observation.

\begin{figure}[htbp]
    \centering
    \includegraphics[width=1.0\linewidth]{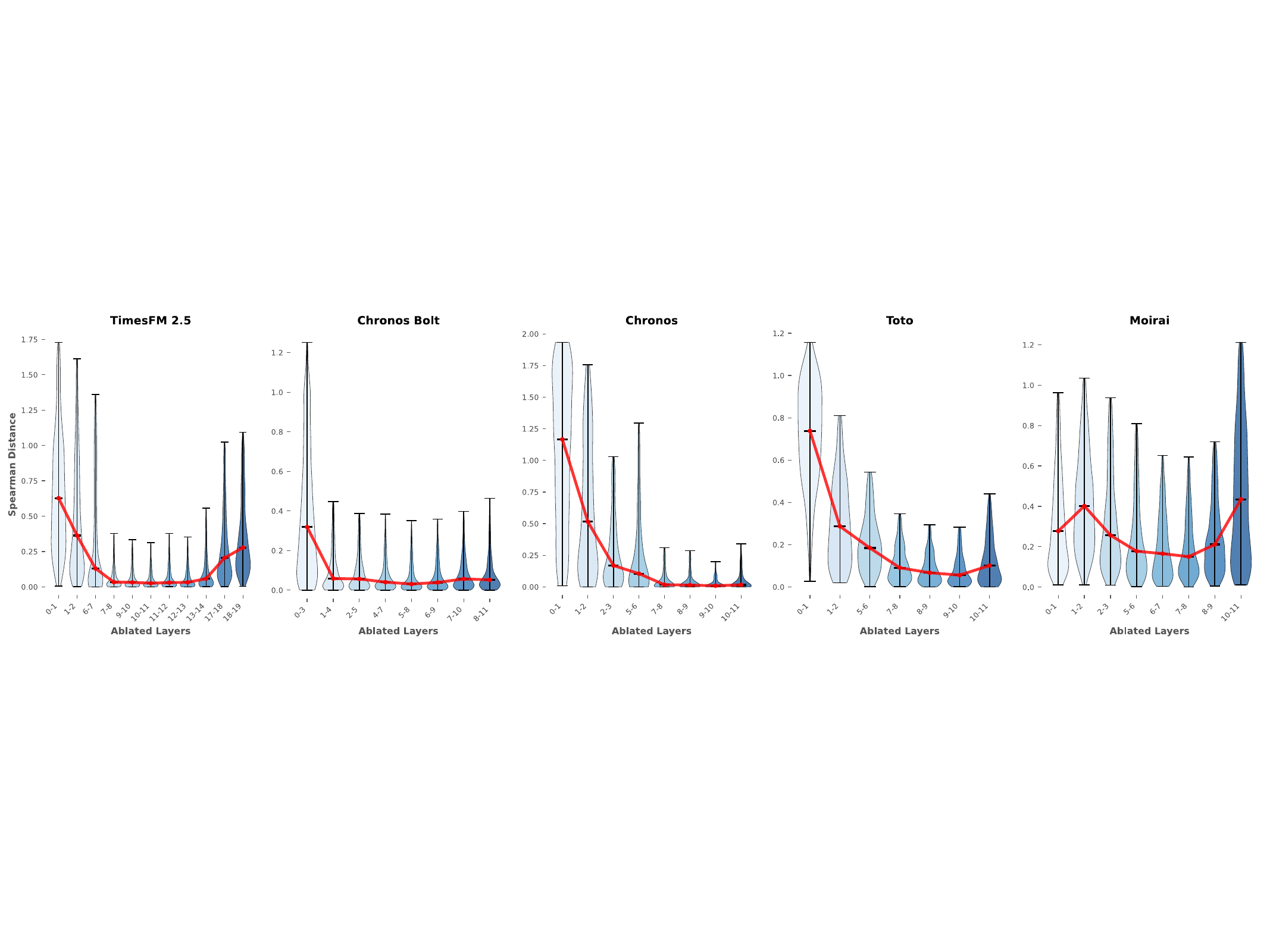}
    \caption{\textbf{Role of Layer Depth in Determining Head Importance}. In contrast to the setting of Fig. \ref{fig:ablations_spearman}, where we ablated the entire layers, we now ablate all heads per layer but keep the MLP untouched. We still observe a depthwise variation in layer importance. This setting is relevant for our ablations of individual heads, presented in Appendix \ref{subsection:more_headlevel_ablations}, in which we also keep the MLPs for each layer.}
    \label{fig:ablations_spearman_combined_heads}
\end{figure} 

\textbf{Spearman Distance}:
We use Spearman Distance $1 - \rho_s(x, y)$ to capture monotonic relationships.
\[
\rho_s(x,y) \;=\;
\begin{cases}
0, & \text{if } \operatorname{Var}(x)=0 \text{ or } \operatorname{Var}(y)=0, \\[6pt]
\operatorname{corr}\,\!\bigl(\operatorname{rank}(x), \operatorname{rank}(y)\bigr),
& \text{otherwise},
\end{cases}
\]
Specifically, we utilize \texttt{scipy.stats.spearmanr}, for the rank-based Spearman correlation:
\[
\operatorname{corr}(a,b)
= \frac{\sum_{t=1}^{T} (a_t-\bar a)(b_t-\bar b)}
{\sqrt{\sum_{t=1}^{T} (a_t-\bar a)^2}
 \sqrt{\sum_{t=1}^{T} (b_t-\bar b)^2}}.
\]
Here, the rank refers to replacing each value with its position in the sorted data (e.g. smallest value gets rank 1, largest value gets rank $T$, and tied values are given the average of the ranks they would occupy). In cases of multivariate forecasts i.e. $x, \, y \in R^{T \times D}$, we return the per-dimension average, over the dimension axis D.

In particular, for the results shown in Figs. \ref{fig:ablations_spearman}, \ref{fig:ablations_spearman_combined_heads}, and \ref{fig:ablations_spearman_combined_mlps}, we compute the Spearman distance for $\approx 2000$ distinct context windows, taken from the GIFT-Eval test set. To ensure that no single dataset dominates this evaluation, we randomly sampled 10 instances from each split i.e. from each combination of dataset name, frequency, and term. Notably, the GIFT-Eval benchmark contains $\approx 1.44 \times 10^{5}$ time series, with up to 21 target variates (dimensions) for some datasets, and the series length for each context window varies considerably across data splits, ranging from 19 to 140,256 (although we set a maximum context length for each model). By conducting our evaluations on GIFT-Eval, we thus ensure that no single frequency/timescale or context length dominates the metrics. The preponderance of short-term predictions with short context windows and short prediction horizons prompted us to roll out the predictions and compute the Spearman distance on the first 64 time points of each prediction. 

\begin{figure}[htbp]
    \centering
    \includegraphics[width=1.0\linewidth]{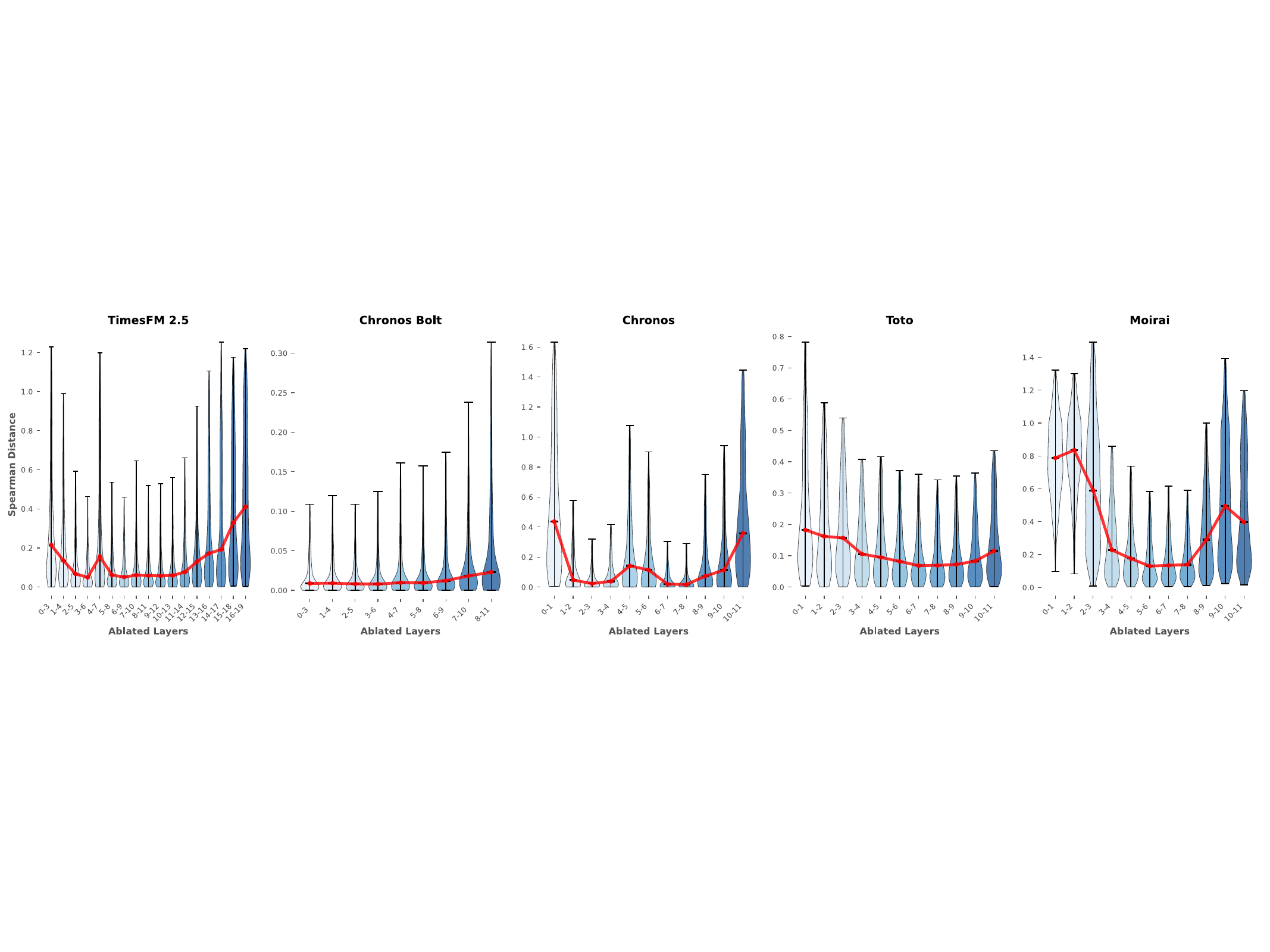}
    \caption{For completeness, we now ablate the MLP for each layer. We still observe a depthwise variation in layer importance. Note again, as with Figs. \ref{fig:ablations_spearman} and \ref{fig:ablations_spearman_combined_heads}, we present the results for some models with multiple consecutive layers under ablation. Also note the scale of the result for Chronos Bolt; the extreme ablateability of the MLPs in Chronos Bolt further agrees with our finding in Fig. \ref{fig:components_comparison_bolt_toto}.}
    \label{fig:ablations_spearman_combined_mlps}
\end{figure} 

\newpage
\subsection{Ablating Individual Heads} \label{subsection:more_headlevel_ablations}
As a supplement to our results in Section \ref{section:srank_ablations}, we present additional findings from our head-level ablations, where we keep the MLPs for each layer but gradually increase the number of heads ablated, following our intrinsic head pruning strategy. 

\begin{figure}[htbp]
    \centering
    \includegraphics[width=1.0\linewidth]{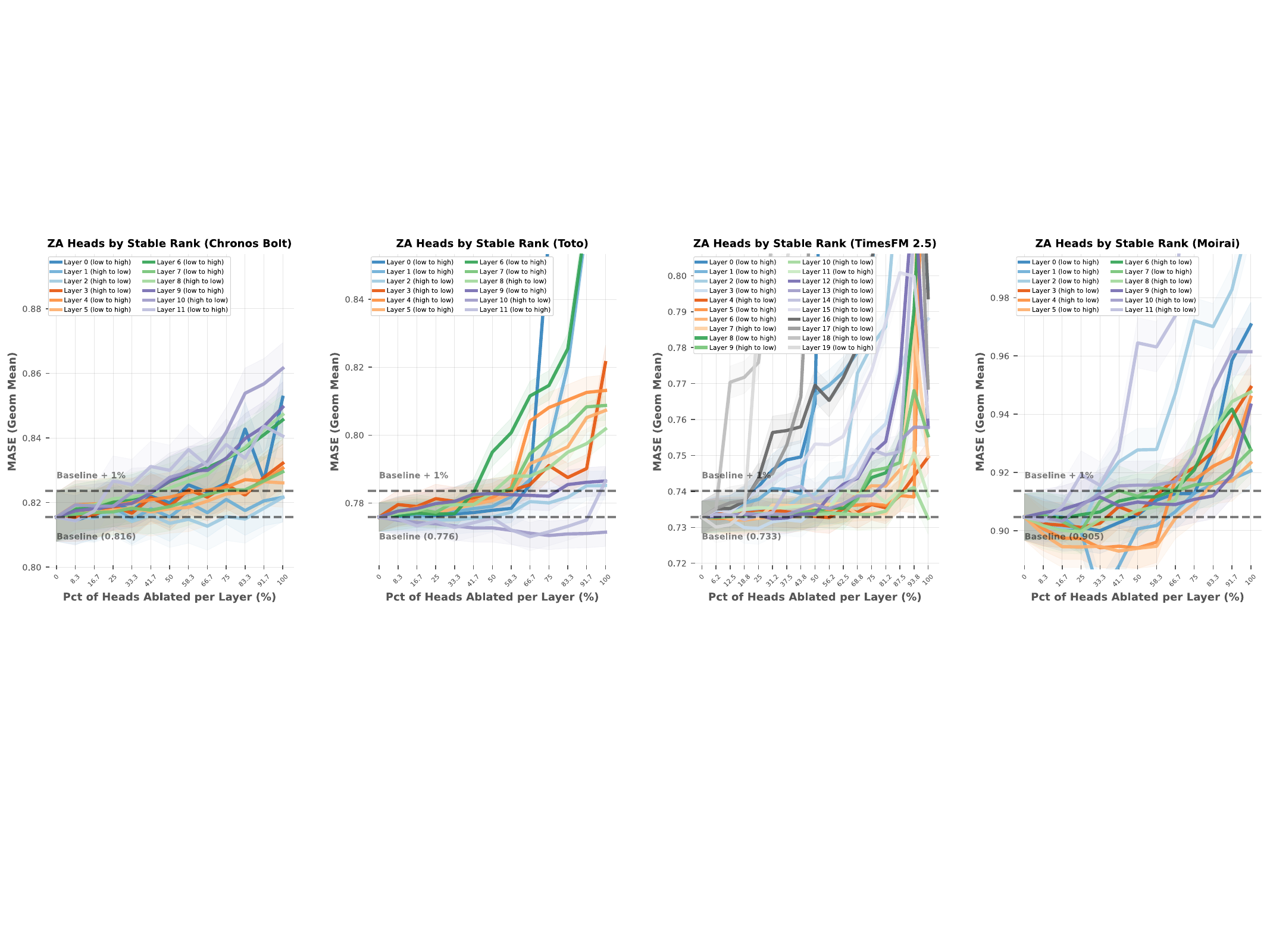}
    \caption{\textbf{Surprising Effectiveness of an Intrinsic Strategy for Ablating Heads}: We zero-ablate heads by order of the stable rank of the query-key projection matrix product. Results are shown for Chronos Bolt, Toto, TimesFM 2.5, and Moirai 1.1, in separate panels from left to right. The legend for each panel marks the ordering i.e. ablate in order of high to low, versus low to high stable rank, that achieves the lowest \texttt{heads@1pp}. We evaluate on the GIFT-Eval test set \cite{aksu2024gifteval}.}
    \label{fig:srank_ablations_combined_allmodels}
\end{figure} 

As elaborated upon in Section \ref{section:srank_ablations}, we observe that in many layers, only a small set of important heads are necessary to maintain the model's performance. In this sense, our findings are consistent with previous findings in the literature \cite{mcgrath2023hydraeffectemergentselfrepair}, which have explored the same phenomenon within LLMs. 

\begin{figure}[H]
    \centering
    \includegraphics[width=1.0\linewidth]{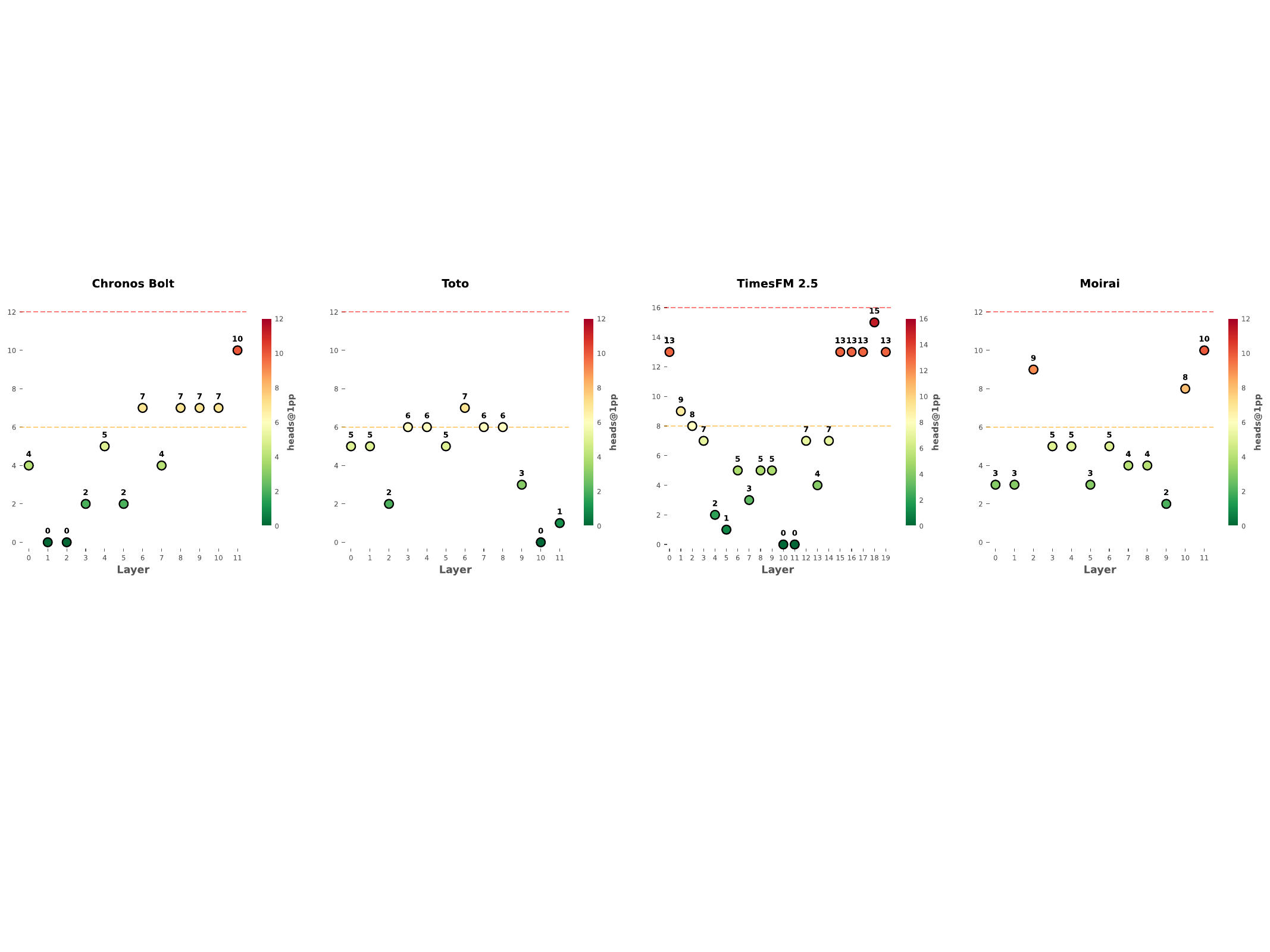}
    \caption{\textbf{Ablateable Layers Require the Few Heads}: To further elucidate the role of layer depth in head ablateability (Fig. \ref{fig:ablations_spearman_combined_heads}), we more succinctly present our findings from Fig. \ref{fig:srank_ablations_combined_allmodels} via our \texttt{heads@1pp} metric (Definition \ref{definition:heads@1pp}). Together with Fig. \ref{fig:srank_ablations_combined_allmodels}, our observations suggest that the layers that are most ablateable (while keeping the MLP) also require the fewest number of heads.}
    \label{fig:heads1pp_srank_ablations_combined_allmodels}
\end{figure} 

\newpage
\subsection{Intrinsic Head Ablation Strategy Preserves the Seasonality and Essential Structure of Forecasts} \label{subsection:preserving_essential_structure}
\textbf{TimesFM 2.5}: In Fig. \ref{fig:timesfm2p5_random_ablate_middle_examples} and Fig. \ref{fig:timesfm2p5_ablate_alllayers_seasonality_versus_random} we demonstrate the advantage of our intrinsic head pruning strategy over ablating randomly-selected heads. All forecasts presented use context length 2048, consistent with our TimesFM 2.5 evaluation setting, but we only show the last 512 timepoints of the context for conciseness.

\begin{figure}[htbp]
    \centering
    \includegraphics[width=0.75\linewidth]{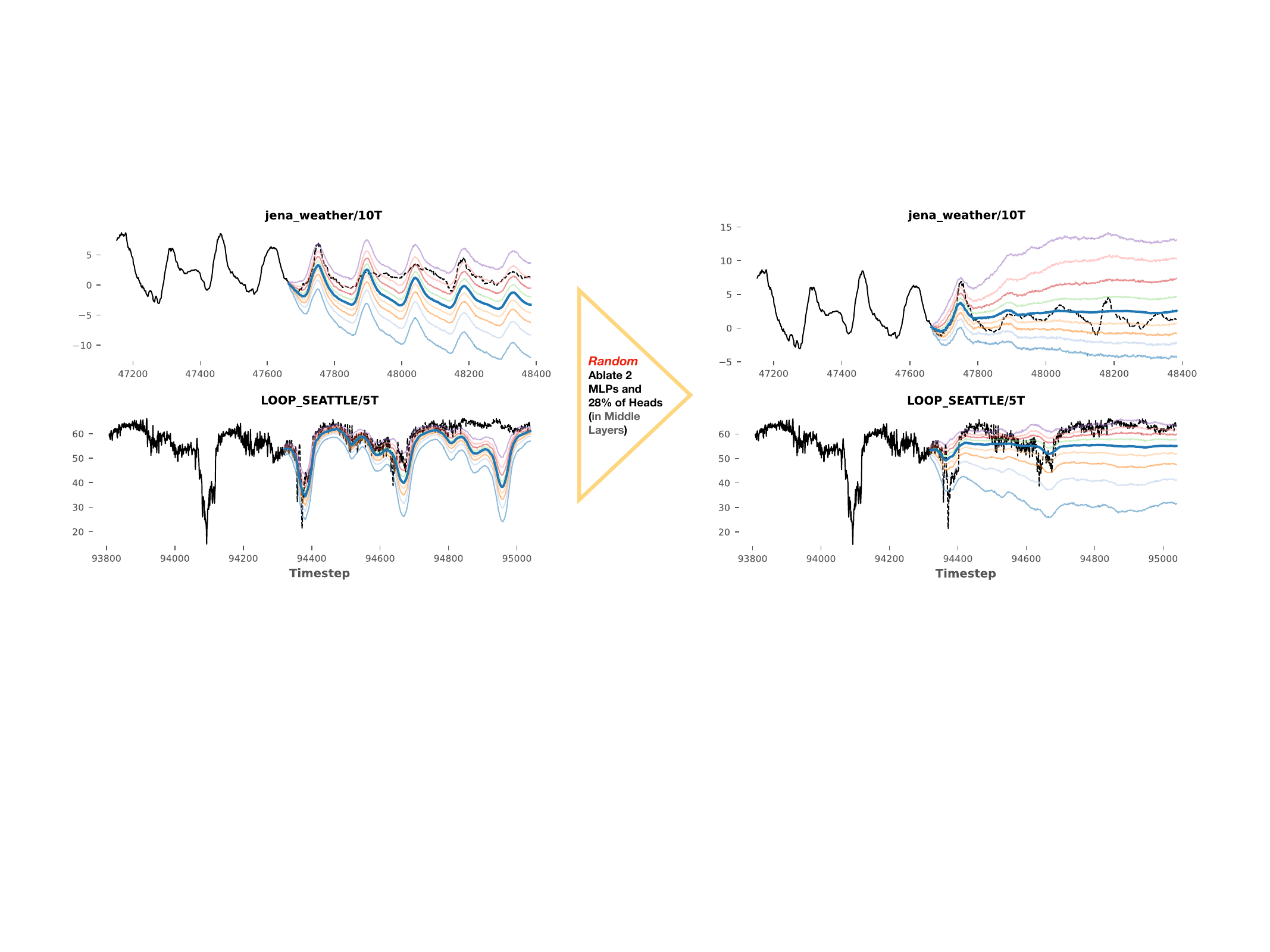}
    \caption{\textbf{Random Head Ablations Destroy the Forecast Structure}: As a companion to Fig. \ref{fig:timesfm2p5_srank_combined}, we show that ablating random heads can have disastrous results on the forecasts. As in Fig. \ref{fig:timesfm2p5_srank_combined}C, we ablate the middle layers of TimesFM 2.5 (layers 7-13 inclusive) down to \texttt{heads@1pp}, but with heads chosen randomly instead of via our intrinsic stable rank strategy. Fig. \ref{fig:timesfm2p5_ablate_alllayers_seasonality_versus_random} presents more examples.}
    \label{fig:timesfm2p5_random_ablate_middle_examples}
\end{figure} 

As we describe in Section \ref{subsection:design_space_failure_modes}, seasonality is an important inductive bias and common behavior exhibited by the leading TSFMs. Fig. \ref{fig:timesfm2p5_ablate_alllayers_seasonality_versus_random} we focus on highly seasonal medium and long-term forecasting tasks from the GIFT-Eval test set.

\begin{figure}[H]
    \centering
    \includegraphics[width=0.9\linewidth]{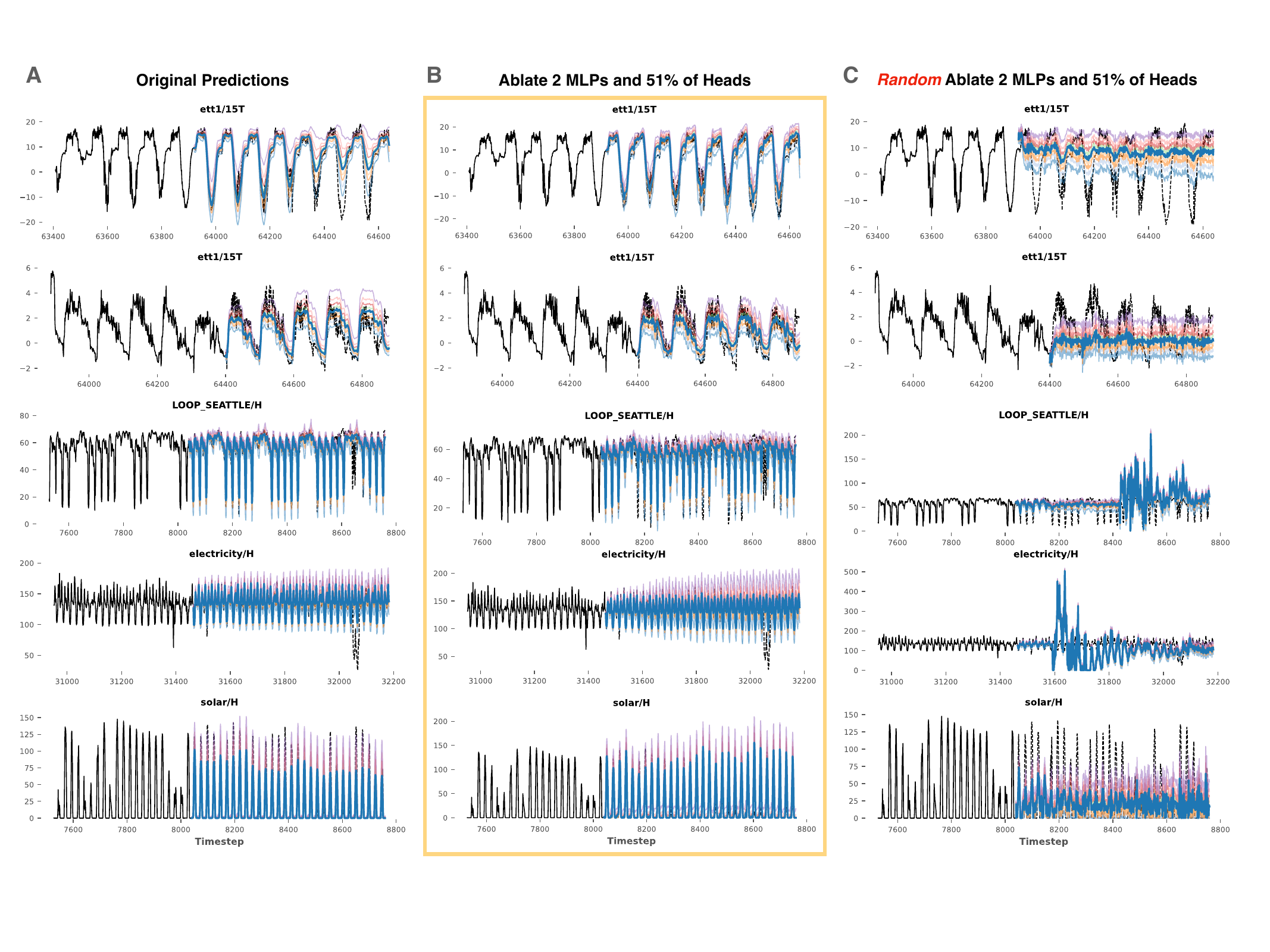}
    \caption{\textbf{Seasonality is Highly Resilient to Informed Head Ablations}: Even when ablating heads from all layers, amounting to $51 \%$ of the total, we observe that highly seasonal forecasts retain their seasonality. We ablate the MLPs of layers 10 and 11, and the heads of the middle layers (layers 7-13 inclusive) down to \texttt{heads@1pp} (as done for Fig. \ref{fig:timesfm2p5_srank_combined} and Fig. \ref{fig:timesfm2p5_random_ablate_middle_examples}); Moreover, we ablate all other layers to their corresponding (1 + \texttt{heads@1pp}). We do this because we observe that ablating the early and late layers to their respective \texttt{heads@1pp} is too aggressive. We present \textbf{(A)} the original forecasts; \textbf{(B)} the forecasts using our intrinsic head ablation strategy described in Section \ref{section:srank_ablations}; and \textbf{(C)} the forecasts after ablating the same MLPs, but randomly-selected heads, illustrating the advantage of our approach.}
    \label{fig:timesfm2p5_ablate_alllayers_seasonality_versus_random}
\end{figure} 

\newpage
\textbf{Toto}:
\begin{figure}[htbp]
    \centering
    \includegraphics[width=0.75\linewidth]{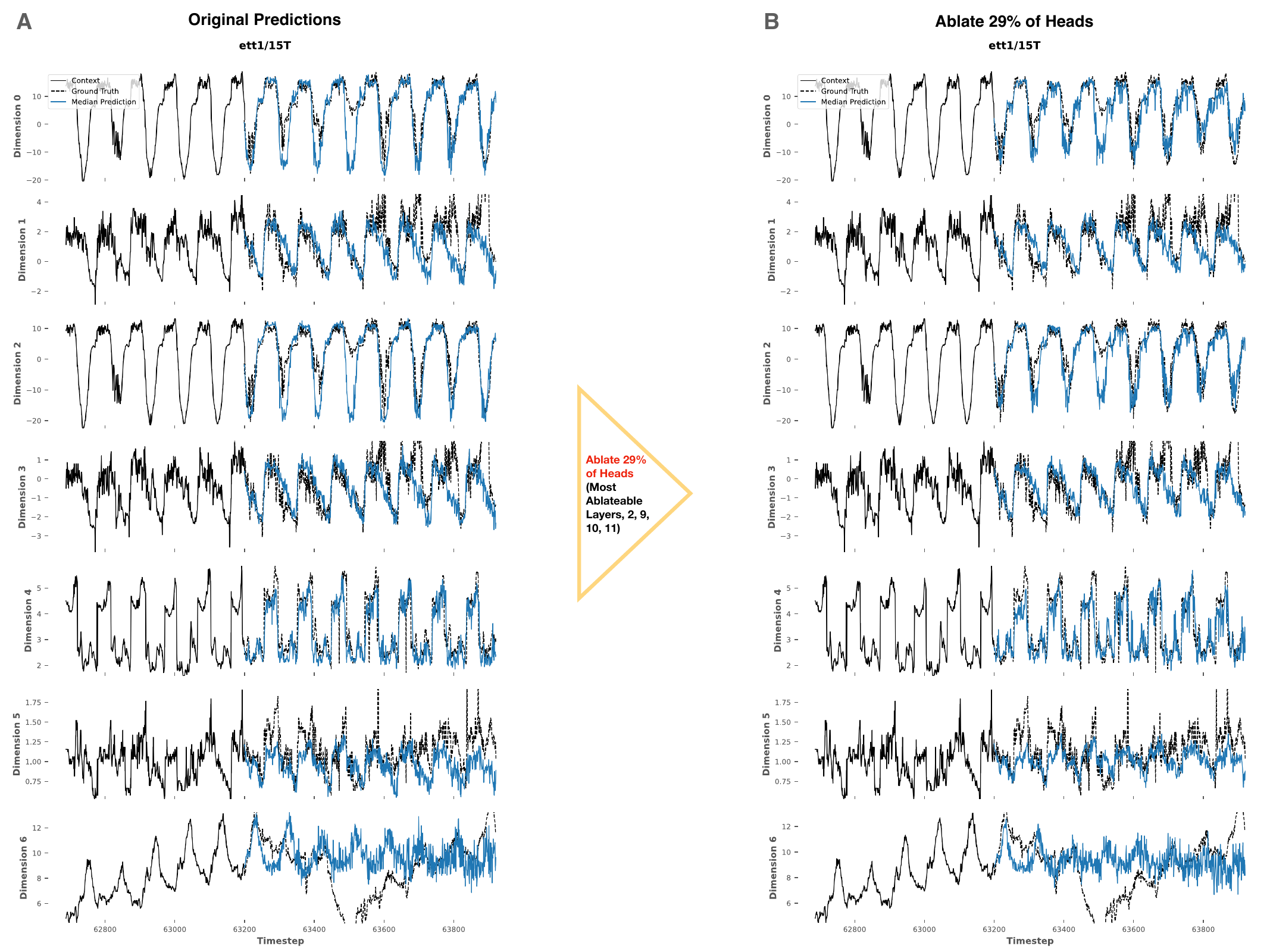}
    \caption{For an example \textit{multivariate} forecasting task using \textit{Toto} \textbf{(A)}, we ablate $29\%$ of all heads by taking the most ablateable layers to their corresponding \texttt{heads@1pp} \textbf{(B)}. We identified these layers (Layers 2, 9, 10, 11) through our ablations of entire layers (Fig. \ref{fig:ablations_spearman_combined_heads}).}
    \label{fig:toto_ablations_example_most_ablateable}
    \vskip -0.2in
\end{figure} 

\textbf{Chronos Bolt}:
\begin{figure}[H]
    \centering
    \includegraphics[width=0.75\linewidth]{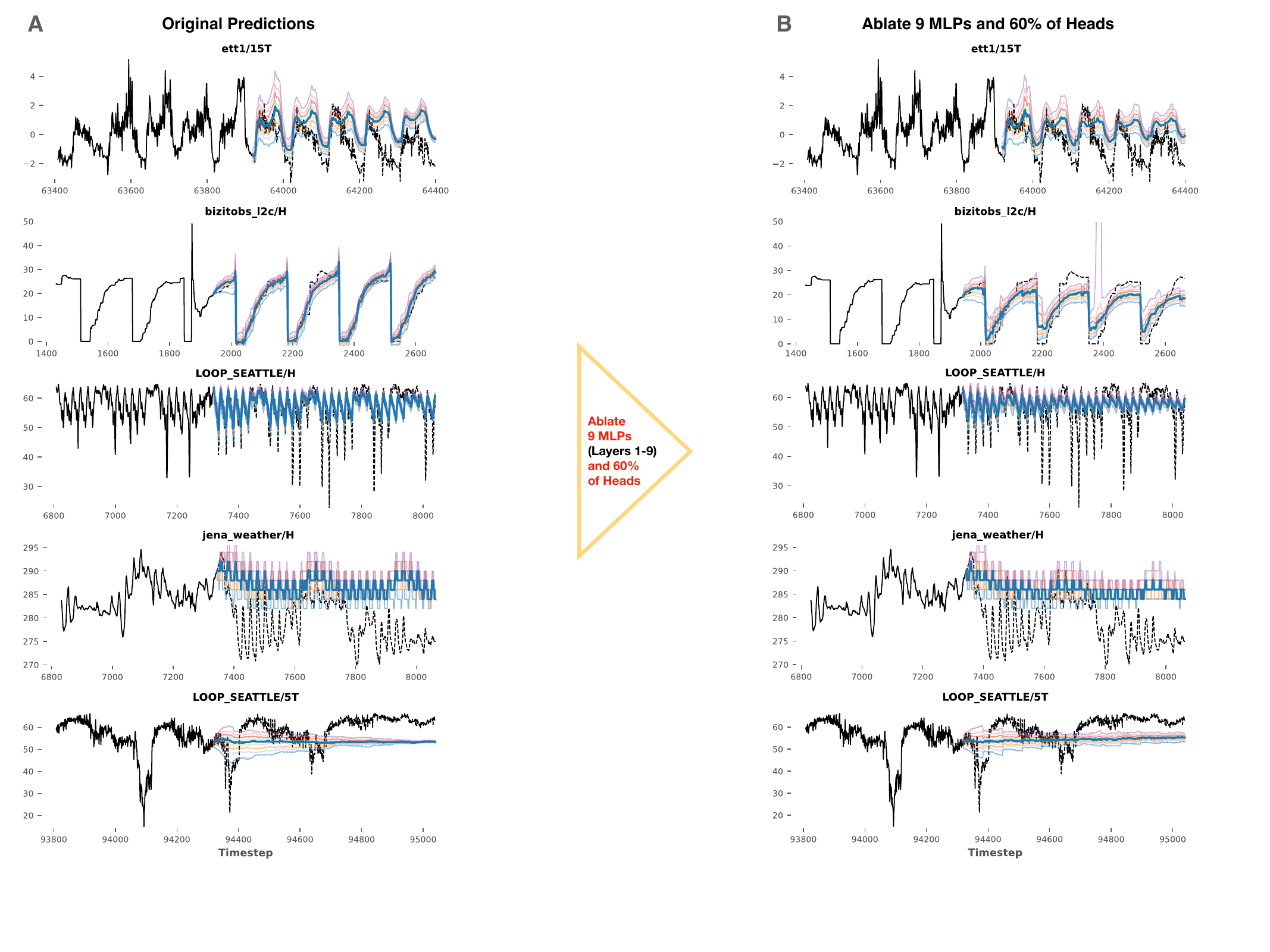}
    \caption{For several forecasting tasks using \textit{Chronos Bolt} \textbf{(A)}, we ablate 9 MLPs (in Layers 1-9 inclusive) and $60\%$ of all heads by taking Layers 0-10 (inclusive) to their \texttt{heads@1pp} \textbf{(B)}. Our experiments suggest Chronos Bolt is particularly resilient to ablations.}
    \label{fig:ablations_examples_chronos_bolt}
    \vskip -0.2in
\end{figure} 

\textbf{Chronos 2}:
\begin{figure}[H]
    \centering
    \includegraphics[width=1.0\linewidth]{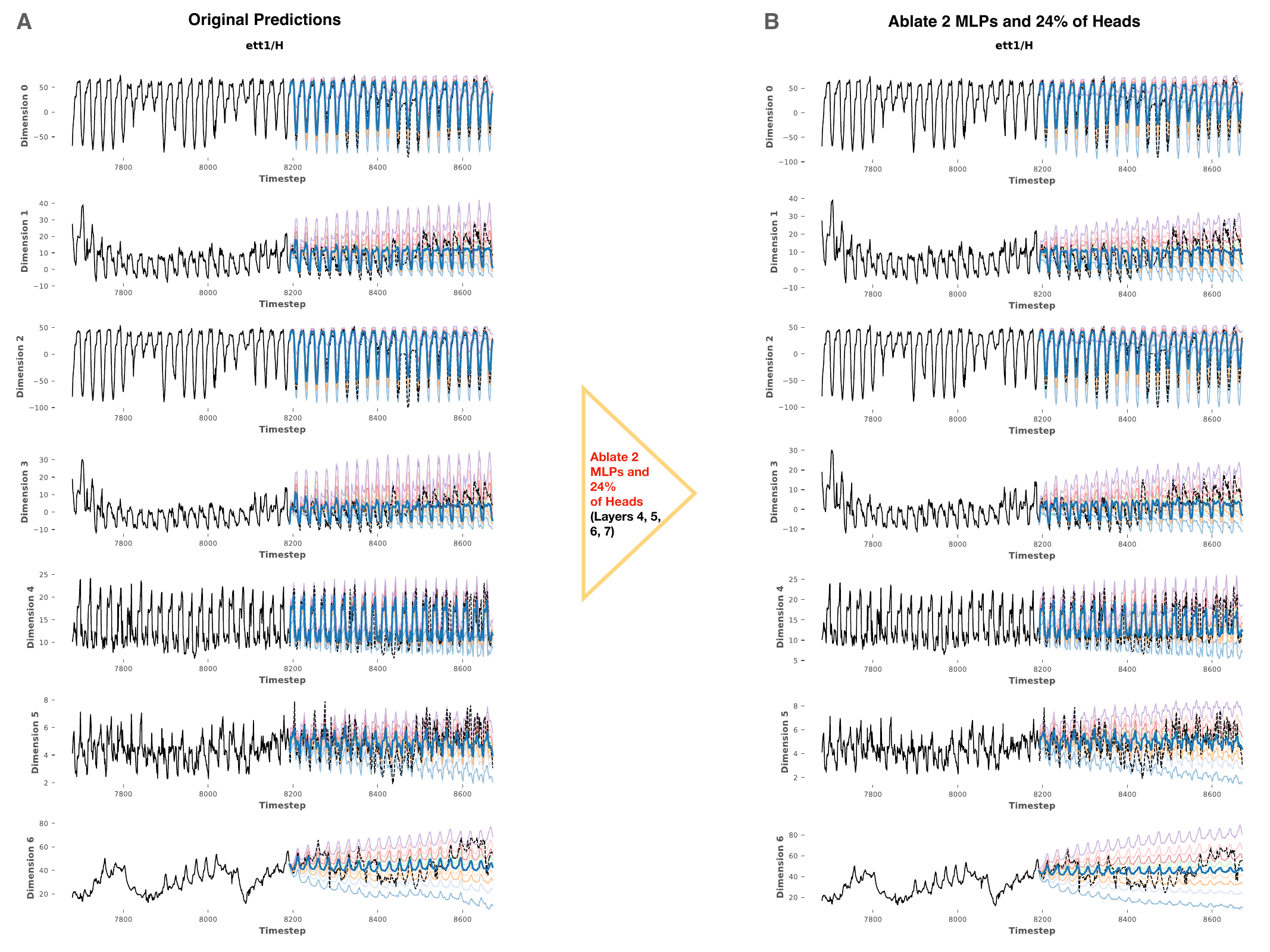}
    \caption{For an example multivariate forecasting task using \textit{Chronos 2} \textbf{(A)}, we ablate 2 MLPs (in Layers 4 and 5) and $24\%$ of all heads (in Layers 4, 5, 6, 7).}
    \label{fig:ablations_examples_chronos2}
    \vskip -0.2in
\end{figure} 

\newpage
\subsection{Performance on Individual Datasets} \label{subsection:dataset_view_appendix}
Our ablation evaluations on GIFT-Eval are not dominated by any single dataset, timescale, or forecast length. To demonstrate this point, we examine the performance of various TSFMs on each of the 97 datasets in the GIFT-Eval test set under ablations of an increasing number of heads per layer, following our head-level ablation strategy in Section \ref{section:srank_ablations}. Figs. \ref{fig:dataset_view_timesfm2p5}, \ref{fig:dataset_view_toto}, \ref{fig:dataset_view_chronos_bolt}, \ref{fig:dataset_view_chronos2}, and \ref{fig:dataset_view_moirai} present the per-dataset performance for \textit{TimesFM 2.5}, \textit{Toto}, \textit{Chronos Bolt}, \textit{Chronos 2}, and \textit{Moirai 1.1} respectively.

For each dataset, we \textbf{bold} the value at the highest number of heads ablated for which the performance under ablations is within $1\%$ of the original model. For each model, we show this dataset-level view for an important layer (i.e. first or last layer) versus for the most ablateable layer of that model.

Specifically, we present the per-dataset performance as heatmaps colored by the percent change in MASE on each dataset, at each ablation level (number of heads ablated) for the specified layer. For the sake of presentation, we cap the maximum decrease (performance improvement) and increase (performance degradation) at $-25\%$ and $25\%$ change in MASE, respectively. In other words, \textcolor{PineGreen}{Green} denotes improvement, whereas \textcolor{BrickRed}{Red} denotes degradation in the ablated model's performance. The numerical values overlayed on each cell are the actual values of the MASE metric, rounded to two decimal places.

As we can see with particular clarity in Fig. \ref{fig:dataset_view_timesfm2p5}, even the ablation of all heads in Layer 10 of \textit{TimesFM 2.5} preserves the MASE for nearly every dataset, whereas the ablation of Layer 0 introduces catastrophic performance degradation ($\ge 25\%$ increase in MASE) across almost all the datasets. \textit{TimesFM 2.5} is both the deepest and widest model evaluate (as well as the best-performing), with 20 layers and 16 heads per layer.

We note that under ablations of heads, the performance of \textit{Moirai 1.1} on GIFT-Eval appears to be dominated by a few datasets, which significantly impact the metrics (Fig. \ref{fig:dataset_view_moirai}).

\begin{figure}[htbp]
    \centering
    \includegraphics[width=0.9\linewidth]{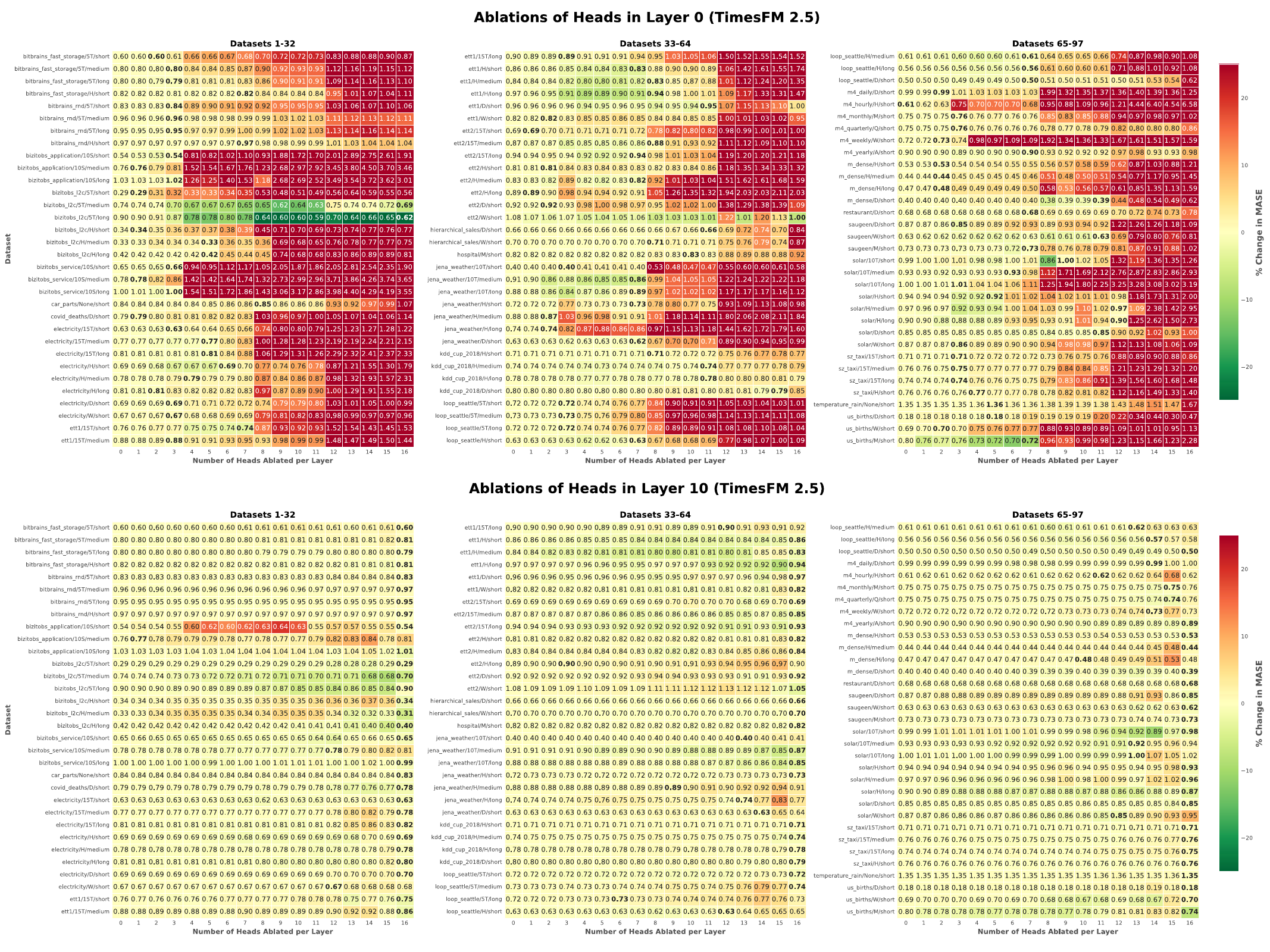}
    \caption{Per-dataset performance of \textit{TimesFM 2.5} under increasing head ablations on Layer 0 \textbf{(Top)} versus Layer 10 \textbf{(Bottom)}.}
    \label{fig:dataset_view_timesfm2p5}.
\end{figure} 

\begin{figure}[htbp]
    \centering
    \includegraphics[width=0.8\linewidth]{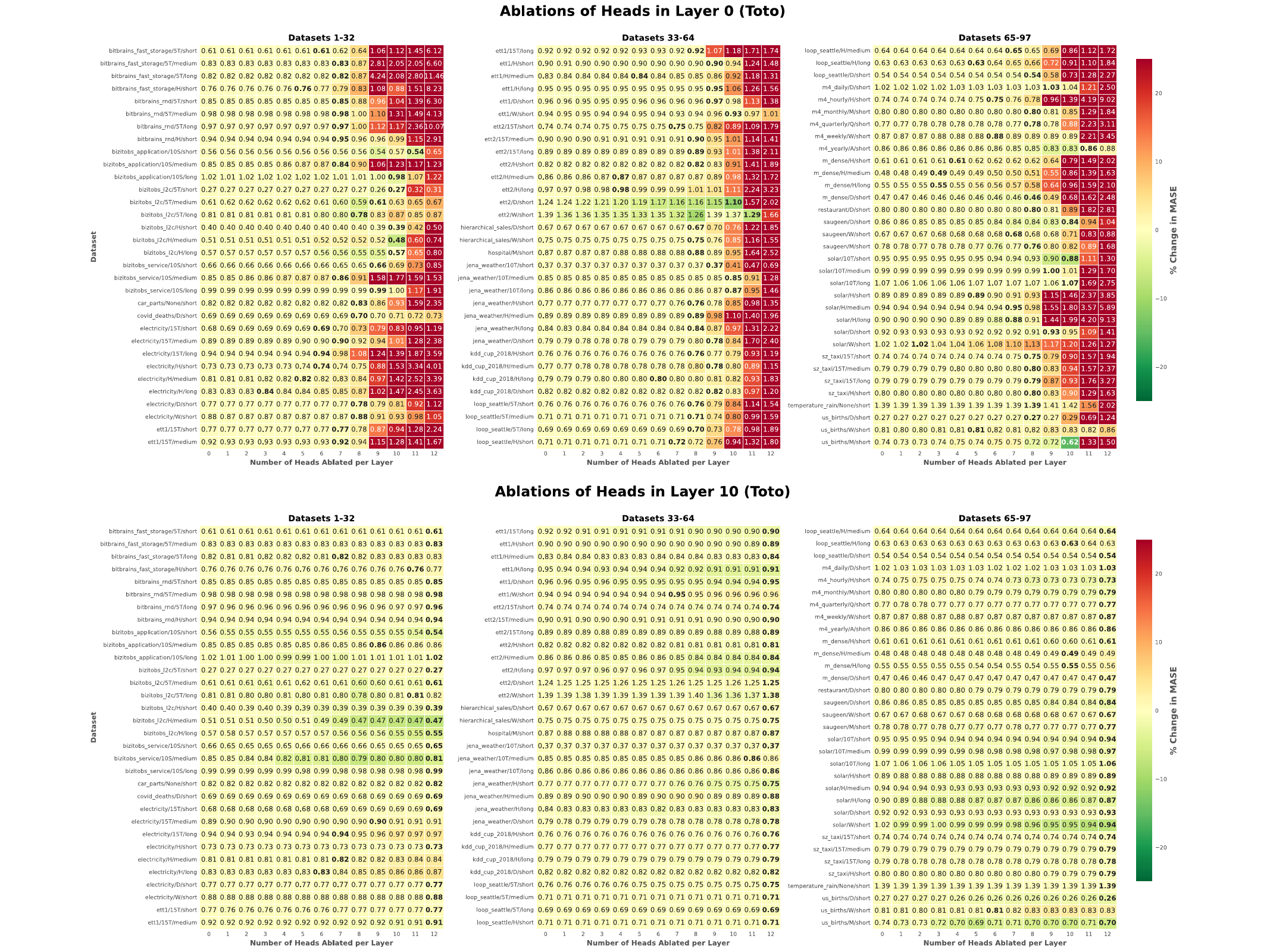}
    \caption{Per-dataset performance of \textit{Toto} under increasing head ablations on Layer 0 \textbf{(Top)} versus Layer 10 \textbf{(Bottom)}.}
    \label{fig:dataset_view_toto}.
\end{figure} 

\begin{figure}[htbp]
    \centering
    \includegraphics[width=0.8\linewidth]{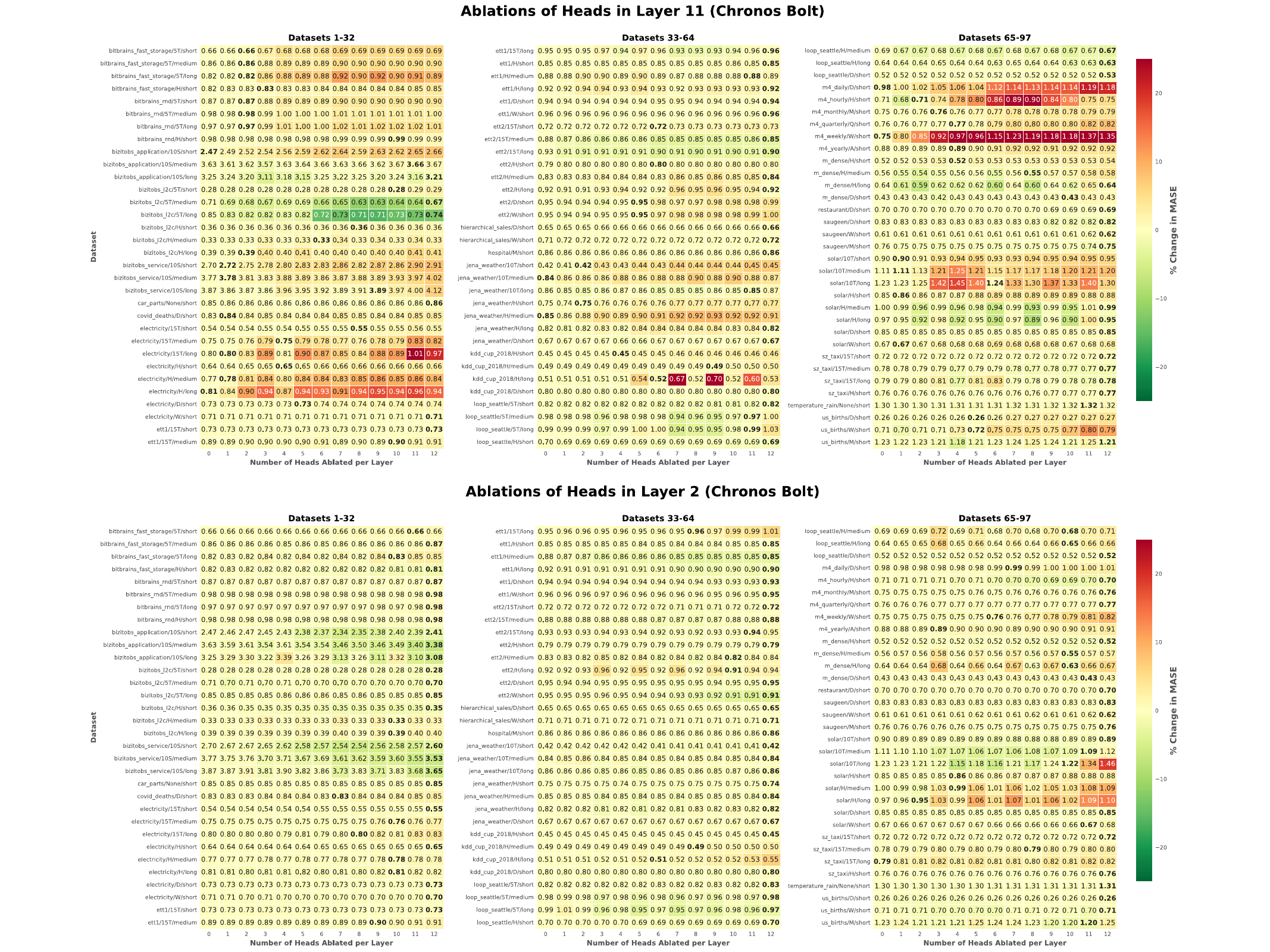}
    \caption{Per-dataset performance of \textit{Chronos Bolt} under increasing head ablations on Layer 11 \textbf{(Top)} versus Layer 2 \textbf{(Bottom)}.}
    \label{fig:dataset_view_chronos_bolt}.
\end{figure} 

\begin{figure}[htbp]
    \centering
    \includegraphics[width=0.8\linewidth]{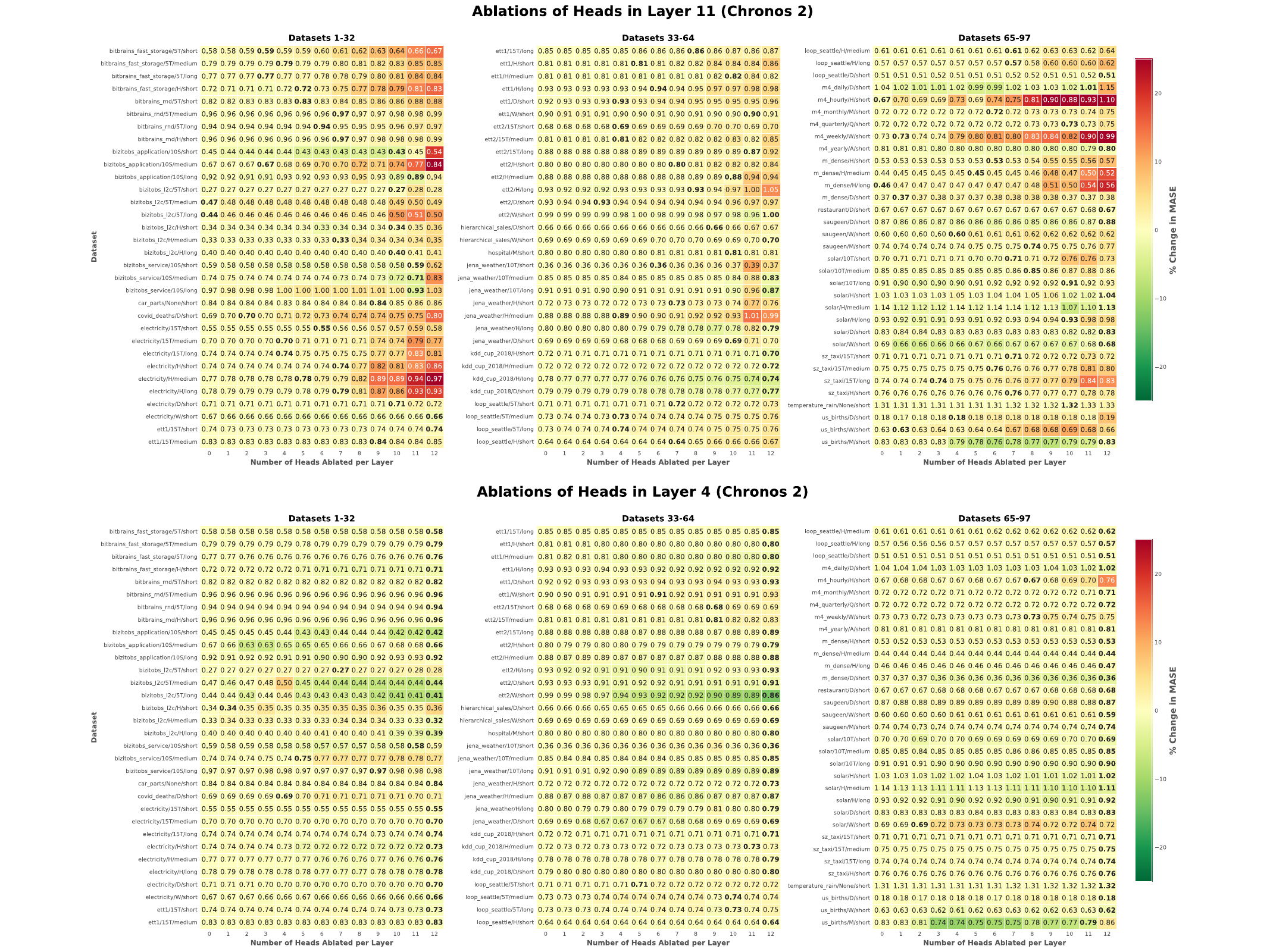}
    \caption{Per-dataset performance of \textit{Chronos 2} under increasing head ablations on Layer 11 \textbf{(Top)} versus Layer 4 \textbf{(Bottom)}.}
    \label{fig:dataset_view_chronos2}.
\end{figure} 

\begin{figure}[htbp]
    \centering
    \includegraphics[width=0.8\linewidth]{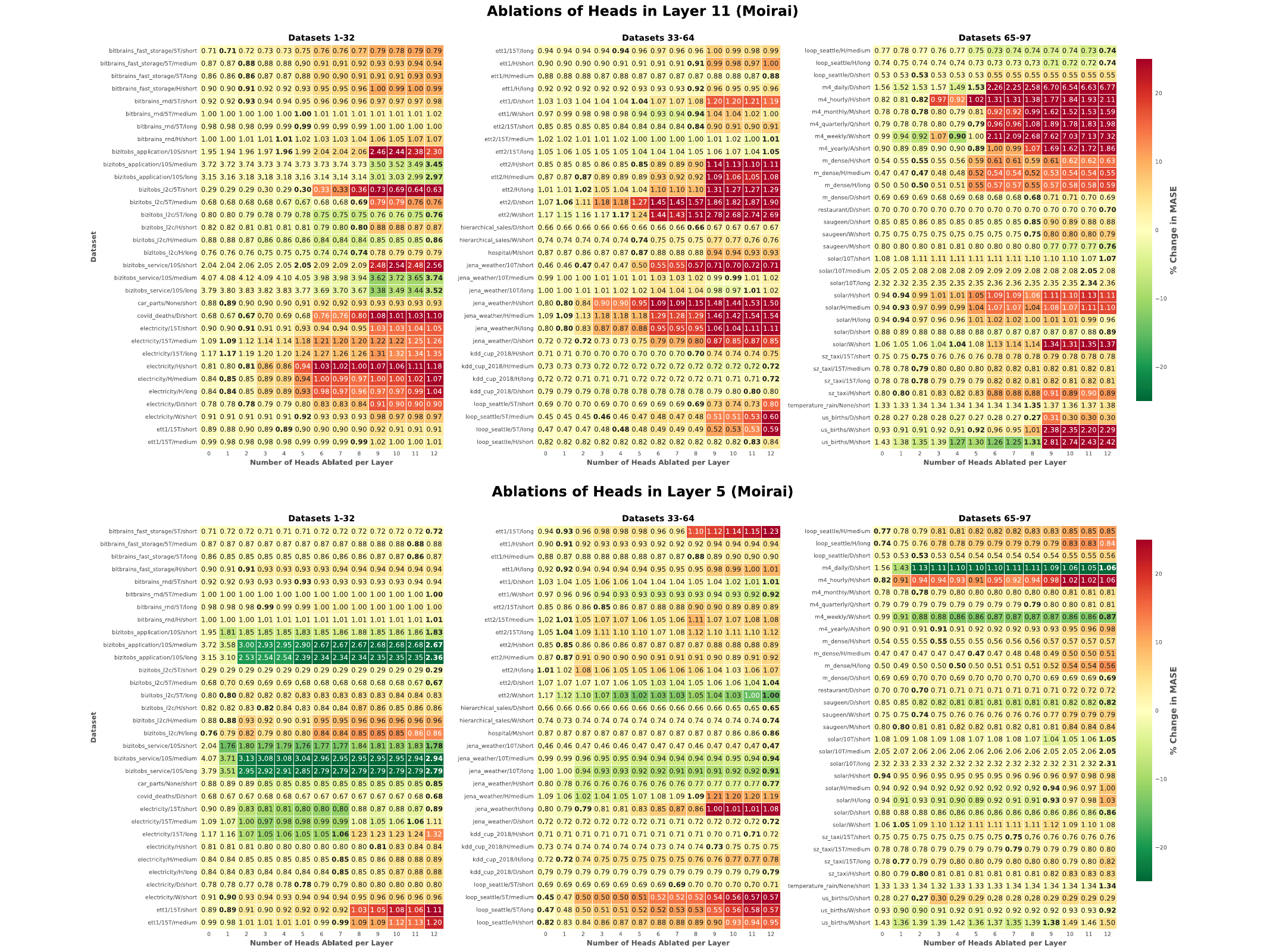}
    \caption{Per-dataset performance of \textit{Moirai 1.1} under increasing head ablations on Layer 11 \textbf{(Top)} versus Layer 5 \textbf{(Bottom)}.}
    \label{fig:dataset_view_moirai}.
\end{figure} 

\begin{table*}[htbp]
\centering
\scalebox{0.86}{%
\begin{tabular}{lccccccc}
\multicolumn{8}{c}{\textbf{Evaluation of Ablated Models on GIFT-Eval test set}} \\
\toprule
Model \footnotesize{(ZA $\textcolor{WildStrawberry}{\% \text{Heads}} + \textcolor{Periwinkle}{N_{\text{MLP}}}$)}
& \footnotesize{$\% \Delta$ MASE $(\downarrow)$}
& \footnotesize{$\% \Delta$ NRMSE $(\downarrow)$}
& \footnotesize{$\% \Delta$ sMAPE $(\downarrow)$}
& \footnotesize{$\% \Delta$ CRPS $(\downarrow)$}
& \footnotesize{$\% \Delta$ MSIS $(\downarrow)$}
& \footnotesize{$N_{\text{Resilient}}$}
& \footnotesize{$N_{\text{Improved}}$} \\
\midrule
\textbf{Chronos 2 \footnotesize{(\textcolor{WildStrawberry}{24\%} + \textcolor{Periwinkle}{2 MLP})}} &
\textbf{4.71\%} & 2.29\% & 3.44\% & \textbf{3.92\%} & 9.47\% &
$36$ & $24$ \\
\midrule
\textbf{TimesFM 2.5 \footnotesize{(\textcolor{WildStrawberry}{28\%} + \textcolor{Periwinkle}{2 MLP})}} &
\textbf{6.12\%} & 2.46\% & 4.72\% & \textbf{5.86\%} & 6.59\% &
$33$ & $12$ \\
\midrule
\textbf{Toto \footnotesize{(\textcolor{WildStrawberry}{29\%})}} &
\textbf{3.54\%} & -38.00\% & 2.73\% & \textbf{3.60\%} & -24.64\% &
$41$ & $30$ \\
Toto \footnotesize{(\textcolor{WildStrawberry}{29\%} + \textcolor{Periwinkle}{1 MLP})} &
8.54\% & -28.24\% & 5.77\% & 8.73\% & -15.27\% &
$33$ & $13$ \\
Toto \footnotesize{(\textcolor{WildStrawberry}{43\%})} &
10.10\% & -8.19\% & 6.36\% & 9.97\% & 8.24\% &
$26$ & $14$ \\
\midrule
\textbf{Chronos Bolt \footnotesize{(\textcolor{WildStrawberry}{28\%} + \textcolor{Periwinkle}{6 MLP})}} &
\textbf{4.24\%} & -0.77\% & 2.46\% & \textbf{0.19\%} & -15.89\% &
$32$ & $27$ \\
Chronos Bolt \footnotesize{(\textcolor{WildStrawberry}{42\%} + \textcolor{Periwinkle}{6 MLP})} &
10.71\% & 1.81\% & 6.18\% & 6.42\% & -9.75\% &
$29$ & $11$ \\
\bottomrule
\end{tabular}
}
\vspace{6pt}
\caption{We evaluate leading TSFMs under heavy ablations of heads found via our intrinsic head-pruning strategy (Section \ref{section:srank_ablations}), in addition to some MLPs. We report the percent change (lower is better) in MASE geometric mean across the 97 splits (i.e. specification of dataset, frequency, term) of the GIFT-Eval test set. Also shown: the number of resilient datasets ($N_{\text{Resilient}}$) on which the model's performance degraded (i.e. MASE increased) by less than $5\%$ over the original model, and the number of improved datasets ($N_{\text{Improved}}$) whose performance actually improved (in MASE) under the ablations. We highlight \textcolor{WildStrawberry}{percent of total heads ablated} and \textcolor{Periwinkle}{number of MLPs ablated}. We \textbf{bold} the ablations for which we simply take $\approx 1/3$ of all layers (identified to be most ablateable in Appendix \ref{section:more_ablation_results}) to \texttt{heads@1pp}. For these ablations, we specifically bold the MASE and CRPS metrics because they are reported by GIFT-Eval.}
\label{tab:ablations_by_the_numbers}
\end{table*}

\subsection{Additional Results for Chronos 2} \label{subsection:additional_results_chronos2}

\begin{table*}[htbp]
\centering
\caption{Percent degradation of \textit{Chronos-2} on GIFT-Eval, under ablations of entire layers (Heads and MLP). We present the MASE and CRPS, as reported by the GIFT-Eval leaderboard, in addition to several other metrics.}
\label{tab:layer_metric_degradation_chronos2}
\footnotesize
\begin{tabular}{lccccc}
\toprule
\textbf{Layer} & \textbf{MASE} & \textbf{sMAPE} & \textbf{MSIS} & \textbf{CRPS} \\
\midrule
Layer 0  & 6.39\% & 5.10\% & 10.82\% & 4.68\% \\
Layer 1  & 1.02\% & 2.36\% & 6.92\%  & 0.53\% \\
Layer 2  & 0.31\% & 0.14\% & 2.54\% & 0.15\% \\
Layer 3  & 0.85\% & 0.91\% & 1.71\% & 0.66\% \\
\textbf{Layer 4}  & \textbf{0.00\%} & 0.62\% & 0.91\% & \textbf{-0.34\%} \\
Layer 5  & 0.16\% & 0.55\% & -0.18\% & -0.02\% \\
Layer 6  & 1.89\% & 1.14\% & -0.07\% & 1.37\% \\
Layer 7  & 0.96\% & 1.62\% & 2.29\% & 0.61\% \\
Layer 8  & 1.61\% & 2.01\% & 4.49\% & 1.14\% \\
Layer 9  & 1.05\% & 1.55\% & 3.78\% & 0.74\% \\
Layer 10 & 3.26\% & 3.53\% & 2.68\% & 2.44\% \\
Layer 11 & 20.10\% & 14.59\% & 27.99\% & 19.06\% \\
\bottomrule
\end{tabular}
\end{table*}

\begin{figure}[H]
    \centering
    \includegraphics[width=1.0\linewidth]{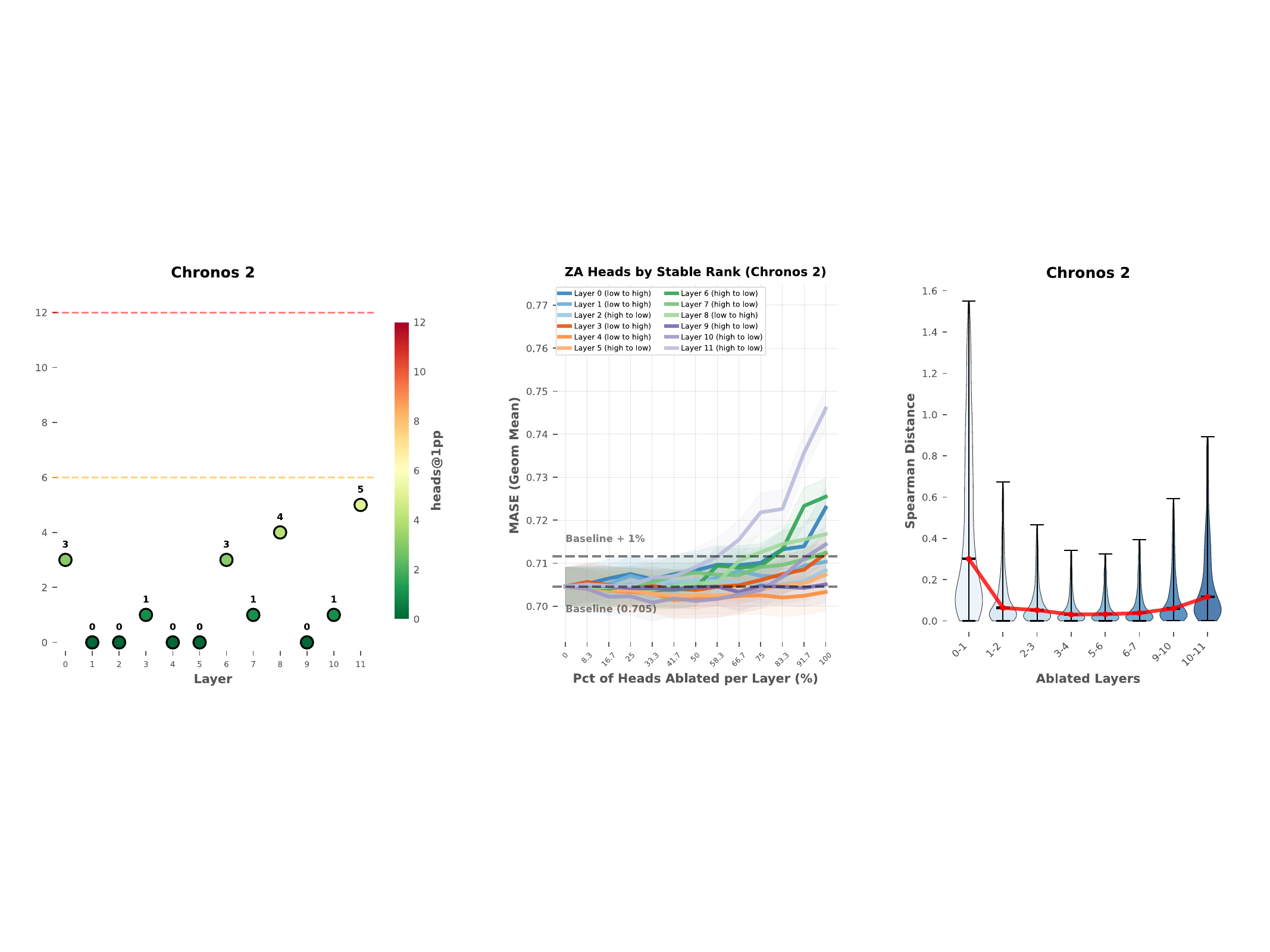}
    \caption{For \textit{Chronos 2}, we present: \textbf{Left}: \texttt{heads@1pp} per layer; \textbf{Middle}: MASE after zero-ablating heads by order of stable rank of the query-key projection matrix product; and \textbf{Right}: the Spearman distance between the original predictions and the predictions under ablations of entire layers.}
    \label{fig:extra_chronos2_results}.
\end{figure} 

%% file: sections/appendix/appendix_spectral_sharpness.tex
\section{Proofs}
\label{app:proof}

We now formally restate and prove Proposition \ref{prop:concentration} which establishes that the singular value weighted query norm in the top-r subspace (and tail error) drives sharpness in the attention matrix of cross-attention heads.
\begin{proposition}
\label{prop:concentration_formal}
Let $\bM = \bW_q \bW_k^\top/\sqrt{d_\text{head}}$ be a cross attention head with SVD $\bM = \bU \bSigma \bV^\top$. For $0 < r < d_\text{head}$, let $\bM_r \coloneqq \bU_r\bSigma_r\bV^\top_r$ be the rank-$r$ truncation of the SVD and assume the spectral tail satisfies $\sum_{i>r}\sigma_i^2\leq \rho$ where $\sigma_i = \bSigma_{ii}$. 

Let $\{\bk_j\}_{j=1}^C$ be the encoder context keys, $\{\bq_i\}_{i=1}^H$ the decoder queries, and define $M_q \coloneqq \max_{i \in [H]} ||\bq_i||_2$ and $M_k \coloneqq \max_{j \in [C]} ||
\bk_j||_2$. Define the top-r key and query features as $\tilde\bk_j^r \coloneqq \bV_r^\top \bk_j$ and $\tilde{\bq}_i^r \coloneqq \bU^\top_r \bq_i$, respectively.

Moreover, define $\ell_{ij} \coloneqq \langle\bq_i, \bM\bk_j\rangle$ and $\ell_{ij}^r \coloneqq \langle\bSigma_r\tilde\bq_i^r, \tilde\bk_j^r\rangle$ as the score and truncated score, respectively. Assume for any $i \in [H]$ there exists a $j^*(i) = \arg\max_{j\in[C]}\ell_{ij}^r$, margin $\gamma_i^r > 2\sqrt\rho M_q M_k > 0$, and constants $a_i,d_i > 0$ such that,
\begin{enumerate}[label=(\roman*)]
  \item (Score margin) $\ell_{ij^*(i)}^r\geq \ell_{ij}^r + \gamma_i^r \quad \forall j \ne j^*(i)$
  \item (Key separation) $||\tilde{\bk}_{j^*(i)}^r - \tilde{\bk}_{j^\dagger}^r||_2 > d_i$
  \item (Alignment separation) If $\theta_i$ is the angle between $\bSigma_r\tilde\bq_i^r$ and $\tilde{\bk}_{j^*(i)}^r - \tilde{\bk}_{j^\dagger}^r$, then $\cos\theta_i > a_i$
\end{enumerate}
Where $j^\dagger \in \arg\max_{j\ne j^*(i)} \ell_{ij}^r $ is the key index of the second highest score. Then the SM attention matrix $\bA_{ij}$ with inverse temperature $\beta > 0$, concentrates around the selector matrix $\delta_{i,j^*(i)}$ with rate:
\[
1-\bA_{i, j^*(i)} \leq (C-1)\exp(-\beta (a_i d_i||\bSigma_r\tilde{\bq}^r_i||_2 - 2\sqrt\rho M_q M_k))
\]
Where $a_i > 0$ and $d_i > 0$.
\end{proposition}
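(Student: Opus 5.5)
The plan is to write $\bA_{ij} = \exp(\beta \ell_{ij})/\sum_{j'}\exp(\beta\ell_{ij'})$. This gives
\[
1-\bA_{i,j^*} = \frac{\sum_{j\ne j^*}e^{\beta\ell_{ij}}}{\sum_{j'} e^{\beta\ell_{ij'}}} \le \sum_{j\neq j^*} \exp\bigl(-\beta(\ell_{ij^*}-\ell_{ij})\bigr),
\]
where we drop all terms of the denominator except $e^{\beta\ell_{ij^*}}$ and write $j^*=j^*(i)$. It then suffices to show that, for every $j\ne j^*$, the full score gap satisfies
\[
\ell_{ij^*}-\ell_{ij}\ge a_i d_i\|\bSigma_r\tilde\bq_i^r\|_2 - 2\sqrt\rho M_qM_k .
\]
Summing $C-1$ identical bounds then gives the claimed rate.

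First I would split the score into its head and tail parts. Since $\bM = \bM_r + (\bM-\bM_r)$ and $\langle \bq_i,\bM_r\bk_j\rangle = \langle \bSigma_r\bU_r^\top\bq_i, \bV_r^\top\bk_j\rangle = \ell^r_{ij}$, we have $\ell_{ij}=\ell_{ij}^r + \epsilon_{ij}$ with $\epsilon_{ij} = \langle\bq_i,(\bM-\bM_r)\bk_j\rangle$. By Cauchy--Schwarz, $|\epsilon_{ij}|\le \|\bM-\bM_r\|_2 M_qM_k$. Moreover $\|\bM-\bM_r\|_2=\sigma_{r+1}\le(\sum_{i>r}\sigma_i^2)^{1/2}\le\sqrt\rho$; I would read the tail assumption this way, otherwise one uses $\|\cdot\|_F\le\sqrt\rho$ directly. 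Hence
\[
\ell_{ij^*}-\ell_{ij}\ge \ell^r_{ij^*}-\ell^r_{ij} - 2\sqrt\rho M_qM_k .
\]

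Next I would lower-bound the truncated gap. By the definition of $j^\dagger$ as the second-highest truncated score, $\ell^r_{ij^*}-\ell^r_{ij}\ge \ell^r_{ij^*}-\ell^r_{ij^\dagger}$ for all $j\ne j^*$. Writing
\[
\ell^r_{ij^*}-\ell^r_{ij^\dagger}=\langle\bSigma_r\tilde\bq_i^r,\tilde\bk^r_{j^*}-\tilde\bk^r_{j^\dagger}\rangle = \|\bSigma_r\tilde\bq_i^r\|_2\,\|\tilde\bk^r_{j^*}-\tilde\bk^r_{j^\dagger}\|_2\cos\theta_i ,
\]
assumptions (ii) and (iii) give a lower bound of $a_id_i\|\bSigma_r\tilde\bq_i^r\|_2$. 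Here $\cos\theta_i>a_i>0$ ensures the product of inequalities is legitimate. Combining this with the tail bound yields the per-$j$ gap, and therefore the stated inequality. The margin hypothesis (i) together with $\gamma_i^r>2\sqrt\rho M_qM_k$ guarantees that $j^*$ remains the argmax of the full score, so the bound is nontrivial. It is otherwise not needed for the inequality itself.

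The main obstacle is the tail step. I need to interpret the hypothesis $\sum_{i>r}\sigma_i^2\le\rho$ so that it controls $\|\bM-\bM_r\|_2$ by $\sqrt\rho$, and to check that the queries and keys live in the spaces on which $\bU,\bV$ act, so that $\langle\bq_i,\bM_r\bk_j\rangle$ really equals $\ell^r_{ij}$. Everything else is the standard softmax-margin computation.
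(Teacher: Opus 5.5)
Your proposal is correct and follows the paper's proof. Both arguments use the same three steps:
\begin{itemize}
\item Drop all but the $j^*(i)$ term in the softmax denominator.
\item Control the tail via $\|\bM-\bM_r\|_2\le\|\bM-\bM_r\|_F\le\sqrt\rho$ and Cauchy--Schwarz, which costs at most $2\sqrt\rho M_qM_k$ in the score gap.
\item Lower-bound the truncated top-2 gap $\langle \bSigma_r\tilde\bq_i^r,\tilde\bk^r_{j^*(i)}-\tilde\bk^r_{j^\dagger}\rangle$ by $a_id_i\|\bSigma_r\tilde\bq_i^r\|_2$ using (ii)--(iii).
\end{itemize}
The only difference is cosmetic: the paper works with the full top-2 gap $\gamma_i=\ell_{ij^*(i)}-\max_{j\ne j^*(i)}\ell_{ij}$, whereas you bound each $j\ne j^*(i)$ separately and sum, which is equivalent. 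Your observation that the margin hypothesis only guarantees $\gamma_i>0$ and is not needed for the inequality itself is also accurate.
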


\begin{proof}
    The softmax on row $i$ concentrates on column $j^*(i)$ with the following general bound:
    \begin{align*}
    1-\bA_{i,j^*(i)} &= \sum_{j \ne j^*(i)}\exp(\beta\ell_{ij}) / \sum_k \exp(\beta\ell_{ik}) \\
    &= \sum_{j \ne j^*(i)}\exp(\beta(\ell_{ij} - \ell_{ij^*(i)})) / (1+\sum_{k\ne j^*(i)} \exp(\beta(\ell_{ik}-\ell_{ij^*(i)}))) \\
    &\leq \sum_{j \ne j^*(i)}\exp(\beta(\ell_{ij} - \ell_{ij^*(i)})) \\ 
    &\leq (C-1)\exp(-\beta(\ell_{ij^*(i)}-\max_{j\ne j^*(i)}\ell_{ij}))
    \end{align*}
    Let $\gamma_i \coloneqq \ell_{ij^*(i)}-\max_{j\ne j^*(i)}\ell_{ij}$ be the non-truncated top-2 score gap and $\gamma_i^r \coloneqq \ell_{ij^*(i)}^r-\ell_{ij^\dagger}^r$ which is equivalent to the margin condition. To control the error between the score and truncated score, let $\bE \coloneqq \bM - \bM_r$ and note that $||\bE||_2 \leq ||\bE||_F \leq \sqrt{\rho}$. This implies that $|\ell_{ij} - \ell_{ij}^r | \leq M_qM_k\sqrt{\rho}$ and consequently, $\gamma_i \geq \gamma_i^r - 2\sqrt\rho M_qM_k > 0$. Thus,
    \begin{align*}
        1-\bA_{i,j^*(i)} &\leq (C-1)\exp(-\beta\gamma_i) \leq (C-1)\exp(-\beta(\gamma_i^r - 2\sqrt\rho M_qM_k))
    \end{align*}
    And since,
    \[
        \gamma_i^r = \ell_{ij^*(i)}^r - \ell_{ij^\dagger}^r 
        = \langle \bSigma_r \tilde{\bq}_i^r, \tilde{\bk}_{j^*(i)}^r - \tilde{\bk}_{j^\dagger}^r\rangle 
        = ||\bSigma_r \tilde{\bq}_i^r||_2 ||\tilde{\bk}_{j^*(i)}^r - \tilde{\bk}_{j^\dagger}^r||_2 \cos\theta
        \geq ||\bSigma_r \tilde{\bq}_i^r||_2 d_i a_i
    \]
    The desired bound follows.
    
\end{proof}

As discussed in Section \ref{subsection:spectral}, if parroting occurs and induces a motif in the decoder queries, produced by autoregressive rollout, which aligns with a motif in the context, then the cross attention matrix unsurprisingly exhibits a local shift structure. The above result relates the truncated spectrum of the cross attention head to the \textit{sharpness} of such a shift matrix. 

We note that the above result is technically limited to \textit{Chronos}-style encoder-decoder models that employ causal masking. Nonetheless, we find that ablating attention heads according to stable rank is a generally effective compression strategy even for decoder-only and encoder-only TSFMs.

%% file: sections/appendix/appendix_extended_kernel_discussion.tex
\section{Extended Discussion on Sharp Heads} \label{section:extended_discussion_attention_kernel_regression}

\subsection{Empirical Observations in \textit{Chronos}}
In Fig. \ref{fig:head_sharpness} we present our empirical observation of sharp and diffusion heads present in the \textit{Chronos} decoder. 


While Fig. \ref{fig:head_sharpness} demonstrated the preservation of context parroting by \textit{Chronos} under ablations of the high entropy ``diffuse" heads, Fig. \ref{fig:chronos_failure_ablate_sharp_heads} shows how the structure of the predictions falls apart after ablating even a single low entropy ``sharp" head. Note that we present several short-term forecasts, but roll out the predictions to 512 timepoints; we do this for the sake of visibility of the generated median forecasts, which are marked in \textcolor{RoyalBlue}{blue}, with the 20 probabilistic samples from \textit{Chronos} marked using lighter colored lines. 

\textbf{Critical Importance of the Sharp (Low Entropy) Heads}
\begin{figure}[htbp]
    \centering
    \includegraphics[width=1.0\linewidth]{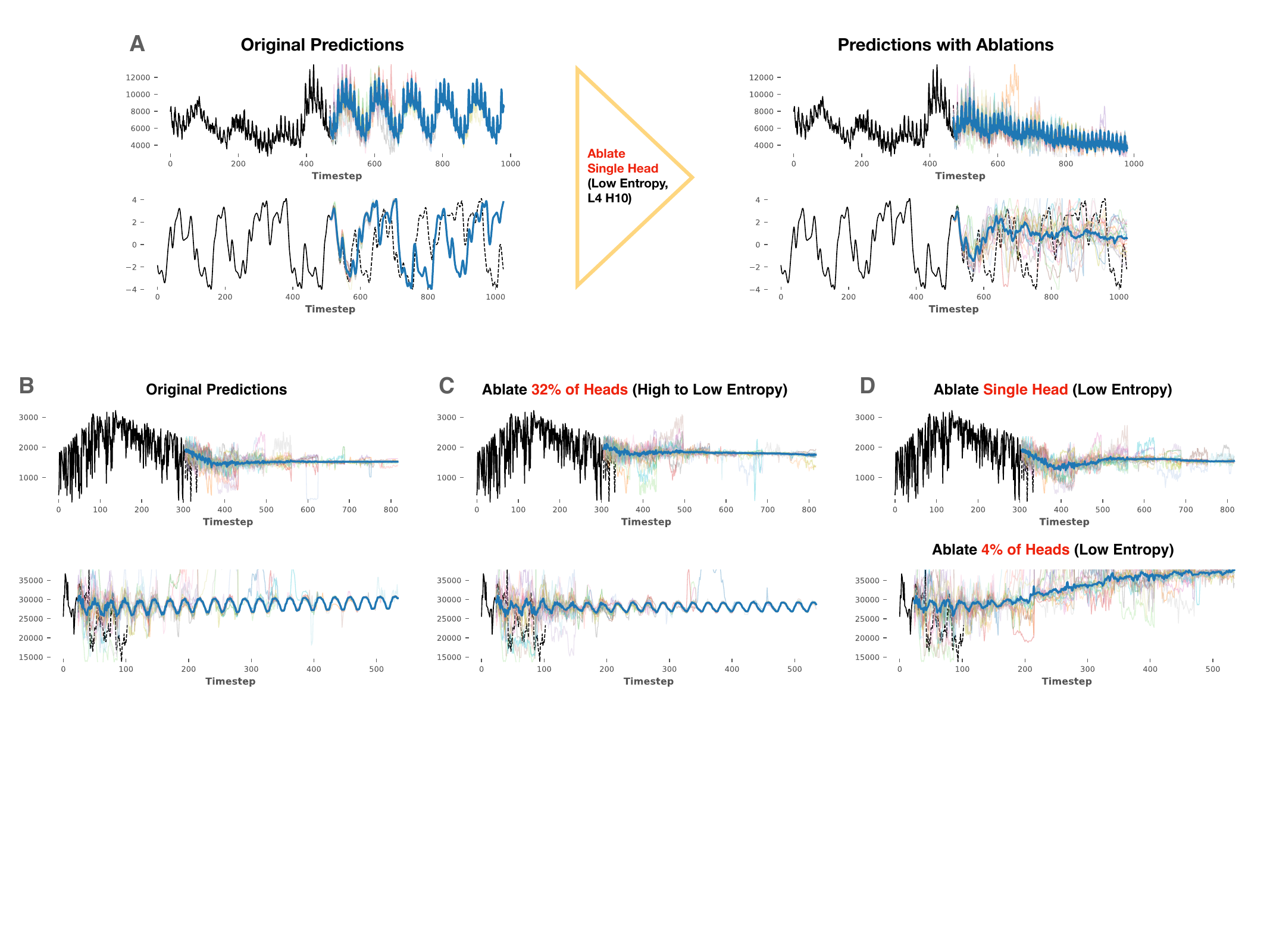}
    \caption{\textbf{Sharp Heads are crucial for context parroting and seasonality}. \textbf{(A)} shows how the structure of the predictions falls apart after ablating even a single low entropy (sharp) head. The forecast tasks are \texttt{m4\_monthly} and the \texttt{Thomas} system from \texttt{GIFT-Eval} and \texttt{dysts} respectively. \textbf{(B)} presents example forecasts on \texttt{solar/D} (Top Row) and \texttt{ett2/W} (bottom row) from \texttt{GIFT-Eval}, for which \textit{Chronos} exhibits degenerate behavior, notably mean regression (Top Row). \textbf{(C)} shows how even these degenerate forecasts are preserved by the ablations of the highest entropy (diffuse) heads, in the same manner as shown in Fig. \ref{fig:head_sharpness}. Meanwhile, \textbf{(D)} shows how ablating a single sharp head preserves the mean regression (Top Row of D) and that ablating only a few (six) sharp heads, i.e. $4\%$ of the total, causes the predictions to lose their structure on rollout (Bottom Row of D).}
    \label{fig:chronos_failure_ablate_sharp_heads}
\end{figure}

\subsection{Attention Rollout Computation} \label{subsection:attention_rollout_computation}
To compute the attention rollouts for Chronos as seen in Figures \ref{fig:head_sharpness}A and \ref{fig:head_sharpness}B, we stored the attention scores for each cross attention head as we forecast a time series for $T$ timesteps given a context of length $C$. Then, for the $t$-th prediction step, we look at how much the token at position $t-1$ in the decoder (for the first prediction step this will be the DECODER\_START token) attend to each token in the context, which will be a vector $a^{t} \in \mathbb{R}^{C}$ where each element $a^t_{i}$ (the amount token $t-1$ in the decoder attends to token $i$ in the encoder) is such that $a^t_{i} \in [0,1]$. Then, as we roll out for a total of $T$ timesteps, we concatenate
$$a^{1:T} = [a^1 \| a^2 \| \cdots \|a^T],$$
where $a^{1:T} \in \mathbb{R}^{C \times T}$ is our attention rollout.

\subsection{Empirical Observations in Patched Models}
\label{app:patchedmodels}

\begin{figure}[H]
    \centering
    \includegraphics[width=0.65\linewidth]{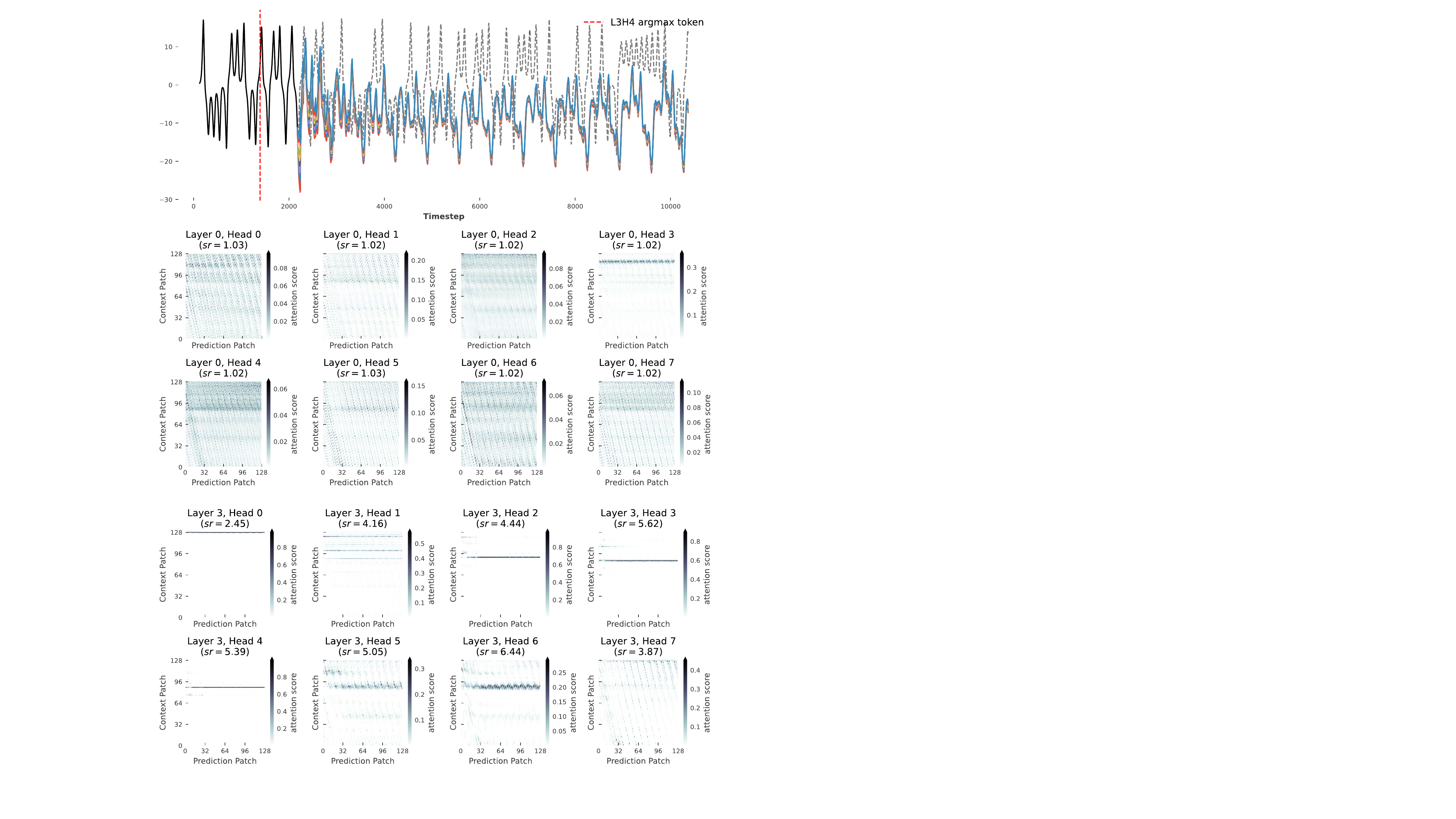}
    \caption{\textbf{Chronos Bolt context parroting on a context window for the Lorenz-63 system}. (A) Length 2048 Lorenz context (solid black), and length 8192 groundtruth (dashed grey) and forecast (colors represent different quantile forecasts). Also shown is the timepoint corresponding to the argmax context patch in the attention matrix in layer 3, head 4 (dashed red). (B) The cross attention matrices across heads for layer 0 with stable ranks annotated for each head. (C) Cross attention matrices across heads for layer 3. See Appendix \ref{app:patchedmodels} for details regarding \textit{Chronos Bolt} and the attention rollout.}
    \label{fig:bolt_parrot}
\end{figure}

We visualize the cross attention matrices over encoder and decoder patches in the \textit{small} checkpoint (48M parameters) of Chronos Bolt. We first note a few important details about the Chronos Bolt (small) architecture:
\begin{enumerate}
    \item The max context length is 2048 which is patched (tokenized) into 128 length 16 patches.
    \item A single register token gets added to the end of the context window for the encoder forward pass.
    \item The decoder output is a single length 64 forecast patch.
\end{enumerate}
Given a length 2048 context window from a randomly sampled Lorenz trajectory, we autoregressively generate a length 8192 quantile forecast and take the median as the point forecast. With patching this configuration allows us to get a $128 \times (128+1)$ cross-attention matrix for the patches. We visualize these matrices in Fig. \ref{fig:bolt_parrot}. 

Chronos Bolt (small) struggles to faithfully parrot motifs from the context, but confidently (small quantile spread) parrots it's own output motif. There also appears to be a concentration of mass around the ~85-th patch in the context ($\approx$ timepoint 1360) in several of the attention heads. Marking this timepoint in the context (Fig. \ref{fig:bolt_parrot}A) suggests that it indicates the start of the context motif Chronos Bolt is \textit{attempting} to parrot. These heads act as selector heads for a fixed point in the context.

Remarkably, compared to \textit{Chronos}, Chronos Bolt exhibits much more of these sharp selector heads, while the shift-like parroting heads are comparatively more diffuse. The selector heads either concentrate mass on a single context timepoint, the register token (attention sink), or a combination of these cases. Interestingly, in layer 3, the selector heads are ablateable according to stable rank potentially due to redundancy, whereas the selector head for the register token (head 0) and the parroting head (head 7) are the least ablateable in contrast (Fig. \ref{fig:bolt_parrot}C).

%% file: sections/appendix/appendix_entropic_rank.tex
\section{Metrics for Comparing Activation Vectors} \label{section:entropic_rank_discussion}

\textbf{Multi-head attention and individual head decomposition:} Let $h \in \mathbb{R}^{T \times d_\text{model}}$ be a residual stream vector at some layer, let head $i \in [H]$ be parametrized by $W_Q^i,W_K^i \in \mathbb{R}^{d_{\text{model}}\times d_k}$,
$W_V^i \in \mathbb{R}^{d_{\text{model}}\times d_v}$, and define
$$
Q^i = hW_Q^i \in \mathbb{R}^{T\times d_k},\quad
K^i = hW_K^i \in \mathbb{R}^{T\times d_k},\quad
V^i = hW_V^i \in \mathbb{R}^{T\times d_v}.
$$

Then, the row-wise attention score matrix for head $i$ is expressed as
$$
A^i = \mathrm{Softmax}\!\left(\frac{Q^i (K^i)^\top}{\sqrt{d_k}}\right)\in \mathbb{R}^{T\times T}.
$$
Finally, multi-head attention (MHA), $\tilde{h}$, is computed from $A^{[H]}$ as
$$
O^i = A^i V^i \in \mathbb{R}^{T\times d_v}
$$
$$
\tilde{h} = \Big[O^1 \;\|\; O^2 \;\|\; \cdots \;\|\; O^n\Big]\, W_O
\;=\; \sum_{i=1}^n O^i W_O^i \in \mathbb{R}^{T\times d_{\text{model}}},
$$
where $W_O \in \mathbb{R}^{(n d_v)\times d_{\text{model}}}$ is the output projection parameter shared among all heads in the layer, with block rows
$W_O^i \in \mathbb{R}^{d_v\times d_{\text{model}}}$. To isolate the individual contribution from the head $i$ to $\tilde{h}$, we simply do
$$\tilde{h}_i := O^iW^i_O \in \mathbb{R}^{T \times d_\text{model}}$$

\textbf{Per layer entropic rank of head outputs:}

For a layer with $H$ heads. For a given sample $s$ and timestep $t$, let
$$
x_i \;:=\; \tilde{h}_i^{(s,t)} \in \mathbb{R}^{d_{\text{model}}},\qquad i\in[H],
$$
be the head output vectors at that $(s,t)$ location.

\emph{Step 1: normalize head vectors.}
$$
u_i \;=\; \frac{x_i}{\|x_i\|_2} \in \mathbb{R}^{d_{\text{model}}}.
$$

\emph{Step 2: form the Gram matrix of cosine similarities.}
Let $U=[u_1^\top;\dots;u_H^\top]\in\mathbb{R}^{H\times d_{\text{model}}}$. Define
$$
G \;=\; UU^\top \in \mathbb{R}^{H\times H},\qquad
G_{ij} = \langle u_i,u_j\rangle.
$$
This matrix is symmetric positive semidefinite, with eigenvalues $\lambda_1,\dots,\lambda_H\ge 0$.

\emph{Step 3: convert spectrum into a probability distribution.} This can be easily done by using the singular values of $G$:
$$
w_k = \sqrt{\lambda_k},\qquad
p_k \;=\; \frac{w_k}{\sum_{j=1}^n w_j},\qquad k\in[H].
$$

\emph{Step 4: compute Shannon entropy and exponentiate.}
$$
H \;=\; -\sum_{k=1}^H p_k \log(\max(p_k,\varepsilon)),\qquad
r \;=\; \exp(H).
$$
We call $r$ the \emph{entropic rank} at that $(s,t)$ location.

\textbf{Averaging across samples and time.}
For each layer, the implementation computes $H^{(s,t)}$ for all samples and timesteps, averages entropy,
\[
\bar{H} \;=\; \mathbb{E}_{s,t}\big[H^{(s,t)}\big],
\]
and finally reports the layer entropic rank as
\[
R \;=\; \exp(\bar{H}).
\]

\textbf{Interpretation and bounds.}
The purpose of the entropic rank measure is to quantify how many heads are spanning diverse directions from each other. In high dimensional vector spaces, this can be interpreted as the number of heads that are nearly orthogonal with respect to each other.

By construction, $R\in[1,H]$. If all heads outputs are mutually orthogonal, then $R=H$. Having a single head output which is highly collinear with another head output while all others are mutually orthogonal would decrease $R$ by one, meaning $R \approx H-1$. Finally, if all heads are approximately collinear with respect to each other, then $R \approx 1$ because there would only be one significant direction.

%% file: sections/appendix/appendix_additional_measurements_residual_stream.tex
\section{Additional Measurements on the Residual Stream} \label{section:more_residual_stream_measurements}

\begin{figure}[htbp]
    \centering
    \includegraphics[width=1.0\linewidth]{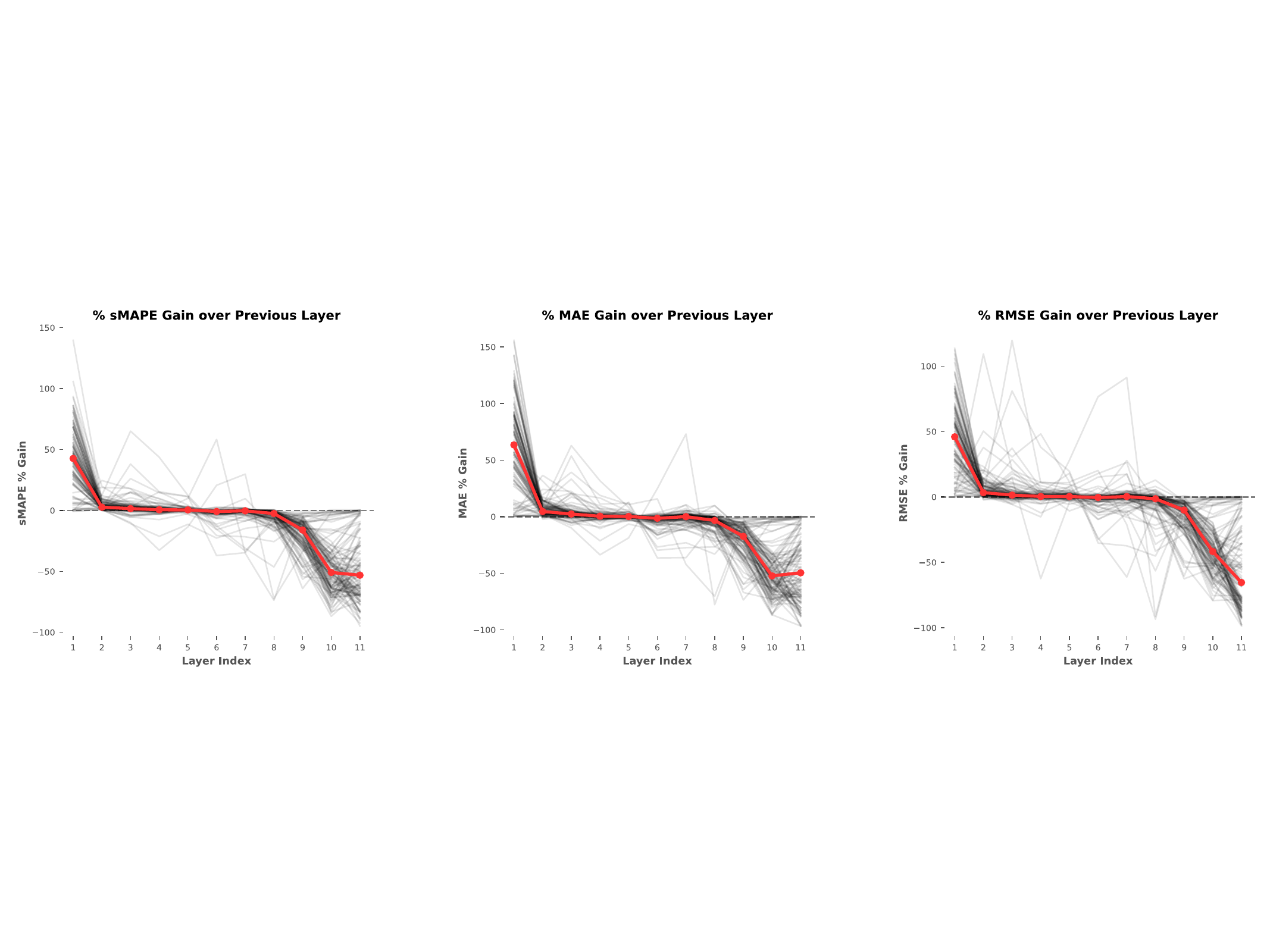}
    \caption{\textbf{Middle layers of Chronos have minimal impact on performance.} We output-transform the embedded tokens after each layer in the \textit{Chronos} decoder, in order to collect the forecasts that would result if we cut off the model after each layer. Across 117 distinct context windows from \texttt{dysts}, we present the percentage gain (lower is better) in several metrics. Each black line corresponds to a context window, and the red lines mark the medians of those. The middle layers consistently show no improvement on the metrics.}
    \label{fig:metrics_improvement_by_layer}
\end{figure}

\begin{figure}[H]
    \centering
    \includegraphics[width=1.0\linewidth]{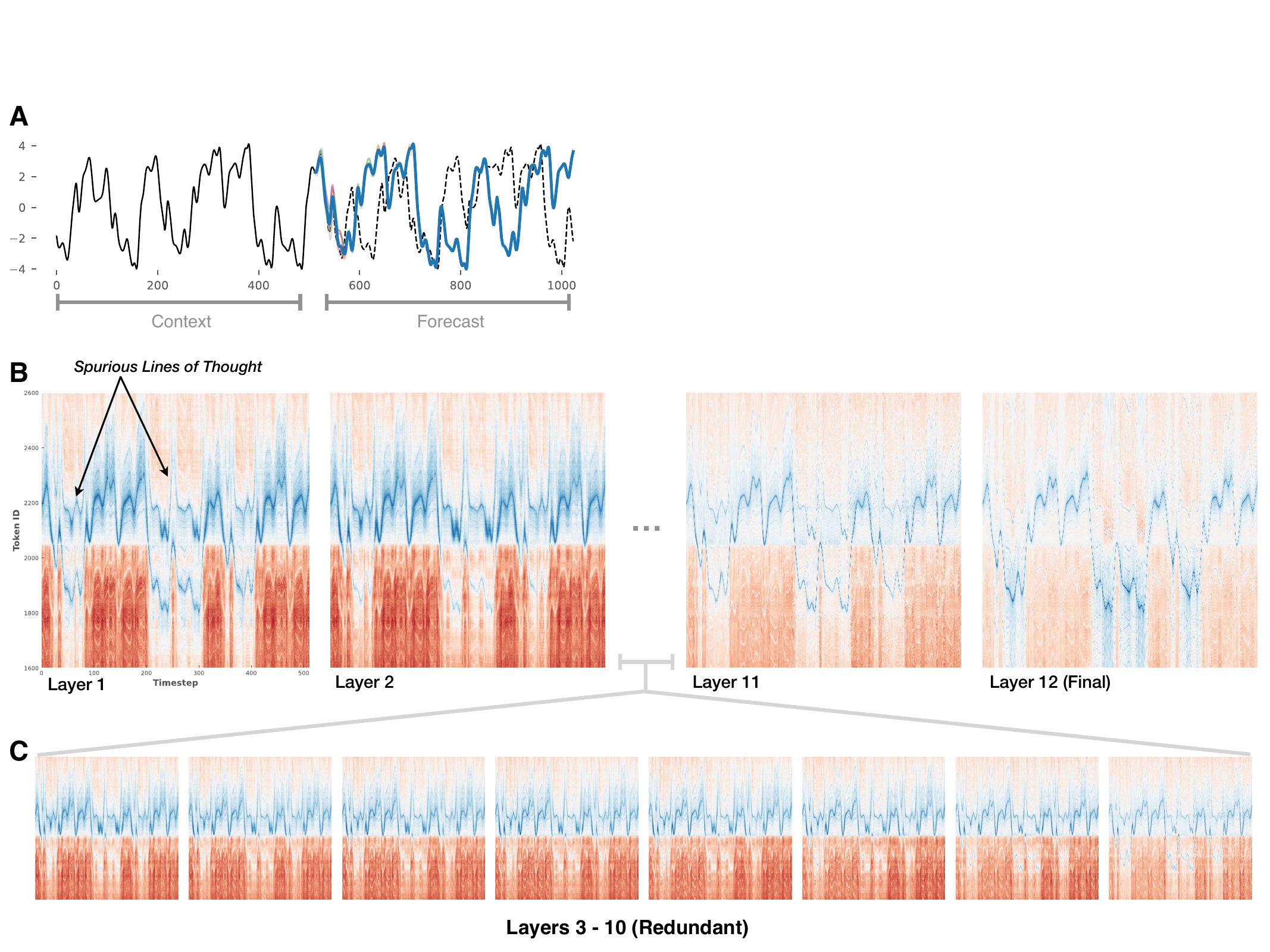}
    \caption{\textbf{Middle layers are highly redundant in their contribution to the residual stream:} (c.f. Fig. \ref{fig:residual_stream_logit_maps}) For an example forecasting task \textbf{(A)}, we investigate the logit maps produced by direct logit attribution (DLA) on the residual stream \textbf{(B)} after each layer in the \textit{Chronos} decoder. As seen in \textbf{(C)}, the middle layers qualitatively perform very similar updates.}
    \label{fig:read_stream_across_layers_Thomas}
\end{figure}

%% file: sections/appendix/appendix_induction_heads.tex
\section{Induction Heads} \label{section:induction_heads_discussion}

\begin{figure}[htbp]
    \centering
    \includegraphics[width=0.9\linewidth]{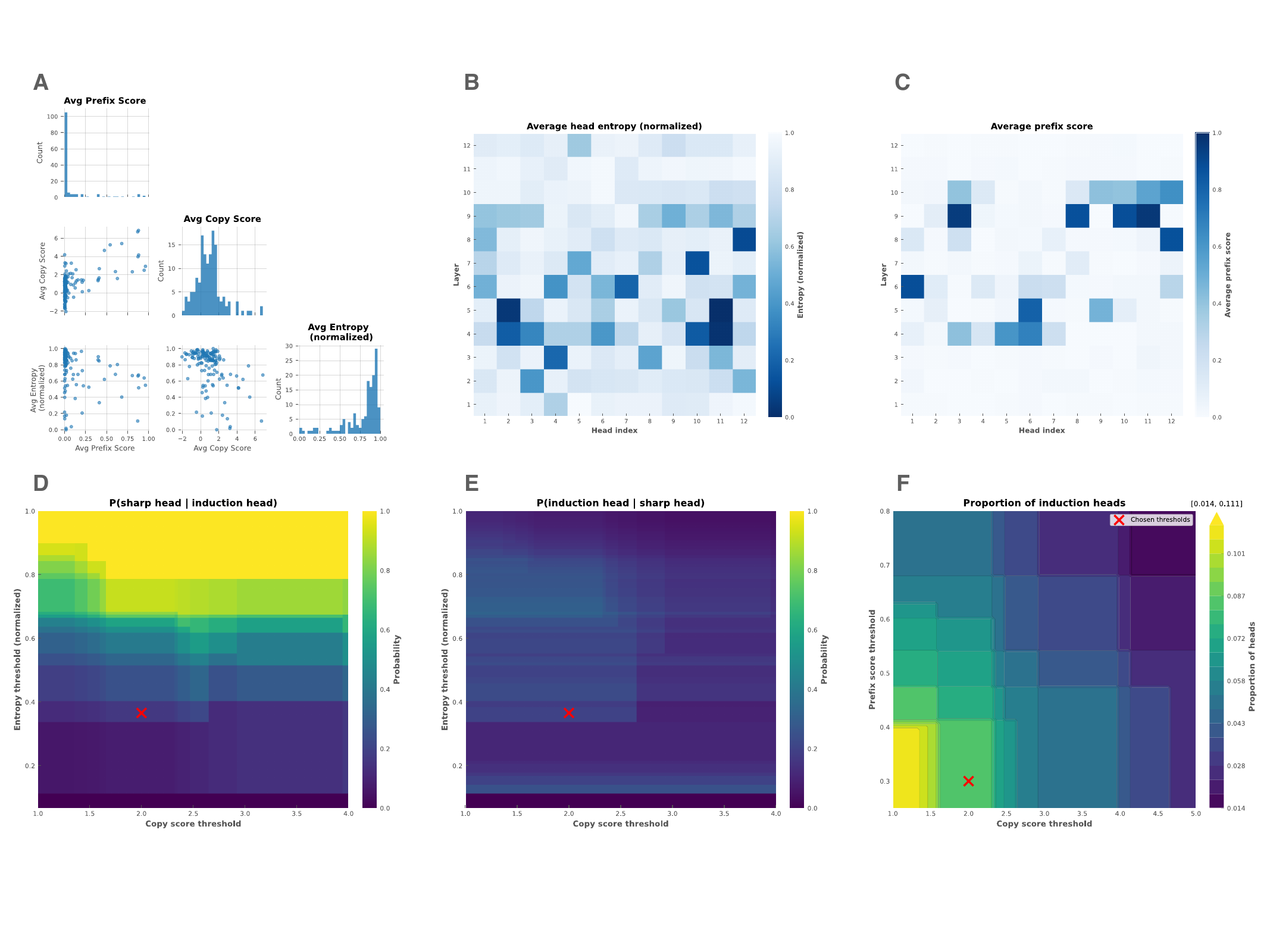}
    \caption{\textbf{Sharp heads and traditional induction heads have some overlap, but their . (A)} Shows a slight positive correlation between the average prefix matching score and the average copy scores of heads, meaning that heads which tend to attend to prefix tokens well in the RRT also slightly tend to be better at copying the correct token over in the sequence. Additionally, we also see slight negative correlations between the prefix score and the average entropy as well as the average copy score and the entropy. \textbf{(B)} Shows the average entropy of each head's attention scores, with lower entropy heads being colored darker. \textbf{(C)} Is similar to A, except we are now visualizing the prefix matching score for each head. Note the similarity between B and C. Heatmap \textbf{(D)} shows the proportion (displayed as probability) of induction heads which are sharp heads and \textbf{(E)} shows the proportion of sharp heads that are induction heads, sweeping over a range of threshold values of entropy, prefix matching score, and copy scores to determine if heads are sharp and/or induction heads.}
    \label{fig:induction_heads_combined}
\end{figure}

\begin{table}[t]
\centering
\begin{tabular}{lrrr}
\hline
 & \textbf{Sharp $S$} & \textbf{Not sharp $\neg S$} & \textbf{Total} \\
\hline
\textbf{Induction $I$}          & 0.0139 & 0.0694 & 0.0833 \\
\textbf{Not induction $\neg I$} & 0.0556 & 0.8611 & 0.9167 \\
\hline
\textbf{Total}                  & 0.0694 & 0.9306 & 1.0000 \\
\hline
\end{tabular}

\vspace{0.6em}

\begin{tabular}{lr}
\hline
\textbf{Overlap ratios} & \textbf{Value} \\
\hline
$P(S \mid I)$ & 0.1667 \\
$P(I \mid S)$ & 0.2000 \\
\hline
\end{tabular}

\caption{Overlap between induction heads ($I$) and sharp heads ($S$), with conditional overlap ratios.}
\label{tab:induction_vs_sharp_heads}
\end{table}

\textbf{Induction Heads:} We defined induction heads as heads which perform prefix matching and copying on a sequence of repeated random tokens, and we use the repeated random tokens (RRT) test to evaluate both criteria \cite{olsson2022context}. By comparing induction heads and sharp heads in the Chronos model, we find that 16.7\% of induction heads are sharp heads and also about 20.0\% of sharp heads are also induction heads. Though, by relaxing the constraints we set for a head to be classified as an induction head or a sharp head, we see that 50-60\% induction heads could potentially be sharp heads, but the proportion of sharp heads which are also induction heads continues to stay at around 20\%. Thus, we feel confident that despite a significant amount of induction heads being able to act like sharp heads, there are still many sharp heads which are distinctly not induction heads in the traditional sense as defined by the RRT test.

Because Chronos is the only model in our study which uses a uniform binning tokenizer and isn't patch based, in which case the RRT test translates directly from the language domain, we only conducted the RRT test on Chronos.

The repeated random tokens test we used a sequence of 32 randomly sampled tokens from token ids 1911-2187 (corresponding to frequently used tokens that are within one standard deviation of the context according to the internals of the Chronos tokenizer \cite{ansari2024chronos}) and repeated them 4 times, similar to the procedure followed in \cite{olsson2022context}. Then, due to the encoder-decoder nature of the model, we appended an EOS token to the repeated sequences and input them to the encoder, and finally we input a DECODER\_START token and the first token of the sequence into the decoder to initiate the auto-regressive rollout.

Then, we computed the prefix matching score of a cross-attention head as the average amount it attended to the most recent previous instance (in the encoder's context) of the current token, the copy score as the standard deviations above the mean which the model attributes the correct token to with respect to all tokens in the vocabulary, and the head entropy as the average entropy across time for how much each current token in the rollout attends to each token in the context. Essentially, we computed the entropy for each vertical slice in the plots shown in Figures \ref{fig:head_sharpness}A and \ref{fig:head_sharpness}B. We then normalized the average entropy values of all heads to range between 0 and 1. We found that heads with high prefix scores loosely tend to have higher copy scores as well as a lower average entropy in Figure \ref{fig:induction_heads_combined}A. Furthermore, Figures \ref{fig:induction_heads_combined}B and \ref{fig:induction_heads_combined}C visually show that some heads with a low average entropy tend to also have high prefix matching scores.

We see the distributions and pairwise relationships between the average prefix score, the copy score, and the average normalized entropy in Figure \ref{fig:induction_heads_combined}A.

We found that out of the $H=144$ heads in Chronos base 200M, 8.3\% of them are able to perform prefix matching as well as copying (i.e. they are induction heads). The thresholds used to classify heads as being capable of prefix matching is a prefix matching score of at least 0.3 (as found in \cite{crosbie-shutova-2025-induction}), and a head was classified as being able to copy if the head's logit attribution increased the logits of the correct token at least two standard deviations above the mean across all tokens. The maximum threshold entropy of a sharp head was set to be approximately 0.38 when normalized. The classification of a head being an induction head or a sharp head is sensitive to the choice of thresholds, and we explored how varying the thresholds would change the number of induction heads and sharp heads we observe in Chronos in Figures \ref{fig:induction_heads_combined}D, \ref{fig:induction_heads_combined}E, and \ref{fig:induction_heads_combined}F. For Figures, \ref{fig:induction_heads_combined}D and \ref{fig:induction_heads_combined}E, we fixed a prefix matching threshold of 0.3.

Despite sharp heads making up only 6.9\% of all heads and induction heads only making up 8.3\% of all heads, they had better-than-random overlap with 16.7\% of induction heads being sharp heads and 20.0\% of sharp heads being induction heads. So, clearly they are not fully independent of each other. Furthermore, as we observe in Figures \ref{fig:induction_heads_combined}D and \ref{fig:induction_heads_combined}E, increasing the maximum threshold for a sharp head can dramatically increase the proportion, up to around 50-60\%, of induction heads which are also sharp (while still maintaining a reasonable threshold), but we observe an approximately constant 20\% of sharp heads being induction heads regardless of how we vary the entropy and copy score thresholds. This suggests that a significant proportion of induction heads act like sharp heads in some capacity, but around 80\% of sharp heads cannot be classified as induction heads. The marginal and conditional proportions of sharp and induction heads are displayed in Table \ref{tab:induction_vs_sharp_heads}.

Still, the sharp heads which are not induction heads as defined by the RRT test are clearly able to match recurring motifs in the context and are responsible for copying them as observed in Figure \ref{fig:head_sharpness}. Thus, it is likely that TSFMs could learn to perform induction through a different paradigm than what is measured by the RRT test. 